\documentclass[11pt, a4paper,english]{article}
 \usepackage{amsmath,amssymb}
 \usepackage{bm}
 \usepackage{ascmac}
 \usepackage{amsthm}
 \usepackage{enumerate}
 \usepackage[dvips]{color}
 \usepackage{here}
\usepackage[pdftex]{graphicx}
\usepackage{mathrsfs}

\theoremstyle{definition}
\newtheorem{theorem}{Theorem}[section]

\newtheorem{lemma}{Lemma}[section]
\newtheorem{cor}{Corollary}[section]

\newtheorem{definition}{Definition}[section]

\makeatletter 
 \def\section{\@startsection {section}{1}{\z@}{3.5ex plus -1ex minus -.2ex}{2.3 ex plus .2ex}{\bf}}
 \def\@seccntformat#1{\csname the#1\endcsname.\ }
 \makeatother
 
  
  \makeatletter 
 \def\subsection{\@startsection {subsection}{1}{\z@}{3.5ex plus -1ex minus -.2ex}{2.3 ex plus .2ex}{\bf}}
 \def\@seccntformat#1{\csname the#1\endcsname.\ }
 \makeatother

\numberwithin{equation}{section} 





\newcommand{\R}{\mathbb{R}}

\newcommand{\V}{{\rm{Var}}}
\newcommand{\Cov}{{\rm{Cov}}}
\newcommand{\PR}{{\rm{P}}}
\newcommand{\q}{\quad}

\newcommand{\f}{^\forall}

\newcommand{\argmax}{\operatornamewithlimits{argmax}}

\newcommand{\bx}{{\bm{x}}}
\newcommand{\ve}{\varepsilon}

\usepackage{algorithm}
\usepackage{algcompatible}

\usepackage{slashbox}

\usepackage{geometry}
 \geometry{truedimen,left=15mm,right=15mm,top=15mm,bottom=15mm}

\title{Active learning for enumerating local minima based on Gaussian process  derivatives} 
\author{Yu Inatsu  \thanks{RIKEN Center for Advanced Intelligence Project} \and Daisuke sugita \thanks{Nagoya Institute of Technology} \and Kazuaki Toyoura \thanks{Kyoto University} \and  Ichiro Takeuchi \footnotemark[1] \footnotemark[2] \thanks{Center for Materials Research by Information Integration, National Institute for Materials Science} \thanks{E-mail:takeuchi.ichiro@nitech.ac.jp}}
\date{\today}

\begin{document}
\maketitle

\begin{abstract}
We study active learning (AL) based on Gaussian Processes (GPs) for efficiently enumerating all of the local minimum solutions of a black-box function. This problem is challenging due to the fact that local solutions are characterized by their zero gradient and positive-definite Hessian properties, but those derivatives cannot be directly observed. We propose a new AL method in which the input points are sequentially selected such that the confidence intervals of the GP derivatives are effectively updated for enumerating local minimum solutions. We theoretically analyze the proposed method and demonstrate its usefulness through numerical experiments.
\end{abstract}


\section{Introduction}
In many areas of science and technology, machine learning has been successfully used for uncovering unknown complex systems which are formulated as black-box functions. 
When the evaluation of a black-box function is expensive, it is often difficult to exhaustively investigate the function in the entire input domain.
\emph{Active learning (AL)} has been developed as a method for effectively selecting the input points at which the function evaluations are helpful for the target task.
For example, if the target task is to find the global minimum, it is reasonable to evaluate the function at the input points which are likely to be global minima (this AL problem has been intensively studied in the context of \emph{Bayesian Optimization (BO)}~\cite{settles2009active,NIPS2011_4221,NIPS2007_3189,pmlr-v77-nguyen17a,Wang:2016:BOB:3013558.3013569,Hennig:2012:ESI:2503308.2343701}).

In this paper, we study the problem of enumerating local minima (or maxima) of a black-box function.
In many applications, it is beneficial to identify the positions of local minima and/or maxima because it helps to roughly grasp the ``shape'' of the black-box function. 
Furthermore, it is often the case that each local minimum point has its own special meaning. 
For example, when modeling the energy space of a physical system, each local minimum point corresponds to a stable energy point of the system, which is crucially important for revealing various physical properties of the system (see \S5 for an application of the proposed method to a physical problem).

A local minimum point is characterized by the first and the second derivatives of the function, i.e., an input point is a local minimum if the gradient vector is zero and the Hessian matrix is \emph{positive-definite (PD)}.
The difficulty of this problem is due to the fact that we need to select the input points which are likely to be local minima under a situation that those derivatives cannot be directly observed.
In other words, we need to select a set of input points at which the function evaluations are helpful for getting information on the zero gradient and the PD Hessian properties. 

We employ \emph{Gaussian Processes (GPs)} for modeling a black-box function.
GPs are useful in many AL problems since they enable one to predict not only the average but also the uncertainty of the black-box function. 
Our basic idea is to exploit the property that the derivative of a GP is also a GP. 
Based on this property, we develop a method for computing the confidence intervals (CIs) of each element of the gradient vector and the minimum eigenvalue of the Hessian matrix\footnote{A Hessian matrix is PD if and only if the minimum eigenvalue is positive.}.
Then, these CIs are used for designing an \emph{acquisition function (AF)} for efficiently enumerating all of the local minima.
We call the proposed method  \emph{Active learning for Local Optima Enumeration (ALOE)}. 

\paragraph{Related works}
BO has been intensively studied  (see \cite{shahriari2016taking,settles2009active} for comprehensive survey of BO).
In a few existing BO studies, the gradient of a GP is used for accelerating the BO task. 
For example, \cite{NIPS2017_7111} discussed the advantage of using the gradient in a framework called \emph{Knowledge-Gradient}.
Furthermore, \cite{NIPS2002_2287} demonstrated that the gradient of a GP is helpful for modeling dynamical systems. 
In these  works, it is assumed that not only the function values but also the gradient vectors are directly observed. 
On the other hand, we consider a setup where neither the gradient nor Hessian are directly observed.  
The CI-based approach in ALOE is motivated by \cite{Gotovos:2013:ALL:2540128.2540322}, in which the CIs of function values are used for estimating a level set of the function.
Similarly, the CIs of the function values were also used for safe BO in \cite{sui2015safe}.
We employ some of the theoretical techniques developed in \cite{Srinivas:2010:GPO:3104322.3104451,Gotovos:2013:ALL:2540128.2540322} for analyzing the various theoretical properties of ALOE.
In contrast to these existing studies, we use the CIs of the gradient and the Hessian, which are not be easily available since they cannot be directly observed. 

\paragraph{Our contribution}
To the best of our knowledge, there is no existing AL method for enumerating local minima.
We propose a new AL method called ALOE, in which we develop a method to compute the CIs of the gradients and the minimum eigenvalue of the Hessian, without observing these derivatives directly.
Furthermore, based on these CIs, we propose a novel AF for efficiently enumerating local minima. 
We theoretically analyze the accuracy and the convergence of ALOE, and evaluate its empirical performance by numerical experiments with synthetic data and real application to a physical problem.

\section{Preliminaries}
\paragraph{Problem setting}
Suppose that an unknown function $f: D \to \R$ is defined on a set $D \subseteq \R^d$.
%
%
For simplicity, we consider a finite set of input points $\mathcal{X} \subset D$, and consider an AL method to classify if each point $\bm x \in \mathcal{X}$ is local minimum point\footnote{All the methods and theories in this paper can be extended to the case where $\mathcal{X}$ is continuous  with reasonable  assumptions.}.
Let us define the following subset of points in $\mathcal{X}$. 
\begin{definition}[The set of local minima]
\begin{align}
 \label{eq:localMinima}
 S :=
 \left\{
 \bm x \in \mathcal{X} ~\Big|~ 
 \frac{\partial f}{\partial \bm x} = \bm 0
 \text{ and }
 \frac{\partial^2 f}{\partial \bm x \bm x^\top} 
 \succ 0
 \right\},
\end{align}
where
$M \succ 0$
indicates that the matrix $M$ is PD.
\end{definition}
Note that $S$ does not contain ``pathological'' local minimum points at which all the eigenvalues of the Hessian are zero, e.g., $x = 0$ for $f(x) = x^4$. 
Hereafter, with a slight abuse of terminology, we call $S$ as \emph{the set of local minima}.
The goal of ALOE is to efficiently classify all the points in $\mathcal{X}$ into either of $S$ or $\bar{S} := \mathcal{X} \setminus S$ with as small number of function evaluations as possible. 

We employ GP for modeling the unknown function $f$.
Specifically, we assume that the prior distribution of $f$ is $\mathcal{G}\mathcal{P}(0, k(\bm x, \bm x^\prime))$, where $k(\bm x, \bm x^\prime): D \times D \to \R$  is a PD kernel. 
Consider the $t^{\rm th}$ step where a sequence of the input points $\bx_1, \ldots, \bx_t$ on $D$ are selected by an AL method.
Then, the joint distribution $(f (\bm x_1), \ldots, f (\bm x_t))^\top$ follows the $t$-dimensional normal distribution $\mathcal{N}_t ({\bm\mu} _t, {\bm {K}} _ t)$ with the mean vector ${\bm\mu}_t=(0,\ldots,0)^\top \equiv {\bm{0}}_t$ and the covariance matrix $\bm K_t$ whose $(i,j)^{\rm th}$ element is $k(\bm x_i,\bm x_j)$. 
The output $ y_i $ is assumed to be obtained as $ y_i = f (\bm x_i) + \ve_i $,  where $ \ve_1, \ldots, \ve_t $ are independent random variables from $\mathcal{N} (0, \sigma^2)$.
Furthermore, the posterior distribution of $ f $ is also represented as a GP whose mean $ \mu_t (\bm x) $, variance $ \sigma^2_t (\bm x) $ and covariance $ k_t (\bm x, \bm x ') $  are given by
\begin{align*}
\mu_t (\bm x) &= {\bm{k}}_t (\bm x) ^\top{\bm{C}}_t^{-1} {\bm{y}}_t, \ 
\sigma^2_t (\bm x)= k_t (\bm x,\bm x), \\
k_t(\bm x,\bm x')&= k(\bm x,\bm x')- {\bm{k}}_t (\bm x) ^\top {\bm{C}}_t^{-1} {\bm{k}}_t (\bm x^\prime)
\end{align*}
where ${\bm{k}}_t (\bm x)= (k(\bm x_1,\bm x),\ldots,k(\bm x_t,\bm x))^\top$, 
${\bm{C}}_t = ({\bm{K}}_t +\sigma^2 {\bm{I}}_t)$, 
$\bm y_t =(y_1,\ldots,y_t)^\top$ and 
${\bm{I}}_t$ is a $t$-dimensional identity  matrix.

\paragraph{GP derivatives}
We assume that the kernel function $k(\bm x, \bm x^\prime)$ is differentiable up to order four.
Many commonly used kernels including Gaussian and Linear kernels satisfy this assumption.
Under this assumption, it is known that the first and second derivatives of $\mathcal{G}\mathcal{P}(0,k(\bm x,\bm x'))$ is also GPs (e.g., \cite{Rasmussen:2005:GPM:1162254}, \cite{papoulis2002probability}).
Here, let $f^{(1)}_i$ and $f^{(2)}_{jk}$ be the  first and the second  derivatives of $f$ in the $i^{\rm th}$ and $(j,k)^{\rm th}$ elements, respectively.
Then, given the observations $(\bm x_1,y_1),\ldots,(\bm x_t,y_t)$, the posterior distribution of $f^{(1)}_i$ is also GP, and its mean, variance and covariance are respectively given by
\begin{align*}
 &\mu^{(1)}_{t,i} (\bm x) = {\bm{k}}^{(1)}_{t,i} (\bm x) ^\top {\bm{C}}_t^{-1} {\bm{y}}_t, ~ \{\sigma^{(1)}_{t,i} (\bm x) \}^2 = v^{(1)}_{t,i} (\bm x,\bm x), \\
 &v^{(1)}_{t,i} (\bm x,\bm x') = v^{(1)}_i (\bm x,\bm x') -  {\bm{k}}^{(1)}_{t,i} (\bm x) ^\top {\bm{C}}_t^{-1}  {\bm{k}}^{(1)}_{t,i} (\bm x'),
\end{align*}
%
where the $l^{\rm th}$ element of ${\bm{k}}^{(1)}_{t,i} (\bm x)$ is $\partial k(\bm x_l,\bm x) /\partial x_i$ and $v^{(1)}_i (\bm x,\bm x')=\partial^2 k(\bm x,\bm x')/\partial x_i \partial x'_i .$
Similarly, the posterior distribution of  $f^{(2)}_{jk}$ is also GP, and its 
mean,
variance,
and covariance
are respectively given by
\begin{align*}
\hspace*{-2.5mm} & \mu^{(2)}_{t,jk} (\bm x) = {\bm{k}}^{(2)}_{t,jk} (\bm x) ^\top {\bm{C}}_t^{-1} {\bm{y}}_t, ~ \{\sigma^{(2)}_{t,jk} (\bm x)\}^2 = v^{(2)}_{t,jk} (\bm x,\bm x), \\
\hspace*{-2.5mm} & v^{(2)}_{t,jk} (\bm x,\bm x') = v^{(2)}_{jk} (\bm x,\bm x') -  {\bm{k}}^{(2)}_{t,jk} (\bm x) ^\top {\bm{C}}_t^{-1}  {\bm{k}}^{(2)}_{t,jk} (\bm x'),
\end{align*}
where the $l$th element of the second derivative ${\bm{k}}^{(2)}_{t,jk} (\bm x)$ is given by $\partial^2 k(\bm x_l,\bm x) /\partial x_j \partial x_k$, and
\begin{align*}
 v^{(2)}_{jk} (\bm x,\bm x')=\partial^4 k(\bm x,\bm x')/\partial x_j \partial x_k \partial x'_j \partial x'_k.
\end{align*}

\section{Proposed method}\label{sec3}
In this section, we describe the proposed ALOE method 
 for  efficiently identifying the set of local minima $ S$   in   \eqref{eq:localMinima}.
At the step, ALOE  estimates whether each $\bx \in \mathcal{X} $ is included in $ S$ using  the CIs of the gradients  and       the  Hessian   minimum eigenvalue.  
Figure \ref{fig:ALOE} illustrates the behavior of ALOE.

\begin{figure*}[t]
\begin{center}
\scalebox{1}{
 \begin{tabular}{cccc}
 \includegraphics[width=0.225\textwidth]{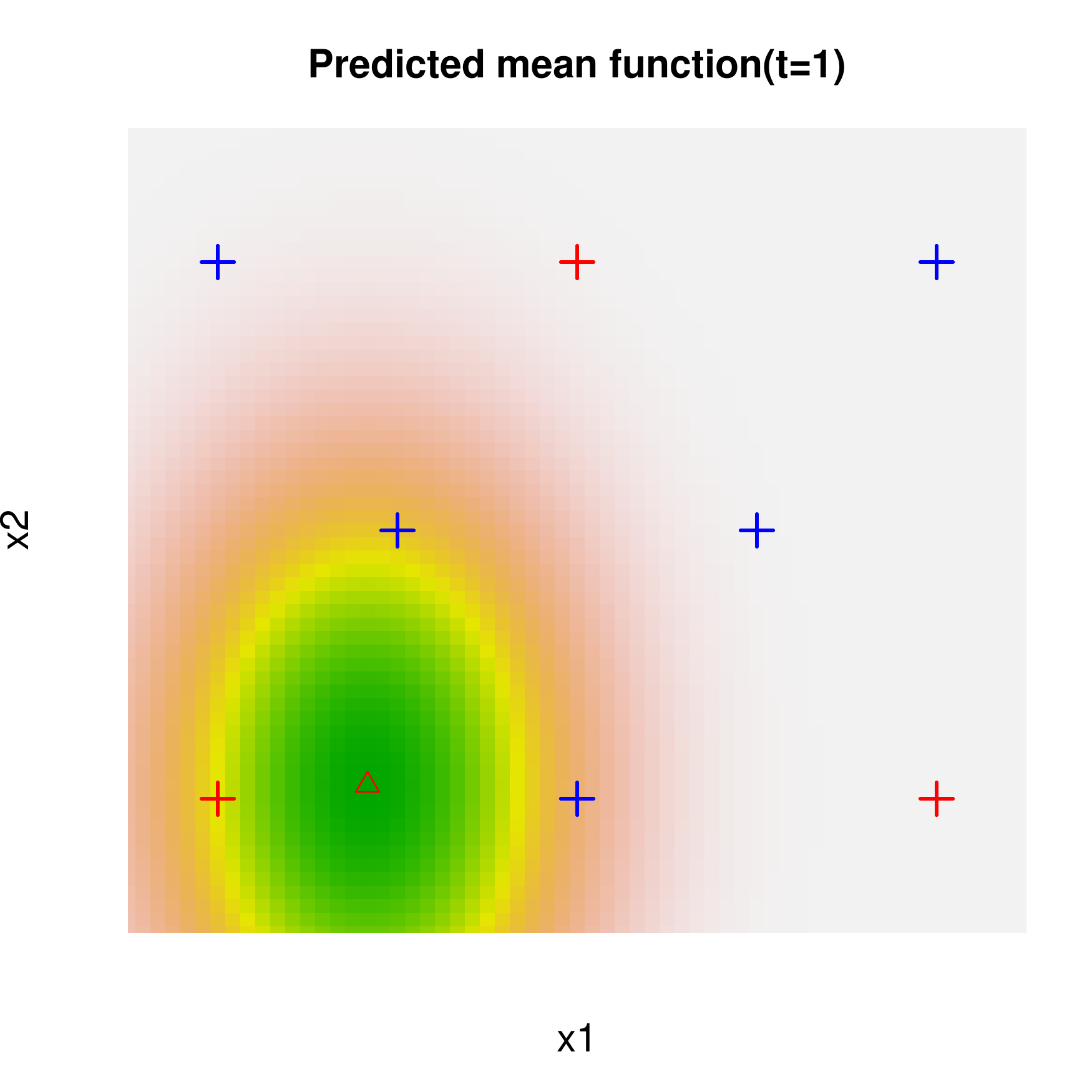} & 
 \includegraphics[width=0.225\textwidth]{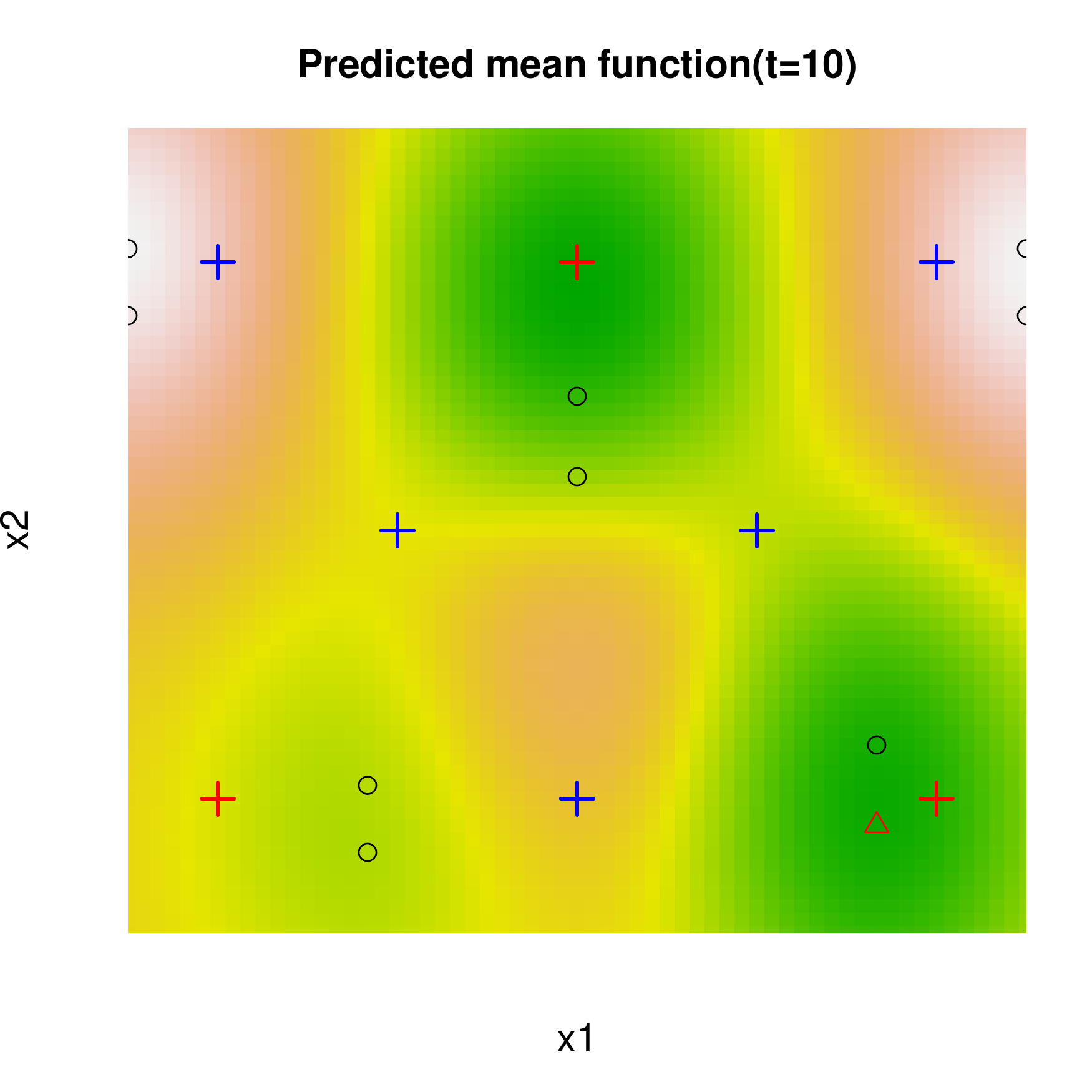} & 
 \includegraphics[width=0.225\textwidth]{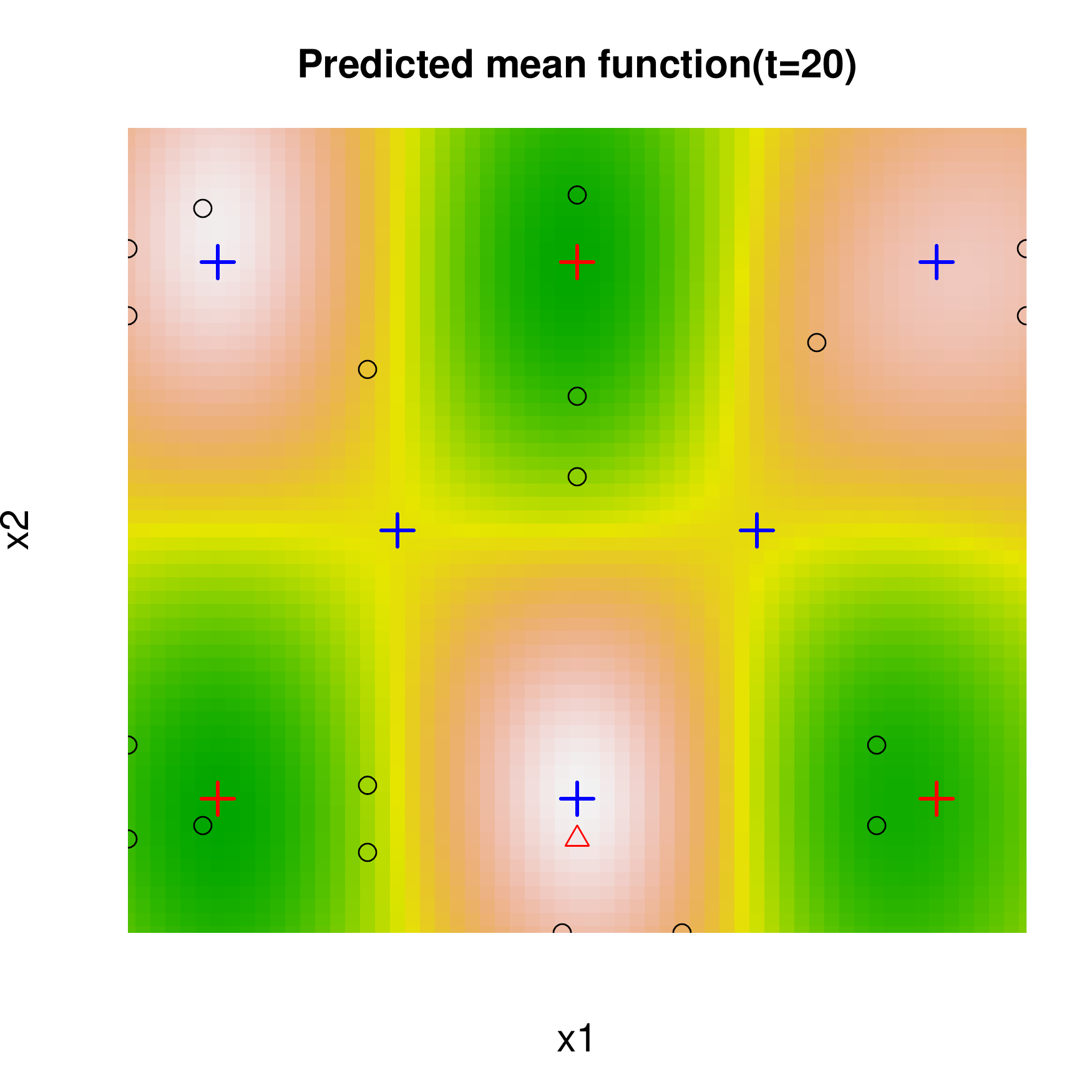} & 
 \includegraphics[width=0.225\textwidth]{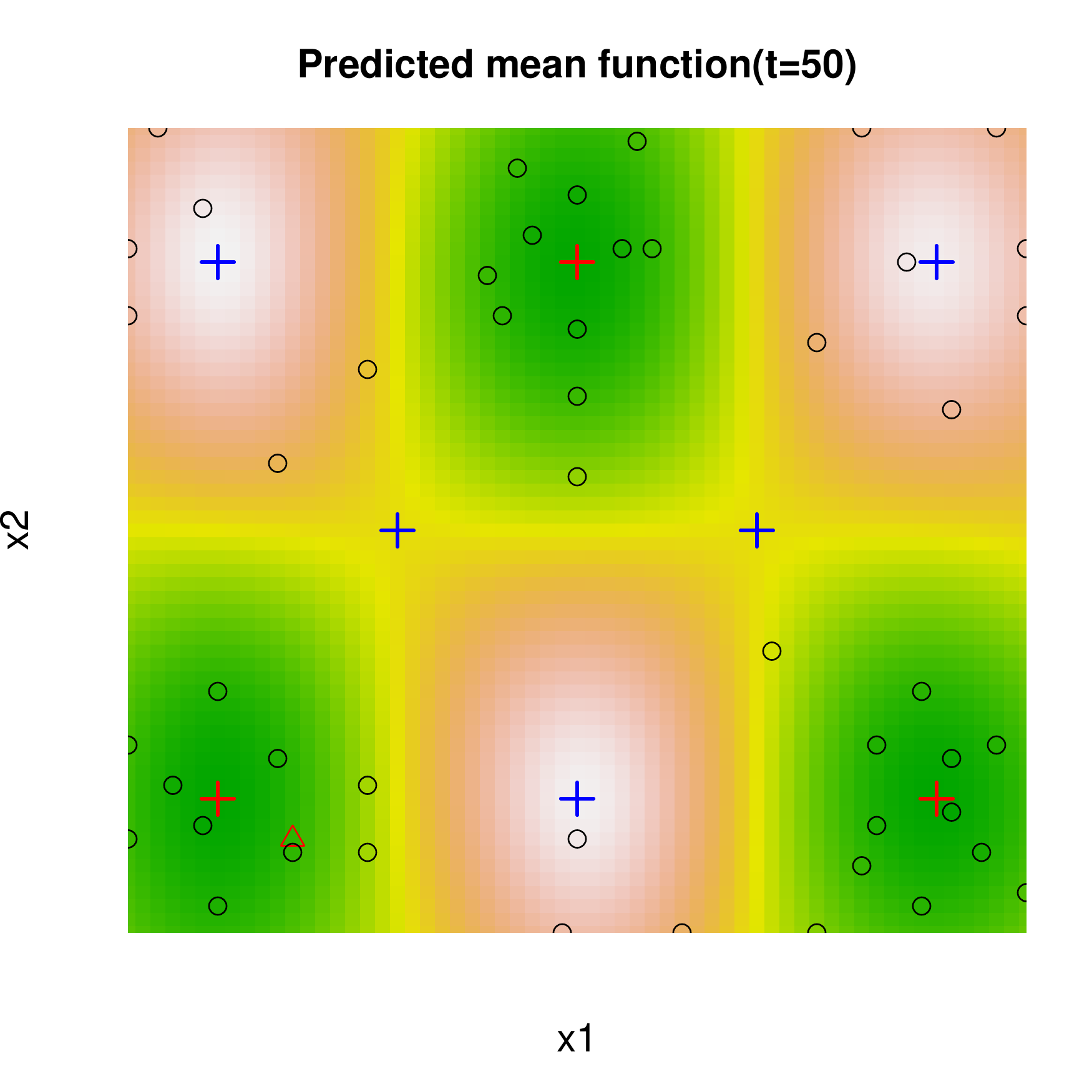} \\ 
 \includegraphics[width=0.225\textwidth]{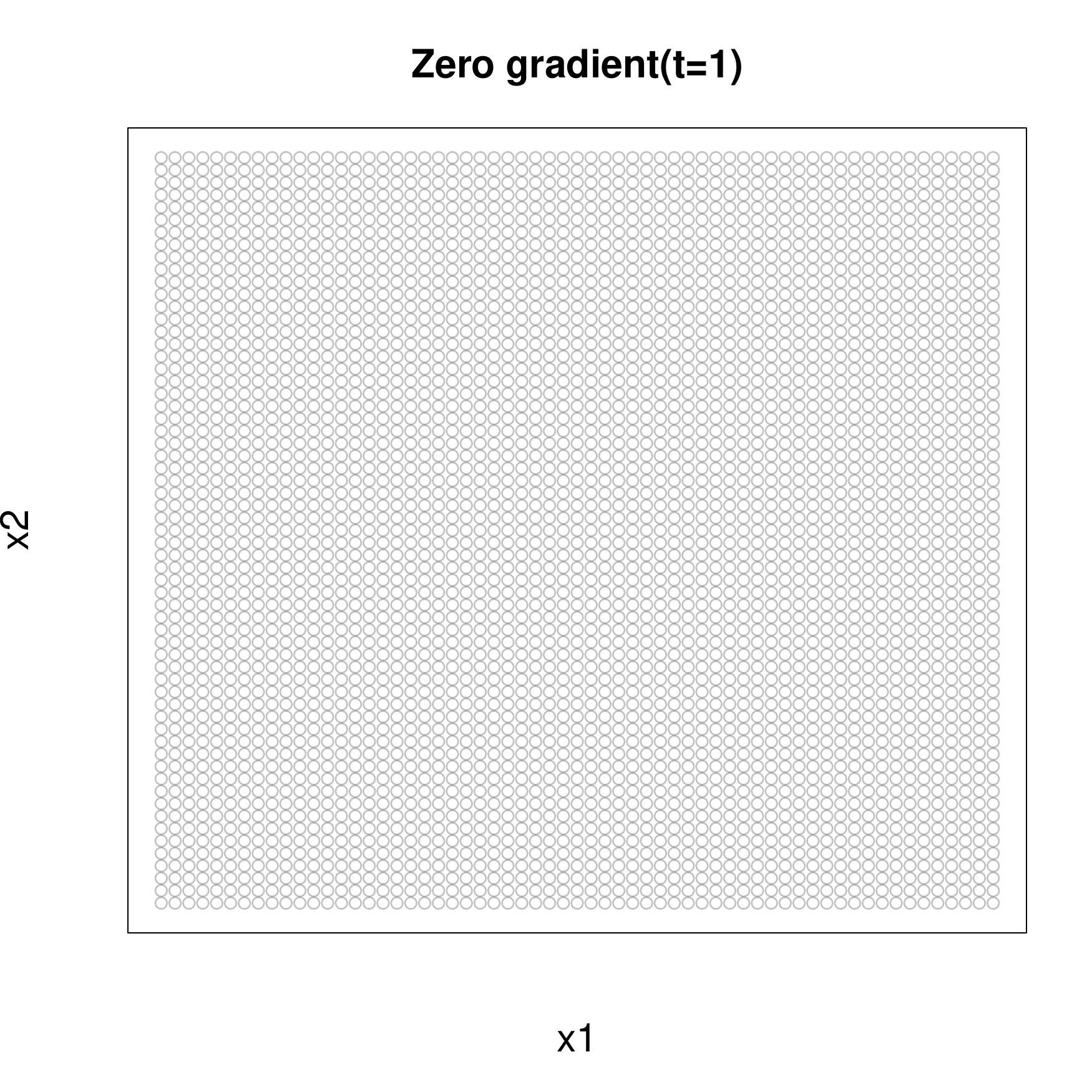} & 
 \includegraphics[width=0.225\textwidth]{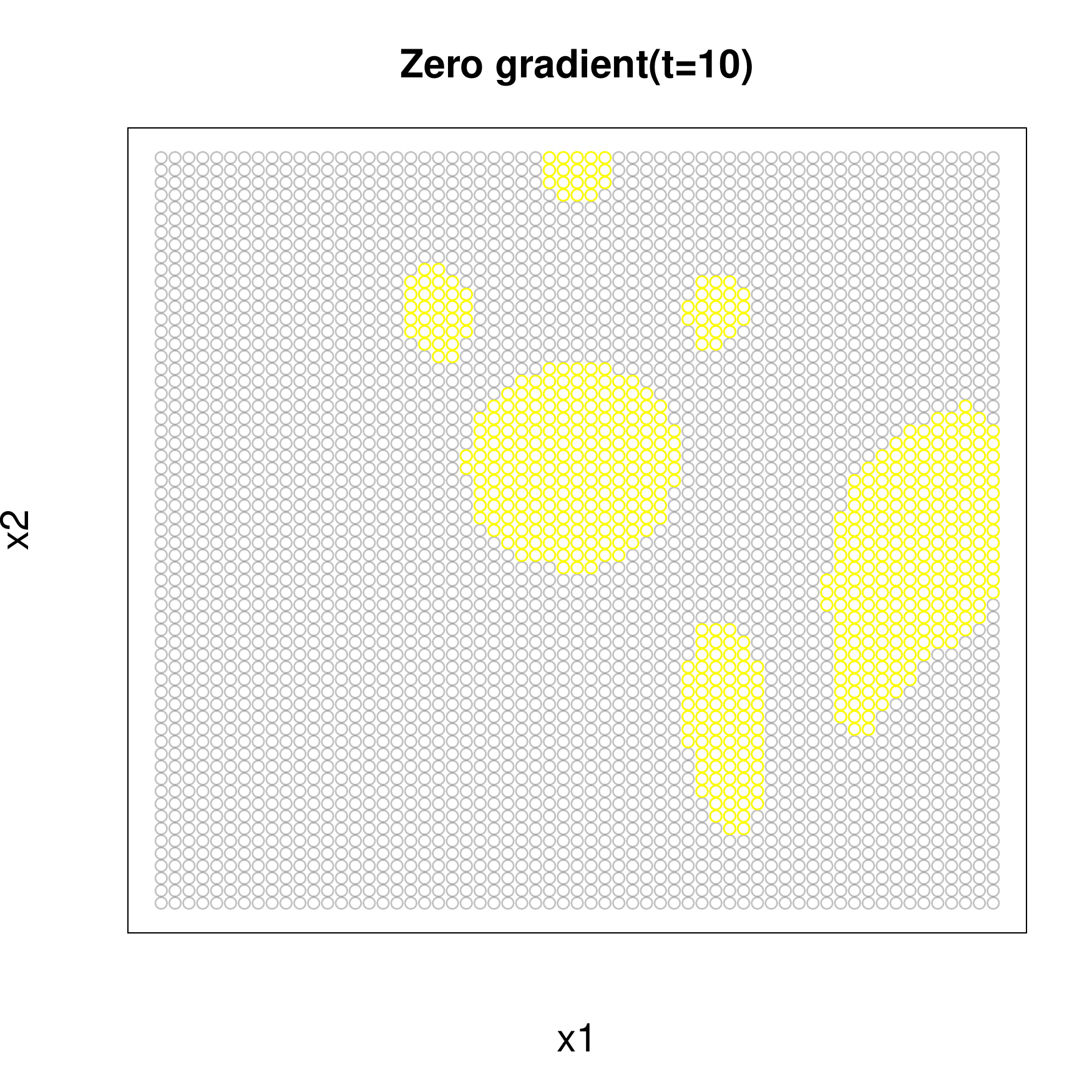} & 
 \includegraphics[width=0.225\textwidth]{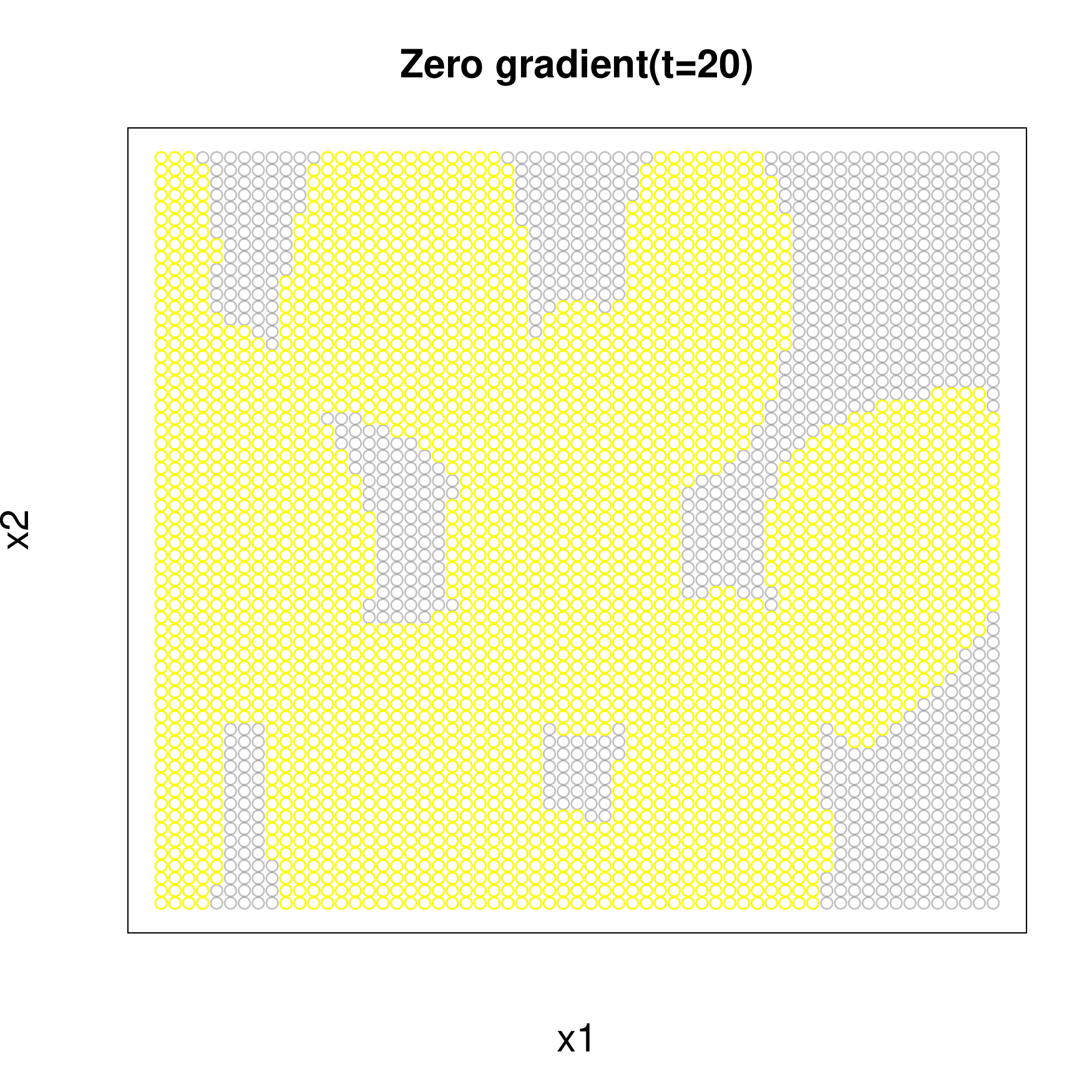} & 
 \includegraphics[width=0.225\textwidth]{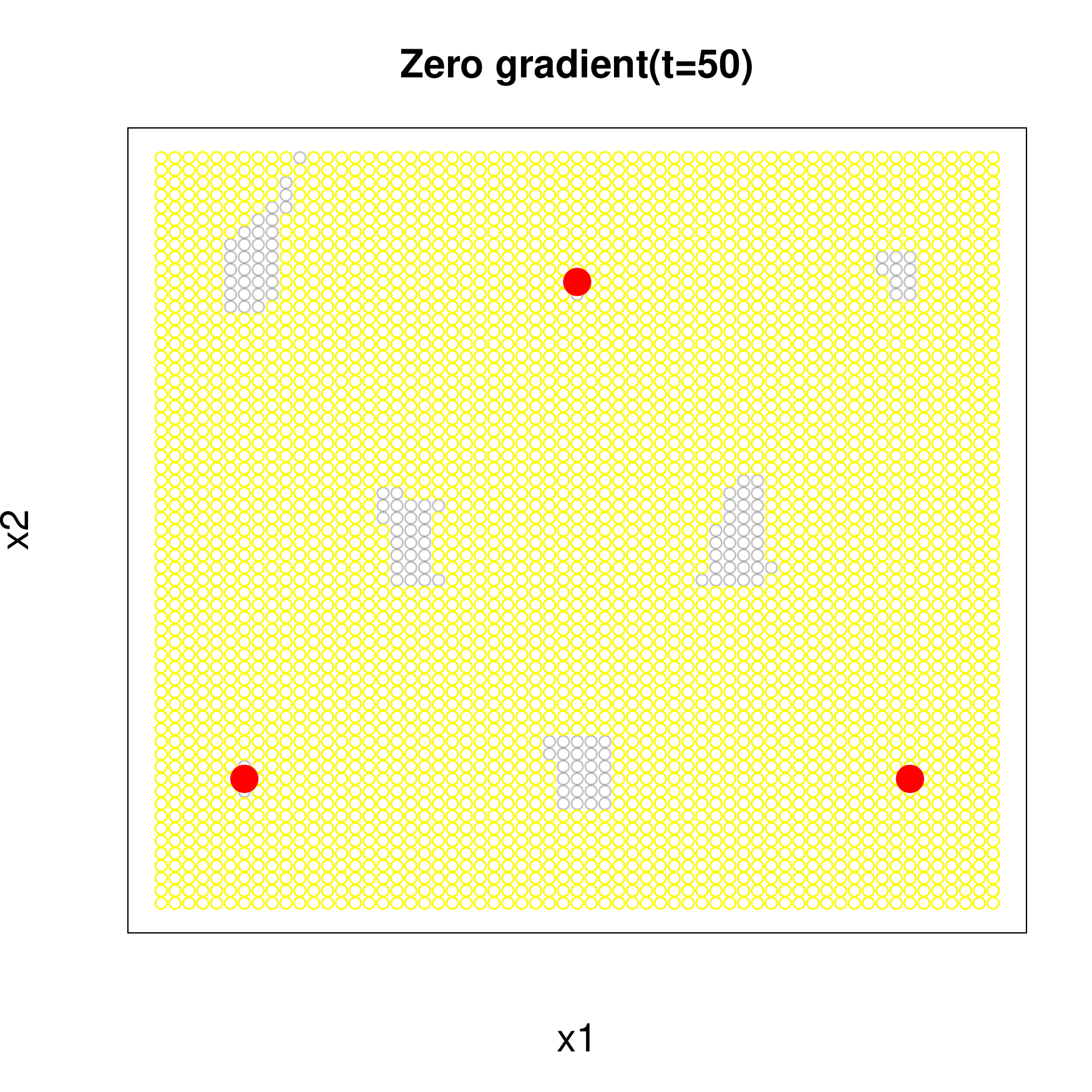} \\ 
 \includegraphics[width=0.225\textwidth]{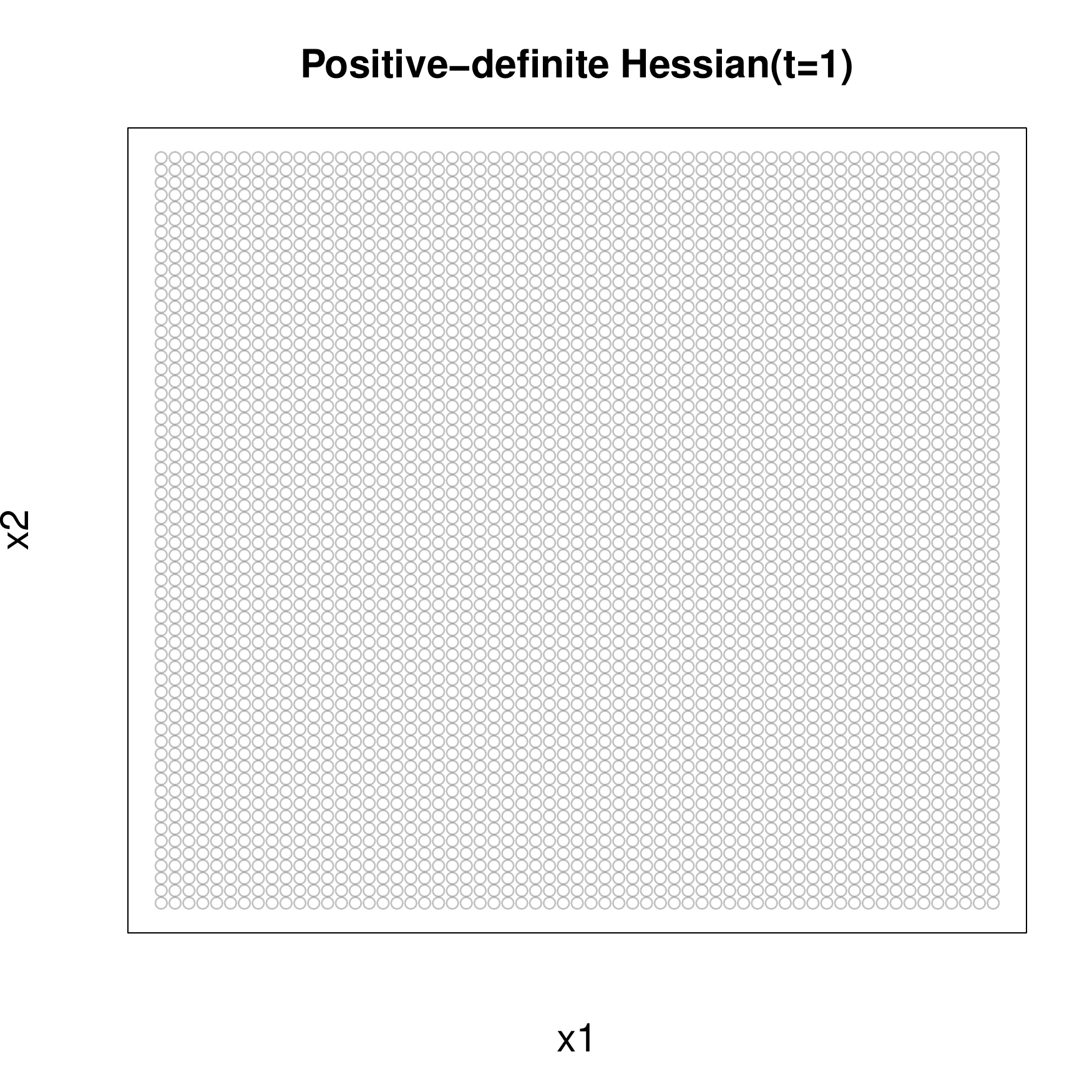} & 
 \includegraphics[width=0.225\textwidth]{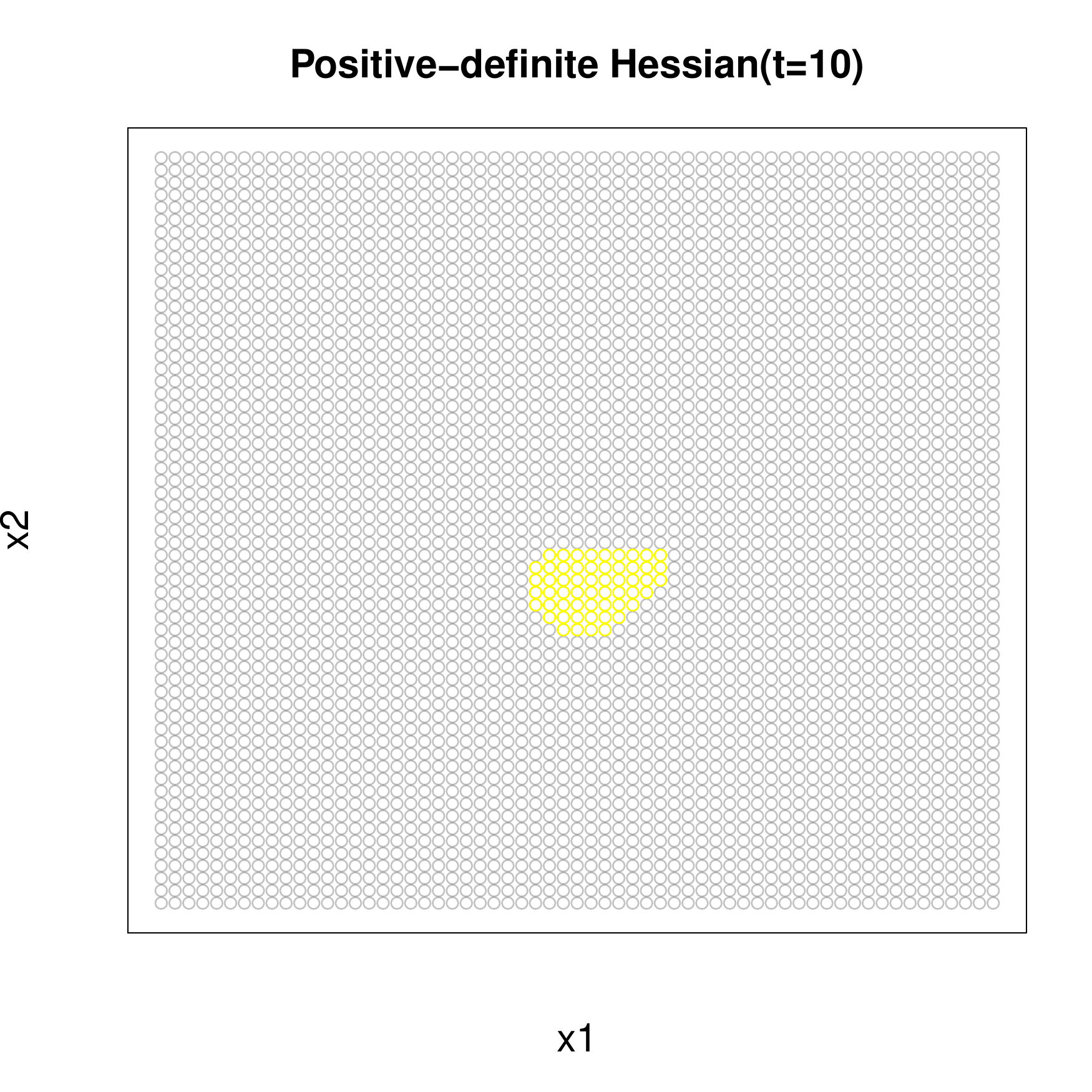} & 
 \includegraphics[width=0.225\textwidth]{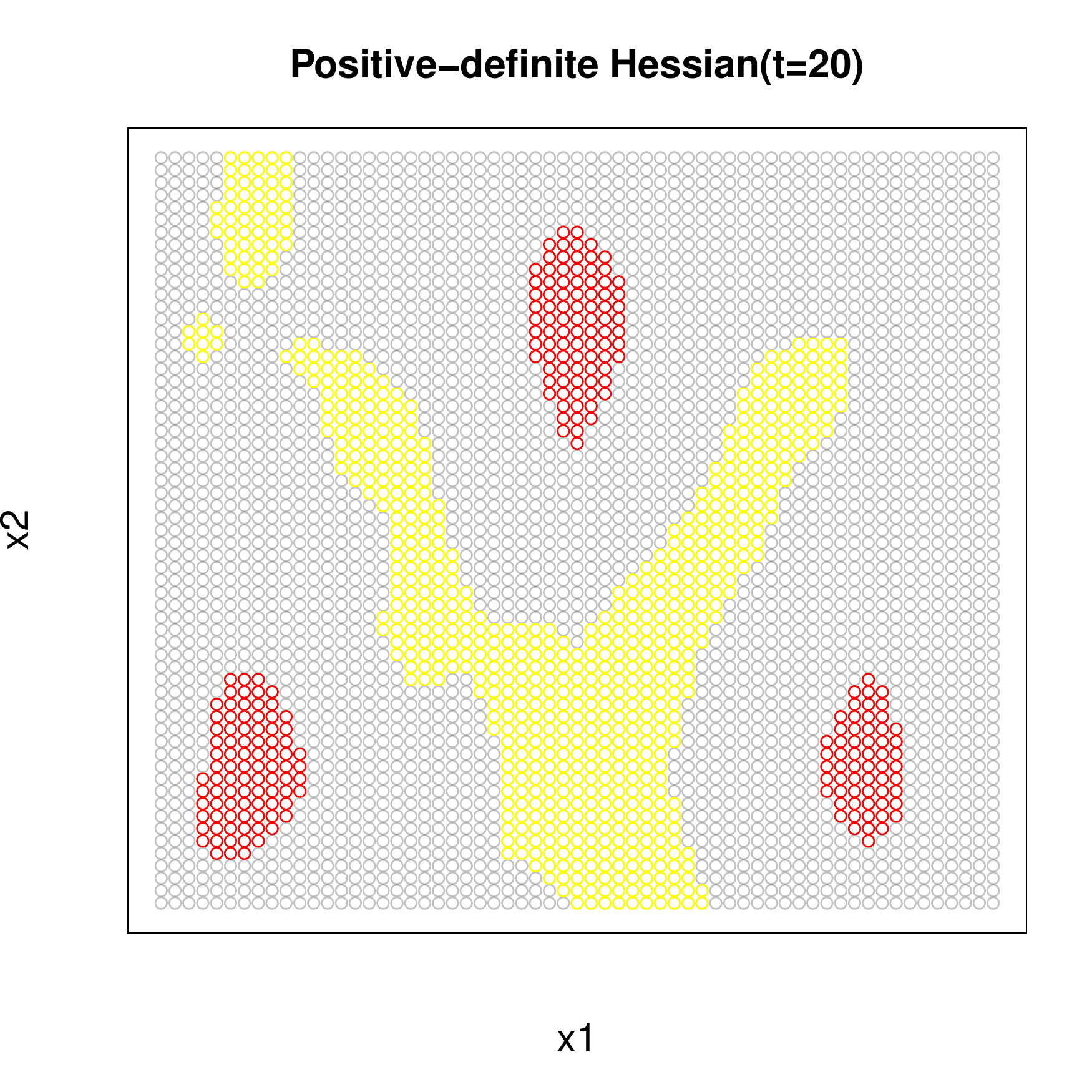} & 
 \includegraphics[width=0.225\textwidth]{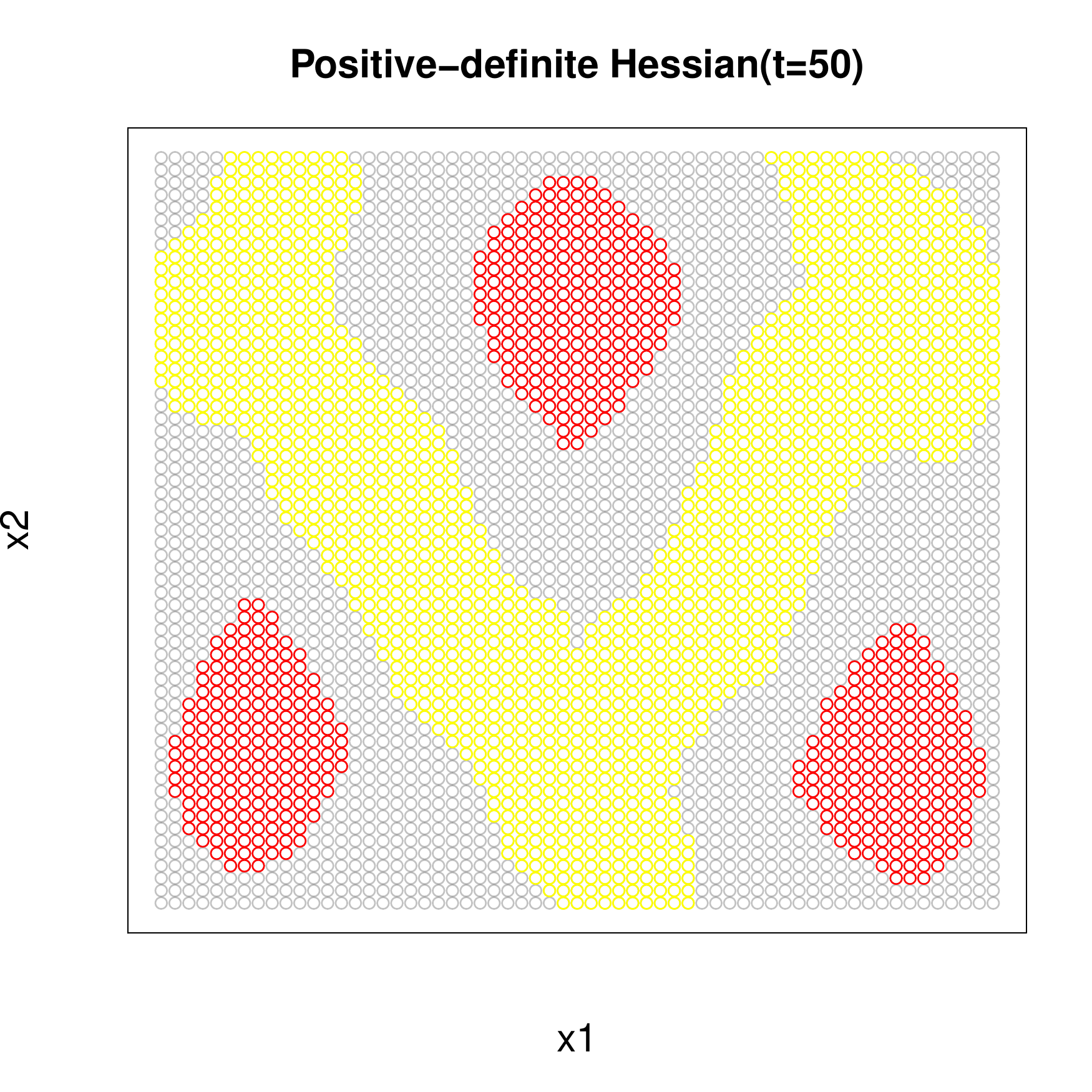}
 \end{tabular}}
\end{center}
 \caption{An example of ALOE in a synthetic 2-dimensional data.
The top, middle and bottom row show the predicted mean functions, zero-gradient regions, and PD Hessian regions, respectively (in the middle and bottom rows, red, yellow, and gray colors indicate zero-gradient/PD-Hessian regions, NON zero-gradient/PD-Hessian regions, and unknown regions, respectively).
In the top row, the red and blue crosses indicate local minimum and local maximum points, respectively. 
ALOE effectively selected the sequence of input points from the region close to the local minima, and efficiently identified all the local minimum points at $t=50$.
}
 \label{fig:ALOE}
\end{figure*}

\subsection{Local minimum estimation based on the CIs of GP derivatives}\label{GPiden}
For each  $\bx \in \mathcal{X}$, we define the CIs of   $f^{(1)}_i (\bx)$ at the $t^{\rm th}$ iteration as $ Q^{(1)}_{t,i} (\bx)  =[ l^{(1)}_{t,i} (\bx),u^{(1)}_{t,i} (\bx)]$, 
where $ l^{(1)}_{t,i} (\bx) = \mu^{(1)}_{t,i} (\bx) -\beta^{1/2}_{t} \sigma^{(1)}_{t,i} (\bx)$, 
$ u^{(1)}_{t,i} (\bx) = \mu^{(1)}_{t,i} (\bx) + \beta^{1/2}_{t} \sigma^{(1)}_{t,i} (\bx)$ 
and $\beta^{1/2}_{t} \geq0$. 
Then, 
 by using an accuracy parameter $\epsilon^{(1)}_i >0$ we define 
 $G^{(1)}_{t,i}$ and $\bar{G}^{(1)}_{t,i}$ as 
\begin{align}
\hspace*{-5mm}  G^{(1)}_{t,i} &= \{ \bx \in \mathcal{X} \ | \ - \epsilon^{(1)} _i < l^{(1)}_{t,i} (\bx) \land  \ u^{(1)}_{t,i} (\bx) < \epsilon^{(1)} _i \}, \label{eq:L1i}  \\
\hspace*{-5mm} \bar{G}^{(1)}_{t,i} &= \{ \bx \in \mathcal{X} \ | \ 0 \leq l^{(1)}_{t,i} (\bx) \lor \ u^{(1)}_{t,i} (\bx) \leq 0 \}.  \label{eq:nL1i} 
\end{align}
Here, $G^{(1)}_{t,i} $ is the set of points at which  the CIs  of the gradients fall within $ [- \epsilon^{(1)}_i, \epsilon^ {(1)}_i ] $,  
i.e., the set of points expected to have zero gradient with the accuracy $ \epsilon ^{(1)} _i$.
Similarly, $\bar{G}^{(1)}_{t,i}$ is the set of points at which the CIs of the gradients are sufficiently away from zero.

Next, we consider the identification of the points at which  the Hessian of $ f $ is  PD.
Note that  the minimum eigenvalue of a matrix is positive  if and only if the Hessian is PD. 
Using this equivalence, we perform   identification of PD Hessian using the CI of the minimum eigenvalue.
For each point $\bx \in \mathcal{X}$, we define 
 the CI of the minimum eigenvalue $\lambda (\bx)$  as 
$Q^{(2)}_{t} (\bx)  =  [ l^{(2)}_{t},u^{(2)}_{t}] $, where 
$ l^{(2)}_{t} = \lambda_{t} (\bx) - \gamma^{1/2}_{t} \varsigma^{(2)}_t (\bx)$, 
$ u^{(2)}_{t} = \lambda_{t} (\bx) + \gamma^{1/2}_{t} \varsigma^{(2)}_t (\bx)$, 
$\gamma^{1/2}_{t} \geq 0$ and $\lambda_{t}(\bx)$ is the minimum eigenvalue of the $d \times d$ matrix whose $(j,k)^{\rm th}$ element is  
$\mu^{(2)}_{t,jk} (\bx)$. 
On the other hand, since the variance of $\lambda (\bx)$ is not readily available, we use  
 $\varsigma^{(2)}_t (\bx)$ defined as 
$
\varsigma^{(2)}_t (\bx) =   \max_{j,k \in [d]} \sigma^{(2)}_{t,jk} (\bx) 
$, where $[d]:=\{1,\ldots,d \}$. 
As shown  in section \ref{sec4}, 
by appropriately adjusting $\gamma_t$, 
$\lambda(\bx)$ is shown to be included in $Q^{(2)}_t (\bx) $ with   high probability. 
 Then, using an accuracy parameter $\epsilon^{(2)} >0$ we define $H^{(2)}_{t}$ and 
$\bar{H}^{(2)}_{t}$ as 
\begin{align}
H^{(2)}_{t} &= \{ \bx \in \mathcal{X} \ | \ l^{(2)} _{t} > -\epsilon ^{(2)} \} , \label{eq:minsei} \\
\bar{H}^{(2)}_{t} &= \{ \bx \in \mathcal{X} \ | \ u^{(2)} _{t} < \epsilon ^{(2)} \}, \label{eq:minhu}
\end{align}
where $H^{(2)}_{t} $ (resp. $\bar{H}^{(2)}_{t} $) is the set of points where the minimum eigenvalue is expected to be positive (resp. negative)  with an accuracy $ \epsilon^{(2)}$.  
Then, from \eqref{eq:L1i}--\eqref{eq:nL1i} and  \eqref{eq:minsei}--\eqref{eq:minhu}, we estimate $S$ as follows:
\begin{definition}[$S$ estimation]
The estimates of $S$ and $\bar{S} := \mathcal{X} \setminus S $ are respectively defined as 
\begin{equation}
  \widehat{S}_t= H^{(2)}_{t} \cap  \bigcap_{i=1}^d  G^{(1)}_{t,i}, \
\widehat{\bar{S}}_t= \bar{H}^{(2)}_{t} \cup   \bigcup_{i=1}^d \bar{G}^{(1)}_{t,i}. \nonumber
\end{equation}
The set of remaining points at step $t$ is  defined as $U_t = \mathcal{X} \setminus (   \widehat{S}_t \cup \widehat{\bar{S}}_t)$. Figure \ref{zu1} shows an example of CIs.
\end{definition}

\begin{figure*}[t]
\begin{center}
\scalebox{1}{
\includegraphics[width=0.49\textwidth]{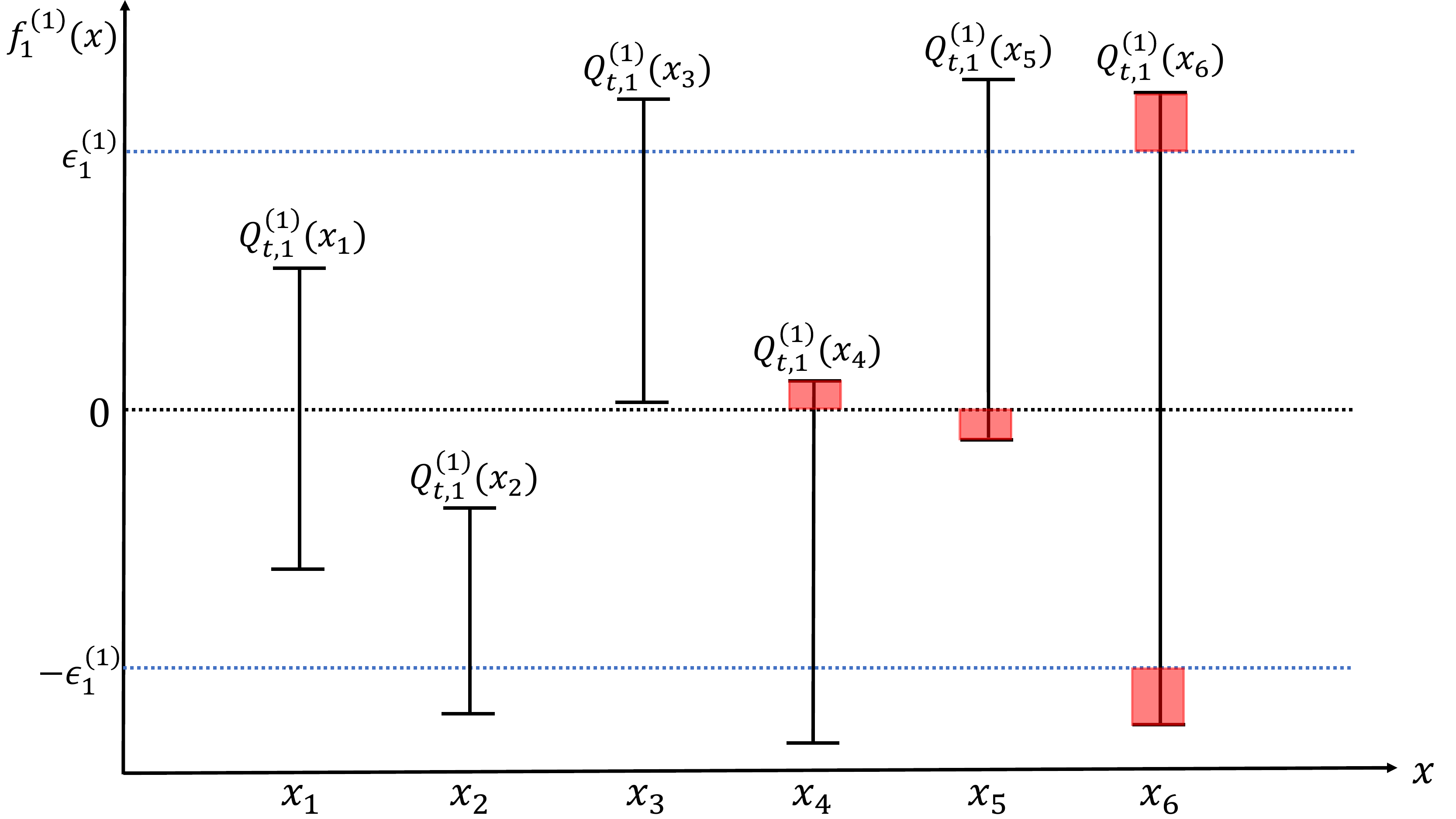}
\includegraphics[width=0.49\textwidth]{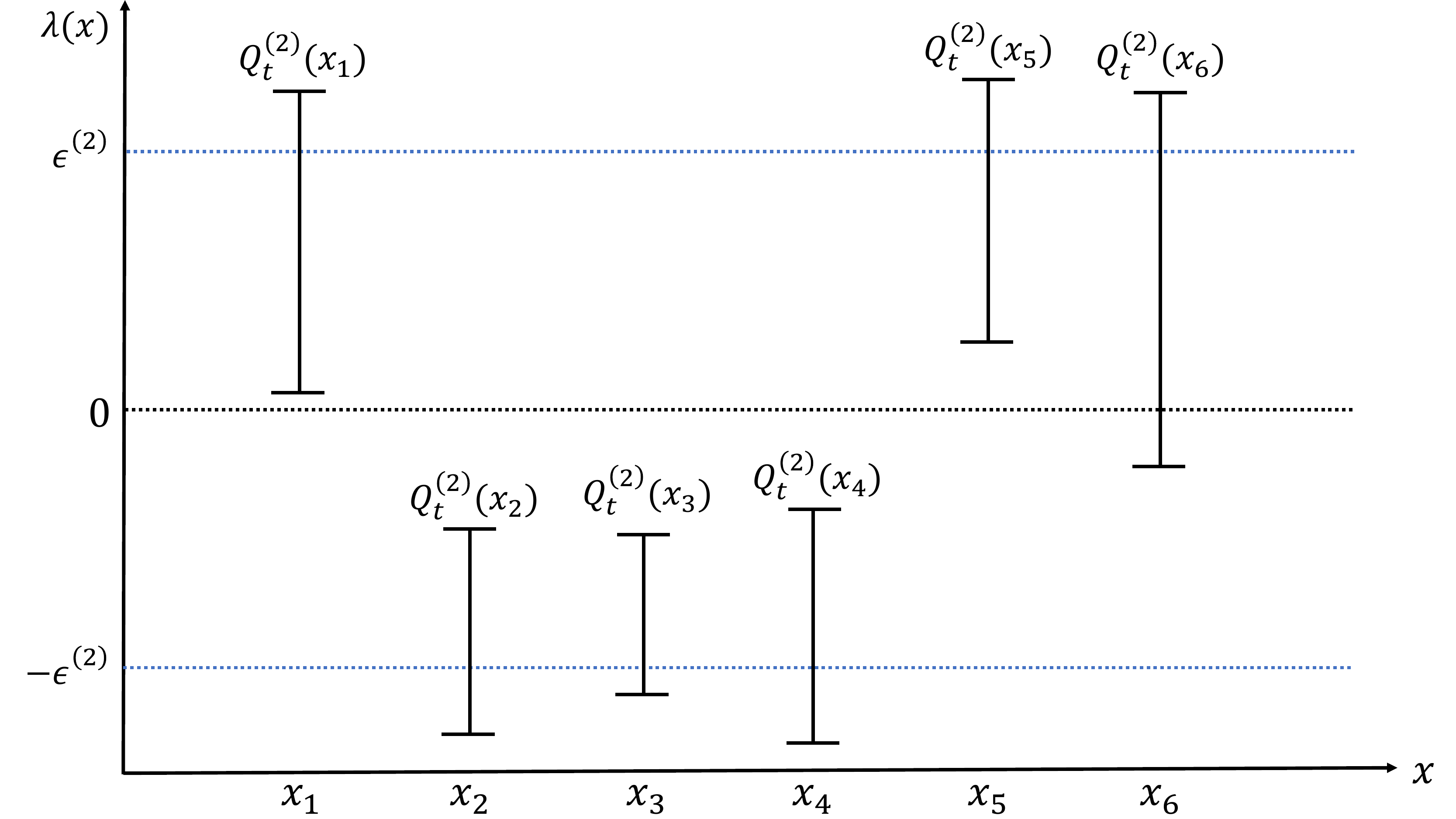}
}
\end{center}
\caption{A schematic diagram of the CIs and the violation in ALOE. Left and right plots show the CIs of the gradients and the Hessian minimum eigenvalues of one-dimensional six input points $x_1, \ldots, x_6$.
The left plot indicates that the gradients at $x_1$, $x_4$, $x_5$, and $x_6$ could be zero, while the right plot indicates that the Hessians at $x_1$, $x_5$, and $x_6$ could be PD. 
Here, $S_t := \{x_1\}, \bar{S}_t = \{x_2, x_3, x_4\}, U_t = \{x_5, x_6\}$.
The red parts indicate the violations which play an important role in the AF of ALOE.
}
\label{zu1}
\end{figure*}

\subsection{Acquisition function by predicted violations}\label{AF}
Based on  $\widehat{S}_t$ and $\widehat{\bar{S}}_t$, 
we propose an AF for efficiently enumerating local minima. 
The proposed AF $a_t (\bx)$ consists of components as 
\begin{align}
a_t (\bx) = r_t \sigma^2_t (\bx) + (1-r_t) b_t (\bx) , \label{eq:AF}
\end{align}
where $r_t = \{ 0,1 \}$ adjusts the trade-off between two components. 
The first component $\sigma^2_t (\bx)$ is merely the posterior variance of $f(\bx)$. 
Thus, when $r_t =1$, the AF is reduced to the AF of \emph{Uncertainty sampling} \cite{settles2009active}. 
The second component $b_t (\bx)$ is a specific function designed for reducing the uncertainties in the 
gradients and Hessian minimum eigenvalue for the task of enumerating local minima. 
In the remainder of this section, we describe the detail of $b_t (\bx)$. 

First, we define  \emph{violations} (see, Figure \ref{zu1}).   
Remembering that the goal is to classify each point $\bx \in \mathcal{X}$ into either of $S$ or $\bar{S}$, 
the violation of the CI of $f^{(1)}_i $ at $\bx$ is defined as 
\begin{align}
V^{(1)}_{t,i} (\bx) 
=   \min \{   \xi ( u^{(1)}_{t,i} (\bx) ) , \xi ( -l^{(1)}_{t,i} (\bx) ), 
   \xi (u^{(1)}_{t,i,\epsilon^{(1)}_i}(\bx))+
\xi (l^{(1)}_{t,i,\epsilon^{(1)}_i} (\bx)) \}, \nonumber 
\end{align}
where  $u^{(1)}_{t,i,\epsilon^{(1)}_i} (\bx)= u^{(1)}_{t,i} (\bx) -\epsilon^{(1)}_i$ and $l^{(1)}_{t,i,\epsilon^{(1)}_i} (\bx)= -\epsilon^{(1)}_i-l^{(1)}_{t,i} (\bx)$. 
Here,  $\xi(a)=a$ if      $a>0$ and otherwise $0$.
The $b_t (\bx)$ in the second component of the AF is designed to be able to select the next point 
such that the largest violation is maximally reduced.

Let $\bx^+ \in U_t $ be the input at which the sum of the violation $\sum^d_{i=1} V^{(1)}_{t,i} (\bx^+ ) $ 
is largest, i.e., 
$$
\bx ^+ _t := \argmax _{ \bx \in U_t } \sum_{i=1}^d V^{(1)}_{t,i} (\bx) .  
$$
Unfortunately, since the gradient $f^{(1)}_i (\bx^+ )$ cannot be directly observed, it is not sufficient to 
simply select the input $\bx ^+$ as the next  input point. 
Indeed, we need to select the input point 
$\bx^* \in D $ 
such that 
the \emph{predicted violation} of 
$f_i^{(1)}(\bx^+)$ 
can be maximally reduced 
by evaluating $f(\bx^*)$.
%
%
Let $\sigma_{t, i}^{(1)}(\bx; \bx^*)$ 
be the posterior variance of 
$f_{i}^{(1)}(\bx)$
when
the function $f(\bx^*)$ is newly evaluated. 
By replacing
$\sigma_{t, i}^{(1)}(\bx)$ in $Q^{(1)}_{t,i} (\bx)$
to 
$\sigma_{t, i}^{(1)}(\bx; \bx^*)$, 
we obtain
the the predicted CIs
$[l_{t, i}^{(1)}(\bx^+; \bx^*), u_{t, i}^{(1)}(\bx^+; \bx^*)]$ 
~\footnote{
Here, the mean $\mu_{t, i}^{(1)}(\bx)$ is not replaced since its update is unknown before we actually evaluate $f(\bx^*)$.
}.
Then, 
by replacing 
$l_{t, i, \epsilon_i}^{(1)}(\bx)$ and $u_{t, i, \epsilon_i}^{(1)}(\bx)$
to 
similarly defined 
$l_{t, i, \epsilon_i}^{(1)}(\bx; \bx^*)$ and $u_{t, i, \epsilon_i}^{(1)}(\bx; \bx^*)$,
respectively, 
we similarly define the predicted violation
$V_{t, i}^{(1)}(\bx^+; \bx^*)$, 
which represents the predicted violation 
when
$f(\bx^*)$ is newly evaluated. 
In summary,
the second component of the AF is defined as
\begin{align*}
 b_t(\bx) = \sum_{i=1}^d (V_{t,i}^{(1)}(\bx_t^+) - V_{t,i}^{(1)}(\bx_t^+; \bx)). 
\end{align*}
Algorithm \ref{ALG1}. shows the flow of ALOE.
\begin{algorithm}[t]                  
\caption{Local minima identification using GP derivatives}         
\label{ALG1}                          
\begin{algorithmic}[1]                  
\REQUIRE 
 Initial training data,  
GP prior $\mathcal{G}\mathcal{P} (0,k(\bx,\bx'))$
\ENSURE Estimated sets $\widehat{S}$ and $ \widehat{\bar{S}}$
\STATE $\widehat{S}_0 \gets \emptyset$, $\widehat{\bar{S}}_0\gets \emptyset$, $U_0 \gets \mathcal{X}$
\STATE $t \gets 1$
\WHILE{$U_{t-1} \neq \emptyset$}
	\STATE $\widehat{S}_t \gets \widehat{S}_{t-1}$, $\widehat{\bar{S}}_t\gets \widehat{\bar{S}}_{t-1}$, $U_t \gets U_{t-1}$
	\FORALL{$ \bx \in \mathcal{X}$}
	\STATE Compute confidence intervals $Q^{(1)}_{t,i} (\bx)$ and  $Q^{(2)}_t (\bx)$ from GP derivatives
	\ENDFOR
	\STATE 
Compute $G^{(1)}_{t,i} $, $\bar{G}^{(1)}_{t,i}$, $H^{(2)}_{t}$ and  $\bar{H}^{(2)}_{t}$ for each\ $i \in [d]$ 
	\FORALL{$ \bx \in \mathcal{X}$}
		\IF{$\bx \in H^{(2)}_{t} \cap \bigcap_{i=1}^d G^{(1)}_{t,i}  $} 
		\STATE $\widehat{S}_t \gets \widehat{S}_{t-1} \cup \{ \bx \}$, $U_t \gets U_{t-1} \setminus \{ \bx \}$
		 \ELSIF{$\bx \in \bar{H}^{(2)}_{t} \cup \bigcup_{i=1}^d \bar{G}^{(1)}_{t,i}  $} 
		\STATE $\widehat{\bar{S}}_t \gets \widehat{\bar{S}}_{t-1} \cup \{ \bx \}$, $U_t \gets U_{t-1} \setminus \{ \bx \}$
		 \ENDIF
	\ENDFOR
	\STATE Compute   $\bx_{t}$ from \eqref{eq:AF} 
\STATE $y_t \gets f({\bm{x}}_t ) +\varepsilon_t$
\STATE $t \gets t+1$
\ENDWHILE
\STATE $\widehat{S} \gets \widehat{S}_{t-1}$, $\widehat{\bar{S}} \gets \widehat{\bar{S}}_{t-1}$
\end{algorithmic}
\end{algorithm}

\section{Theoretical results}\label{sec4}
We provide   theorems on the performance and convergence of Algorithm \ref {ALG1}.
First, the following theorem holds:
\begin{theorem}\label{thm:seido}
Let   $\epsilon^{(1)}_1,\ldots, \epsilon^{(1)}_d , \epsilon^{(2)}$ be positive numbers, and let $\epsilon = \min\{ \epsilon^{(1)}_1,\ldots, \epsilon^{(1)}_d , \epsilon^{(2)} \}$. 
For any 
$\delta \in (0,1)$, define $\beta_t = 2\log ( (d+1) |\mathcal{X}| \pi^2 t^2/(6 \delta) ) $
, $\gamma_{t} = 2d^2 \log (d^2 (d+1) |\mathcal{X}| \pi^2 t^2 /(6 \delta ))$ and
\begin{align}
\eta_t = \max \{ \max_{\bx \in \mathcal{X}} 2\beta^{1/2}_{t} \sigma^{(1)}_{t,1} (\bx), \ldots , \max_{ \bx \in \mathcal{X}} 2\beta^{1/2}_{t} \sigma^{(1)}_{t,d} (\bx), 
\max_{\bx \in \mathcal{X}} \gamma^{1/2}_{t} \varsigma^{(2)}_t (\bx)  \}.\nonumber 
\end{align}
Then, 
Algorithm \ref{ALG1} completes classification after at least the minimum positive integer $T$ trials that satisfy the following inequality 
$\eta^2_T \leq\epsilon^2$. 
Moreover, with probability at least 
 $1-\delta$, for any $t \geq 1$, $\bx \in \mathcal{X}$ and $i \in [d]$ it holds that 
$$
\bx \in \widehat{S}_t \Rightarrow -\epsilon^{(1)}_i < f^{(1)}_i (\bx) < \epsilon^{(1)}_i \land \ \lambda (\bx) > -\epsilon^{(2)} 
$$ and 
$$
\bx \in \widehat{\bar{S}}_t \Rightarrow   f^{(1)}_i (\bx) \neq 0 \lor \ \lambda (\bx) < \epsilon^{(2)} .
$$
\end{theorem}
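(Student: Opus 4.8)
I would prove the two assertions separately. The stopping-time claim is purely deterministic: the key is to show that as soon as $\eta_t^2\le\epsilon^2$ one has $U_t=\emptyset$, after which, since $\widehat S_t$ and $\widehat{\bar S}_t$ only grow, Algorithm~\ref{ALG1} halts no later than the smallest such index $T$. Fix $\bx\in\mathcal{X}$; by the definition of $\eta_t$ the interval $Q^{(1)}_{t,i}(\bx)$ has width $2\beta_t^{1/2}\sigma^{(1)}_{t,i}(\bx)\le\eta_t\le\epsilon\le\epsilon^{(1)}_i$ and $Q^{(2)}_t(\bx)$ has width $2\gamma_t^{1/2}\varsigma^{(2)}_t(\bx)\le2\eta_t\le2\epsilon\le2\epsilon^{(2)}$. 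A short case check on interval positions then gives $\bx\in G^{(1)}_{t,i}\cup\bar G^{(1)}_{t,i}$ for every $i$ and $\bx\in H^{(2)}_t\cup\bar H^{(2)}_t$: being outside $G^{(1)}_{t,i}\cup\bar G^{(1)}_{t,i}$ forces $Q^{(1)}_{t,i}(\bx)$ to straddle $0$ and to reach $-\epsilon^{(1)}_i$ or $+\epsilon^{(1)}_i$, hence width $>\epsilon^{(1)}_i$; being outside $H^{(2)}_t\cup\bar H^{(2)}_t$ forces $Q^{(2)}_t(\bx)$ to reach both $-\epsilon^{(2)}$ and $+\epsilon^{(2)}$, hence width $\ge2\epsilon^{(2)}$; both are incompatible with the widths just bounded. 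Thus any $\bx\in U_t$ would avoid $\bar H^{(2)}_t$ and every $\bar G^{(1)}_{t,i}$, so it would lie in $H^{(2)}_t\cap\bigcap_i G^{(1)}_{t,i}$ and Algorithm~\ref{ALG1} would have moved it into $\widehat S_t$ --- a contradiction, so $U_t=\emptyset$.

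For the second assertion I would run a Gaussian-tail-plus-union-bound argument in the style of \cite{Srinivas:2010:GPO:3104322.3104451,Gotovos:2013:ALL:2540128.2540322}, adapted to the derivative GPs. Conditioning on the observations $y_1,\dots,y_t$ keeps $f^{(1)}_i(\bx)$ and each $f^{(2)}_{jk}(\bx)$ Gaussian with exactly the posterior parameters recorded in the Preliminaries (the adaptivity of the design being harmless, since we observe only noisy function values). By $\PR(|Z|\ge c)\le e^{-c^2/2}$ for $Z\sim\mathcal{N}(0,1)$, for each fixed $t,\bx,i$ the deviation $|f^{(1)}_i(\bx)-\mu^{(1)}_{t,i}(\bx)|\ge\beta_t^{1/2}\sigma^{(1)}_{t,i}(\bx)$ has probability at most $e^{-\beta_t/2}=6\delta/((d+1)|\mathcal{X}|\pi^2t^2)$, and for each fixed $t,\bx,j,k$ the deviation $|f^{(2)}_{jk}(\bx)-\mu^{(2)}_{t,jk}(\bx)|\ge(\gamma_t^{1/2}/d)\sigma^{(2)}_{t,jk}(\bx)$ has probability at most $e^{-\gamma_t/(2d^2)}=6\delta/(d^2(d+1)|\mathcal{X}|\pi^2t^2)$. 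Summing the former over $i\in[d]$, $\bx\in\mathcal{X}$, $t\ge1$ contributes $d\delta/(d+1)$ and the latter over the $d^2$ pairs $(j,k)$, $\bx\in\mathcal{X}$, $t\ge1$ contributes $\delta/(d+1)$ (using $\sum_{t\ge1}t^{-2}=\pi^2/6$), so the event $E$ that none of these deviations occurs has $\PR(E)\ge1-\delta$.

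On $E$, fix $t$ and $\bx$. Then $f^{(1)}_i(\bx)$ lies strictly inside $Q^{(1)}_{t,i}(\bx)$ for all $i$; and, applying Weyl's eigenvalue-perturbation inequality to the symmetric matrices $H(\bx)=(f^{(2)}_{jk}(\bx))_{j,k}$ and $M_t(\bx)=(\mu^{(2)}_{t,jk}(\bx))_{j,k}$ together with $\|\cdot\|_{\mathrm{op}}\le\|\cdot\|_F$, we get $|\lambda(\bx)-\lambda_t(\bx)|\le(\sum_{j,k}(f^{(2)}_{jk}(\bx)-\mu^{(2)}_{t,jk}(\bx))^2)^{1/2}\le\gamma_t^{1/2}\varsigma^{(2)}_t(\bx)$, i.e. $\lambda(\bx)\in Q^{(2)}_t(\bx)$. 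The two implications then follow by unwinding the definitions. If $\bx\in\widehat S_t=H^{(2)}_t\cap\bigcap_i G^{(1)}_{t,i}$, then $-\epsilon^{(1)}_i<l^{(1)}_{t,i}(\bx)<f^{(1)}_i(\bx)<u^{(1)}_{t,i}(\bx)<\epsilon^{(1)}_i$ for every $i$ and $\lambda(\bx)\ge l^{(2)}_t(\bx)>-\epsilon^{(2)}$. If $\bx\in\widehat{\bar S}_t=\bar H^{(2)}_t\cup\bigcup_i\bar G^{(1)}_{t,i}$, then either $\bx\in\bar H^{(2)}_t$, giving $\lambda(\bx)\le u^{(2)}_t(\bx)<\epsilon^{(2)}$, or $\bx\in\bar G^{(1)}_{t,i}$ for some $i$, in which case $Q^{(1)}_{t,i}(\bx)\subseteq[0,\infty)$ or $\subseteq(-\infty,0]$, so the strict containment gives $f^{(1)}_i(\bx)>0$ or $f^{(1)}_i(\bx)<0$, hence $f^{(1)}_i(\bx)\ne0$.

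The routine part is the union bound over $\mathcal{X}$ and $t$; the step I expect to need the most care is the Hessian. One must choose $\gamma_t$ with the extra $d^2$ factor and work with $\varsigma^{(2)}_t(\bx)=\max_{j,k}\sigma^{(2)}_{t,jk}(\bx)$ precisely so that Weyl's inequality turns the $d^2$ coordinate-wise confidence bounds on the Hessian entries into one valid CI $Q^{(2)}_t(\bx)$ for the minimum eigenvalue, and one must apportion the $\delta$-budget consistently --- $d$ shares for the gradient coordinates and one for the eigenvalue, each further split across $t$ by $\sum_t t^{-2}=\pi^2/6$ --- so that the failure probabilities add to exactly $\delta$. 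The termination half is then elementary interval bookkeeping, the only delicate point being the non-generic threshold case $\eta_T=\epsilon=\epsilon^{(2)}$, where one should use a strict stopping inequality or a separate argument for the Hessian test.
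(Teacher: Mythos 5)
Your proposal is correct and follows essentially the same route as the paper: a deterministic interval-width argument for termination, Srinivas-style Gaussian tail bounds with the $\delta$-budget split as $d/(d+1)$ over the gradient coordinates and $1/(d+1)$ over the $d^2$ Hessian entries, and a conversion of the entrywise Hessian bounds into a confidence interval for $\lambda(\bx)$ carrying the factor $d$ (you use Weyl plus the Frobenius norm where the paper bounds $\inf_{\|\bm a\|=1}\bm a^\top \bm Z_t(\bx)\bm a$ directly via $(|a_1|+\cdots+|a_d|)^2\le d$, but the two computations give the identical constant $\gamma_t^{1/2}\varsigma^{(2)}_t(\bx)$). You also correctly flag the borderline equality case in the stopping criterion, which the paper's own proof glosses over.
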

\begin{proof}
First, when the inequality on $\eta_T $ holds, 
the lengths of  $Q^{(1)}_{T,i} (\bx)$ and $Q^{(2)}_{T} (\bx)$ are less than 
$\epsilon^{(1)}_i$ and $2\epsilon^{(2)}$, respectively. 
Hence, from  classification rules,  all the points are classified. 
Next, noting that the posterior of $f^{(1)}_i$ is also GP, 
from Lemma 5.1 in  
\cite{Srinivas:2010:GPO:3104322.3104451}, 
 with probability at least (w.p.a.l.) $1-(d+1)^{-1} \delta$ 
it holds that 
$f^{(1)}_i (\bx) \in Q^{(1)}_{t,i} (\bx)$ for any $i \in [d]$, $t \geq 1$ and $\bx \in \mathcal{X}$.
This implies that w.p.a.l. 
$1-d(d+1)^{-1} \delta$ it holds that $f^{(1)}_i (\bx) \in Q^{(1)}_{t,i} (\bx)$ for all 
$i \in [d]$. 
Similarly, by using the same argument for $f^{(2)}_{jk}$, w.p.a.l. 
$1-d^{-2}(d+1)^{-1} \delta$ the inequality 
$|f^{(2)}_{jk} (\bx) - \mu^{(2)}_{t,jk} (\bx) | \leq \tilde{\gamma}^{1/2}_{t} \sigma^{(2)}_{t,jk} (\bx)$ 
holds for any $j,k \in [d]$, $t \geq1$ and $\bx \in \mathcal{X}$, where $\tilde{\gamma}^{1/2}_t = d^{-1} \gamma^{1/2}_t$. 
Thus, w.p.a.l. $1-(d+1)^{-1} \delta$  above inequalities are simultaneously satisfied. 
Here,  denote the Hessian matrix of $\bx$ by ${\bm{H}}(\bx) = {\bm{M}}_t (\bx) +{\bm{Z}}_t (\bx)$, where 
 the $(j,k)^{\rm th}$ element of ${\bm{M}}_t (\bx)$ is $\mu^{(2)}_{t,jk} (\bx)$ and that of 
 ${\bm{Z}}_{t} (\bx)$  is normal distribution with mean $0$ and variance  $\{\sigma^{(2)}_{t,jk} (\bx) \}^2$.
Therefore, w.p.a.l. $1-(d+1)^{-1} \delta$
 the absolute value of each element in  
${\bm{Z}}_t(\bx)$ is less than $\tilde{\gamma}^{1/2}_{t} \varsigma^{(2)}_t (\bx)$. 
Then, for any ${\bm{a}}$ satisfying  $\| {\bm{a}} \| =1$, it holds that 
\begin{align*}
\hspace{-6mm} \lambda (\bx) = \inf_{{\bm{a}}} {\bm{a}}^\top {\bm{H}} (\bx) {\bm{a}} 
& \geq \inf_{ {\bm{a}} } {\bm{a}}^\top {\bm{M}}_t (\bx) {\bm{a}}   +  \inf_{ {\bm{a}} } {\bm{a}}^\top {\bm{Z}}_t (\bx) {\bm{a}}   \\
&= \lambda_{t} (\bx) + \inf_{ {\bm{a}} } {\bm{a}}^\top {\bm{Z}}_t (\bx) {\bm{a}}  .
\end{align*}
Furthermore, noting that $( |a_1|+ \cdots +|a_d| )^2 \leq d$ and 
$$| {\bm{a}}^\top {\bm{Z}}_t (\bx) {\bm{a}}  | \leq \tilde{\gamma}^{1/2}_{t} \varsigma^{(2)}_t (\bx) 
 ( |a_1|+ \cdots +|a_d| )^2, 
$$
 the inequality $| {\bm{a}}^\top {\bm{Z}}_t (\bx) {\bm{a}}  | \leq \tilde{\gamma}^{1/2}_{t} \varsigma^{(2)}_t (\bx) d = \gamma^{1/2}_t \varsigma^{(2)}_t (\bx)$ holds. 
Hence, we have  
  $\lambda (\bx) \geq \lambda_{t} (\bx)- \gamma^{1/2}_t \varsigma^{(2)}_t (\bx)$. 
Similarly, we also have 
 $\lambda (\bx) \leq \lambda_{t} (\bx)+ \gamma^{1/2}_t \varsigma^{(2)}_t (\bx)$. 
This implies that w.p.a.l. 
 $1-(d+1)^{-1} \delta$ it holds that $\lambda (\bx) \in Q^{(2)}_{t} (\bx)$. 
Therefore, w.p.a.l. 
 $1-\delta$ it holds that $f^{(1)}_i(\bx) \in Q^{(1)}_{t,i} (\bx)$, $i \in [d]$ and 
$\lambda (\bx) \in Q^{(2)}_{t} (\bx)$. 
\end{proof}
Next, we provide an upper bound of $\eta_t$. Let 
\begin{align*}
\tilde{f}^{(1)}_{t,i} (\bx; \zeta) &= f^{(1)}_{t,i} (\bx) - \frac { f_t (\bx +\zeta {\bm{e}}_i ) - f_t(\bx)}{\zeta}, \\
\tilde{f}^{(2)}_{t,jk} (\bx; \zeta) &= f^{(2)}_{t,jk} (\bx) - \frac { f_t (\bx +\zeta  {\bm{e}}_{jk} ) - f_t(\bx +\zeta {\bm{e}}_j)}{\zeta ^2 } 
   -  \frac { f _t(\bx +\zeta  {\bm{e}}_{k} ) - f_t(\bx )}{\zeta ^2 }, 
\end{align*}  
where ${\bm{e}}_{jk} ={\bm{e}}_j + {\bm{e}}_k $ and ${\bm{e}}_{i} $ is a $d$-dimensional vector whose $i$th element is one and  remainders are zeros. 
Assume the following conditions:
\begin{enumerate}
\renewcommand{\labelenumi}{(A\arabic{enumi}).}
\item There exists a positive constant $A_0$ such that for any $\zeta$ satisfying $|\zeta| <A_0$,  $(\bx + \zeta {\bm{e}}_i ) \in D $   and $(\bx +\zeta {\bm{e}}_{jk} ) \in D $ for any $\bx \in \mathcal{X}$ and $i,j,k \in [d]$. \label{enu:A2}
\item  There exists a positive constant $C_0$ such that 
$$
\V [ \tilde{f}^{(1)}_{0,i} (\bx; \zeta) ] \leq | \zeta |  C_0 , \quad  \V [ \tilde{f}^{(2)}_{0,jk} (\bx; \zeta) ] \leq | \zeta |  C_0 ,
$$
 for any $\bx \in \mathcal{X}$, $i,j,k \in [d]$ and $\zeta$ satisfying $| \zeta | < A_0$, where $A_0$ is given in the assumption (A\ref{enu:A2}). \label{enu:A1}
\end{enumerate}
Finally, let ${\rm{I}} ({\bm{y}}_t;f) $ be a mutual information between ${\bm{y}}_t $ and $f$. 
Also let $\kappa _t $ be a maximum information gain after $t$ rounds on $D$, defined by 
$
\kappa_t = \max_{A \subset D;|A|=t}  {\rm{I}} ({\bm{y}}_A;f_A).
$
Then, the following theorem holds:
\begin{theorem}\label{thm:upper-bound}
Let $C_1 = 2 \sigma^2 C_2$, 
$C_2 = \sigma^{-2} / \log (1+ \sigma^{-2} )$,  $R_t = r_1 + \cdots + r_t$, $\tilde{C}_0 > C_0$  and 
\begin{align*}
\tilde{\eta}^2_t = \max \left \{ 
\frac{ 3200 \tilde{C}^2_0 C_1 \beta^3_t \kappa_t}{R_t \epsilon ^{4} }, 
\frac{1250  \tilde{C}^4_0 C_1 \gamma^5_{t} \kappa_t}{R_t \epsilon ^{8}}
\right \}. 
\end{align*}
Then, it holds that 
$
\eta^2_t  \leq \tilde{\eta}^2_t +    \frac{4}{5} \epsilon ^2 .
$
\end{theorem}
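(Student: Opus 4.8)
The plan is to bound every one of the $d+1$ quantities inside the maximum defining $\eta_t$ by $\tilde{\eta}_t^2+\tfrac{4}{5}\epsilon^2$; since the two expressions inside the $\max$ in $\tilde{\eta}_t^2$ account separately for the gradient entries and the Hessian entry, it suffices to match the $d$ gradient entries against the first and the Hessian entry against the second. For a fixed $\bx\in\mathcal X$, index $i$, and step size $\zeta$ with $|\zeta|<A_0$, I would begin from the identity $f^{(1)}_{t,i}(\bx)=\zeta^{-1}\bigl(f_t(\bx+\zeta{\bm{e}}_i)-f_t(\bx)\bigr)+\tilde{f}^{(1)}_{t,i}(\bx;\zeta)$ obtained by rearranging the definition of $\tilde{f}^{(1)}_{t,i}$, take posterior standard deviations on both sides, and apply Minkowski's inequality to get $\sigma^{(1)}_{t,i}(\bx)\le|\zeta|^{-1}\sqrt{\V[f_t(\bx+\zeta{\bm{e}}_i)-f_t(\bx)]}+\sqrt{\V[\tilde{f}^{(1)}_{t,i}(\bx;\zeta)]}$, the shifted point belonging to $D$ by (A\ref{enu:A2}).

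Two estimates feed into this. The term $\tilde{f}^{(1)}_{t,i}(\bx;\zeta)$ is a fixed linear functional of the GP, so conditioning on the observations does not increase its variance, giving $\V[\tilde{f}^{(1)}_{t,i}(\bx;\zeta)]\le\V[\tilde{f}^{(1)}_{0,i}(\bx;\zeta)]\le|\zeta|C_0<|\zeta|\tilde{C}_0$ by (A\ref{enu:A1}). A second use of Minkowski yields $\sqrt{\V[f_t(\bx+\zeta{\bm{e}}_i)-f_t(\bx)]}\le\sigma_t(\bx+\zeta{\bm{e}}_i)+\sigma_t(\bx)\le2\sigma_t^{\max}$ with $\sigma_t^{\max}:=\max_{\bx\in D}\sigma_t(\bx)$, and I would control $\sigma_t^{\max}$ by a maximum-information-gain argument: on the $R_t$ rounds with $r_s=1$ the acquisition function is pure uncertainty sampling, so $\bx_s$ maximizes the current posterior variance; combined with the monotone decrease of the posterior variance as data accrue this gives $(\sigma_t^{\max})^2\le R_t^{-1}\sum_{s=1}^t\sigma^2_{s-1}(\bx_s)$, and the standard estimate $\sum_{s=1}^t\sigma^2_{s-1}(\bx_s)\le C_1\kappa_t$ (from $x\le C_2\log(1+x)$ on $[0,\sigma^{-2}]$ and $\tfrac{1}{2}\sum_s\log(1+\sigma^{-2}\sigma^2_{s-1}(\bx_s))={\rm I}({\bm{y}}_t;f)\le\kappa_t$) then gives $(\sigma_t^{\max})^2\le C_1\kappa_t/R_t$. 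Putting these together with $(a+b)^2\le2a^2+2b^2$ produces $\bigl(2\beta_t^{1/2}\sigma^{(1)}_{t,i}(\bx)\bigr)^2\le 32\beta_t C_1\kappa_t/(R_t\zeta^2)+8\beta_t\tilde{C}_0|\zeta|$, and the choice $\zeta=\epsilon^2/(10\tilde{C}_0\beta_t)$ makes the second summand exactly $\tfrac{4}{5}\epsilon^2$ and the first exactly $3200\tilde{C}_0^2 C_1\beta_t^3\kappa_t/(R_t\epsilon^4)\le\tilde{\eta}_t^2$.

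The Hessian entry is handled in the same way: rearranging the definition of $\tilde{f}^{(2)}_{t,jk}$ expresses $f^{(2)}_{t,jk}(\bx)$ as its second-order finite difference (a combination of four values of $f_t$ at points of $D$ by (A\ref{enu:A2}), hence bounded by $4\sigma_t^{\max}/\zeta^2$ after Minkowski) plus $\tilde{f}^{(2)}_{t,jk}(\bx;\zeta)$ (variance $\le|\zeta|\tilde{C}_0$ by the same conditioning argument and (A\ref{enu:A1})); squaring gives $\bigl(\gamma_t^{1/2}\sigma^{(2)}_{t,jk}(\bx)\bigr)^2\le 32\gamma_t C_1\kappa_t/(R_t\zeta^4)+2\gamma_t\tilde{C}_0|\zeta|$, and $\zeta=2\epsilon^2/(5\tilde{C}_0\gamma_t)$ produces $\tfrac{4}{5}\epsilon^2+1250\tilde{C}_0^4 C_1\gamma_t^5\kappa_t/(R_t\epsilon^8)$. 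Since this bound is uniform in $j,k$ and $\varsigma^{(2)}_t(\bx)=\max_{j,k}\sigma^{(2)}_{t,jk}(\bx)$, it controls the Hessian entry as well, and taking the maximum over $\bx\in\mathcal X$ and over the $d+1$ entries gives $\eta_t^2\le\tilde{\eta}_t^2+\tfrac{4}{5}\epsilon^2$.

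I expect the main obstacles to be: (i) the estimate $(\sigma_t^{\max})^2\le C_1\kappa_t/R_t$, which must carefully combine the $r_t$-switch in the algorithm, monotonicity of GP posterior variances, and the Srinivas-type information-gain bound, and which forces the maximum to be over all of $D$ rather than only $\mathcal X$ so that the shifted evaluation points are covered; (ii) justifying that conditioning does not increase the variance of the finite-difference error, so that (A\ref{enu:A1}) applies at step $t$ and not only at step $0$; and (iii) checking that the chosen step sizes $\zeta$ are admissible, i.e.\ $|\zeta|<A_0$, over the relevant range of $t$.
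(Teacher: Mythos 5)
Your proposal is correct and follows essentially the same route as the paper's proof: the same finite-difference-plus-remainder decomposition, the same variance bound $\V[X+Y]\le 2\V[X]+2\V[Y]$ (you reach it via Minkowski rather than $2\Cov\le\V+\V$, which is equivalent), the same conditioning argument for $\V[\tilde f^{(1)}_{t,i}]\le\V[\tilde f^{(1)}_{0,i}]$, the same information-gain bound restricted to the uncertainty-sampling rounds, and the same choices of $\zeta$. The one item you flag but leave open, admissibility $|\zeta|<A_0$, is resolved in the paper simply by choosing $\tilde C_0$ large enough that $\epsilon^2(10\beta_1\tilde C_0)^{-1}<A_0$ and $(2/5)\epsilon^2(\gamma_1\tilde C_0)^{-1}<A_0$, which suffices for all $t$ by monotonicity of $\beta_t$ and $\gamma_t$.
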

The proof is given in Appendix. 
In addition, Srinivas t al. 
\cite{Srinivas:2010:GPO:3104322.3104451}  
 provided the order of $\kappa_t$ for certain kernels under mild conditions. 
For example, in Gaussian kernel, its order is $\mathcal{O} ((\log t)^{d+1}) $. 
Hence, if we set $R_t =  \mathcal{O} (t) $, then $\tilde{\eta}^2_t $ converges to 0, i.e., 
 $\eta^2_T$ satisfies $\eta^2_T < \epsilon ^2$ for some $T$. 


\section{Numerical experiments}\label{sec5}
\begin{figure*}[t]
\begin{center}
\scalebox{1}{
 \begin{tabular}{cccc}
 \includegraphics[width=0.225\textwidth]{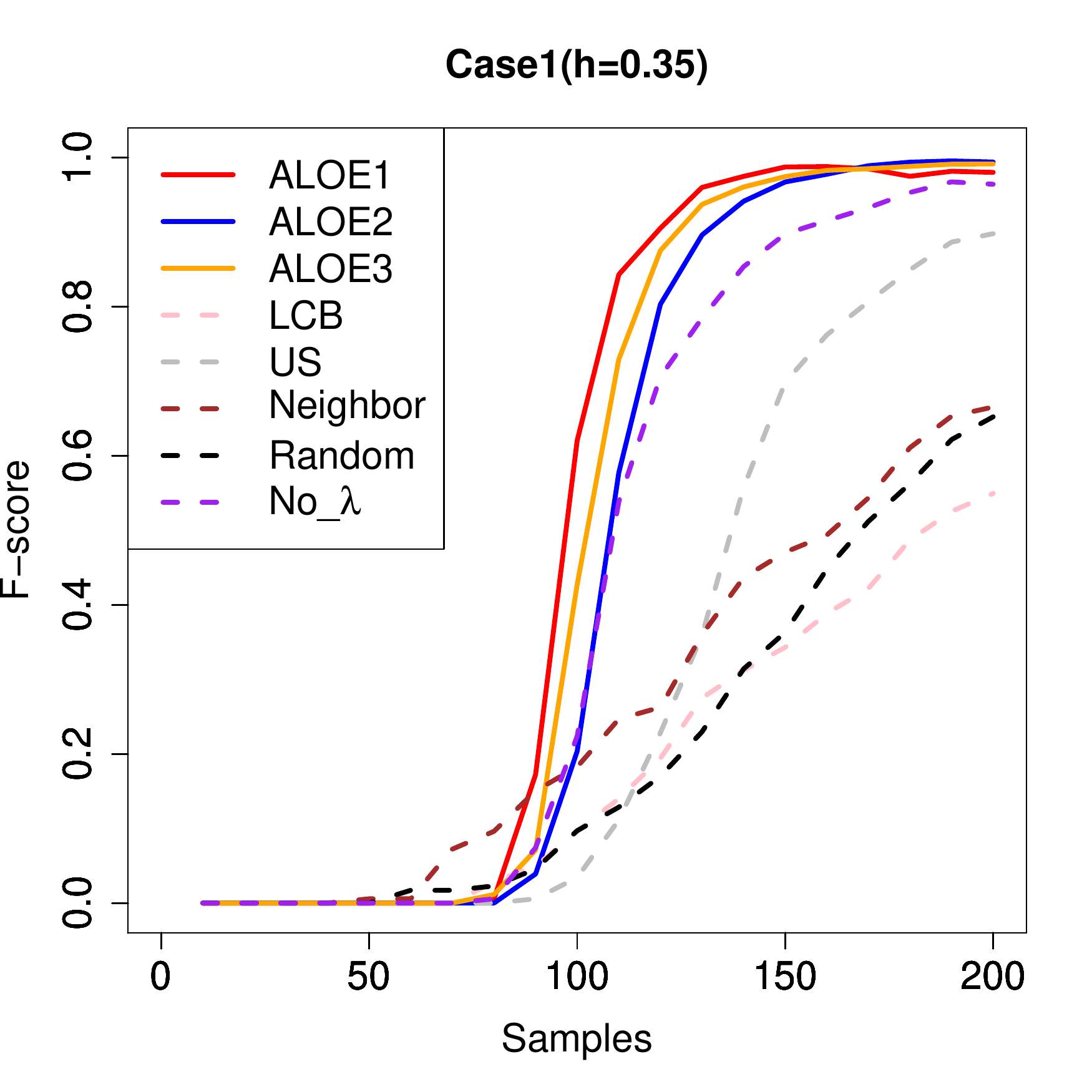} & 
 \includegraphics[width=0.225\textwidth]{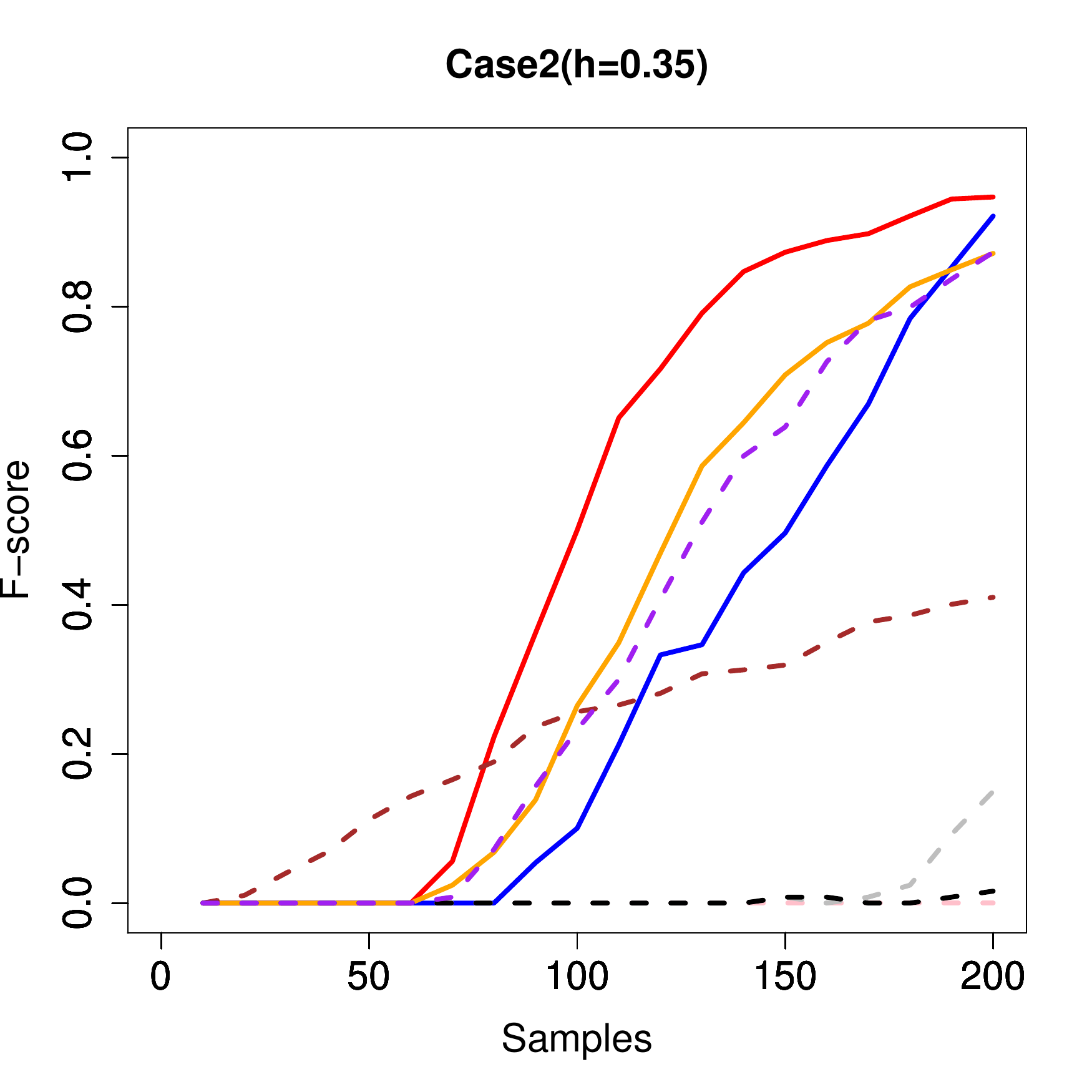} &
 \includegraphics[width=0.225\textwidth]{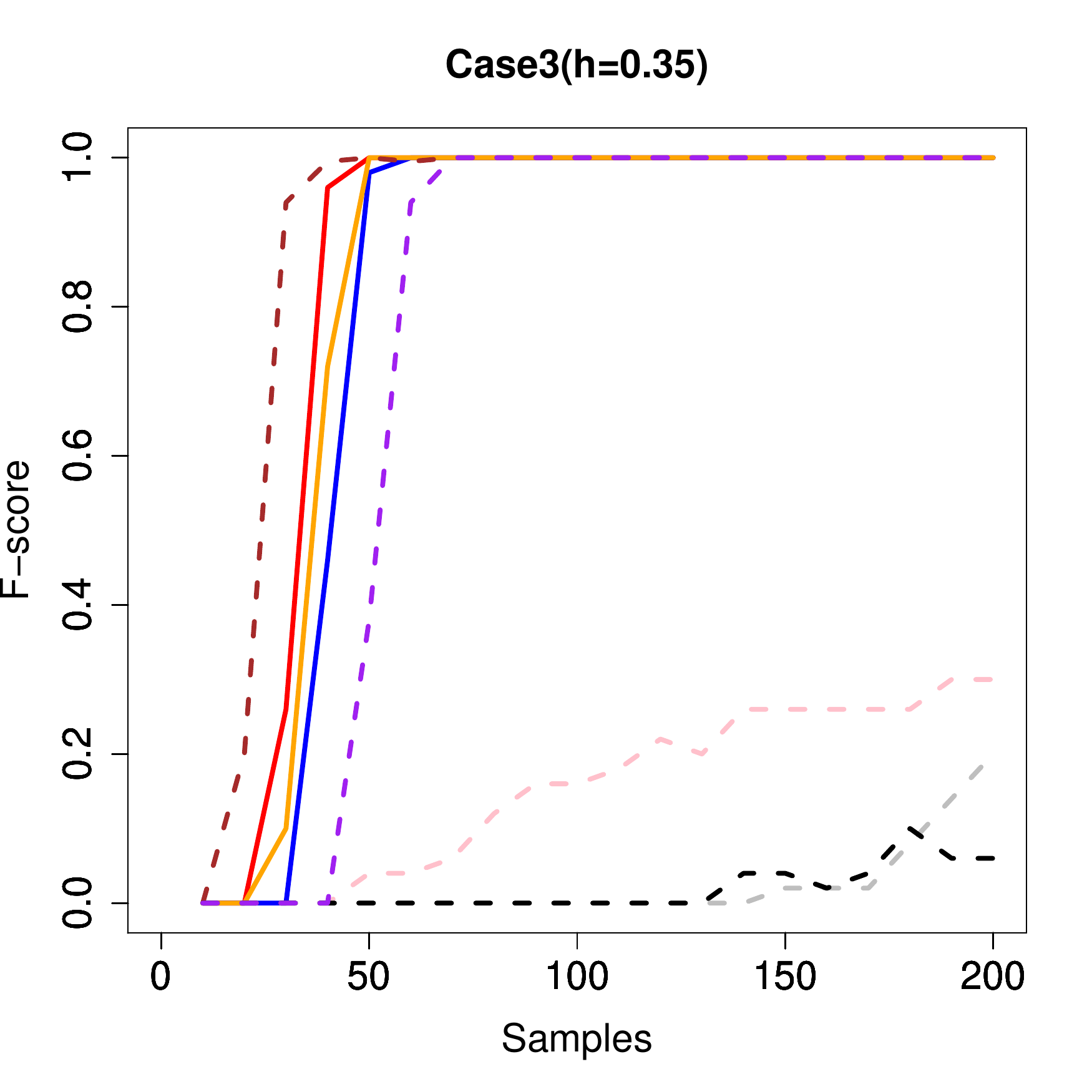} & 
 \includegraphics[width=0.225\textwidth]{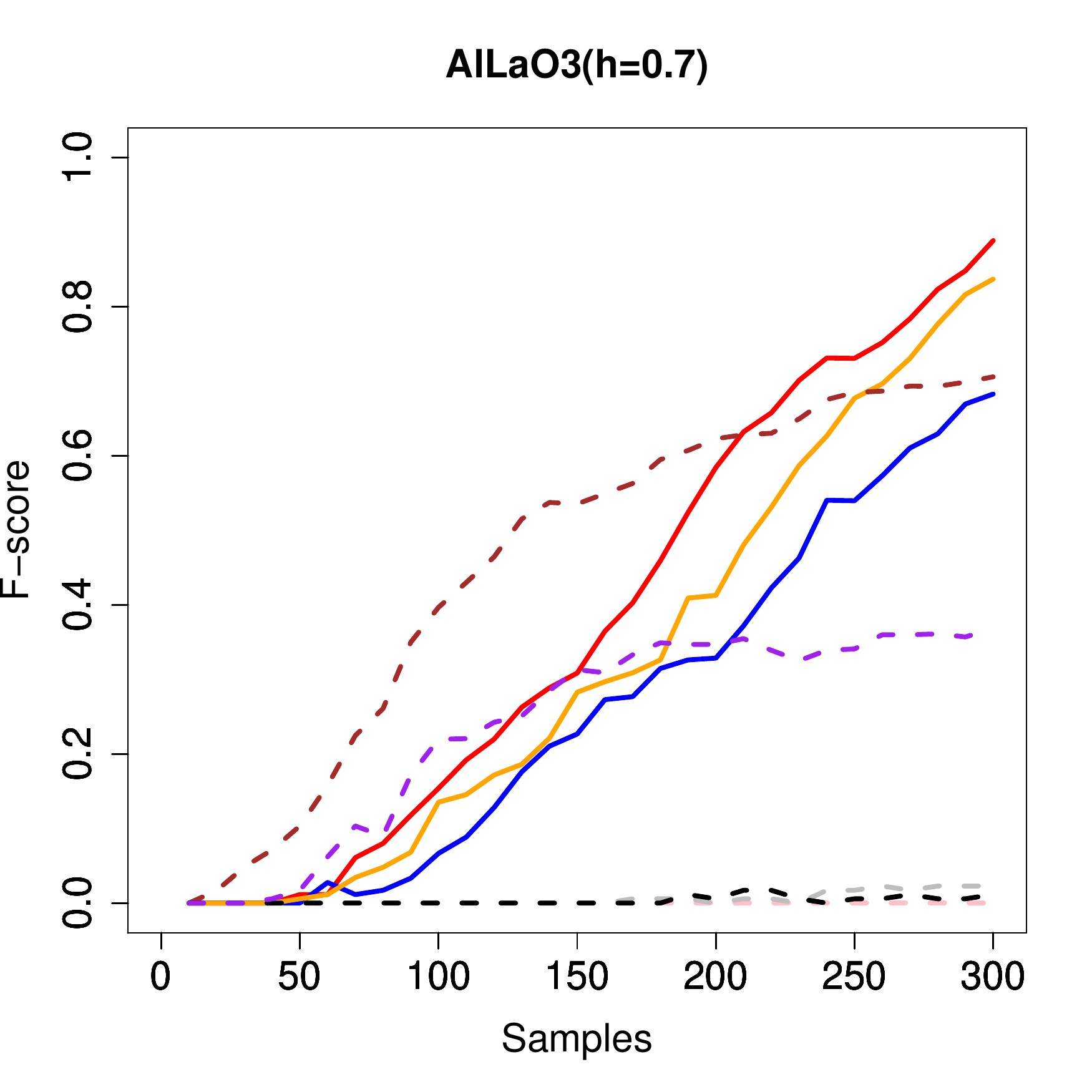}  \\
 \includegraphics[width=0.225\textwidth]{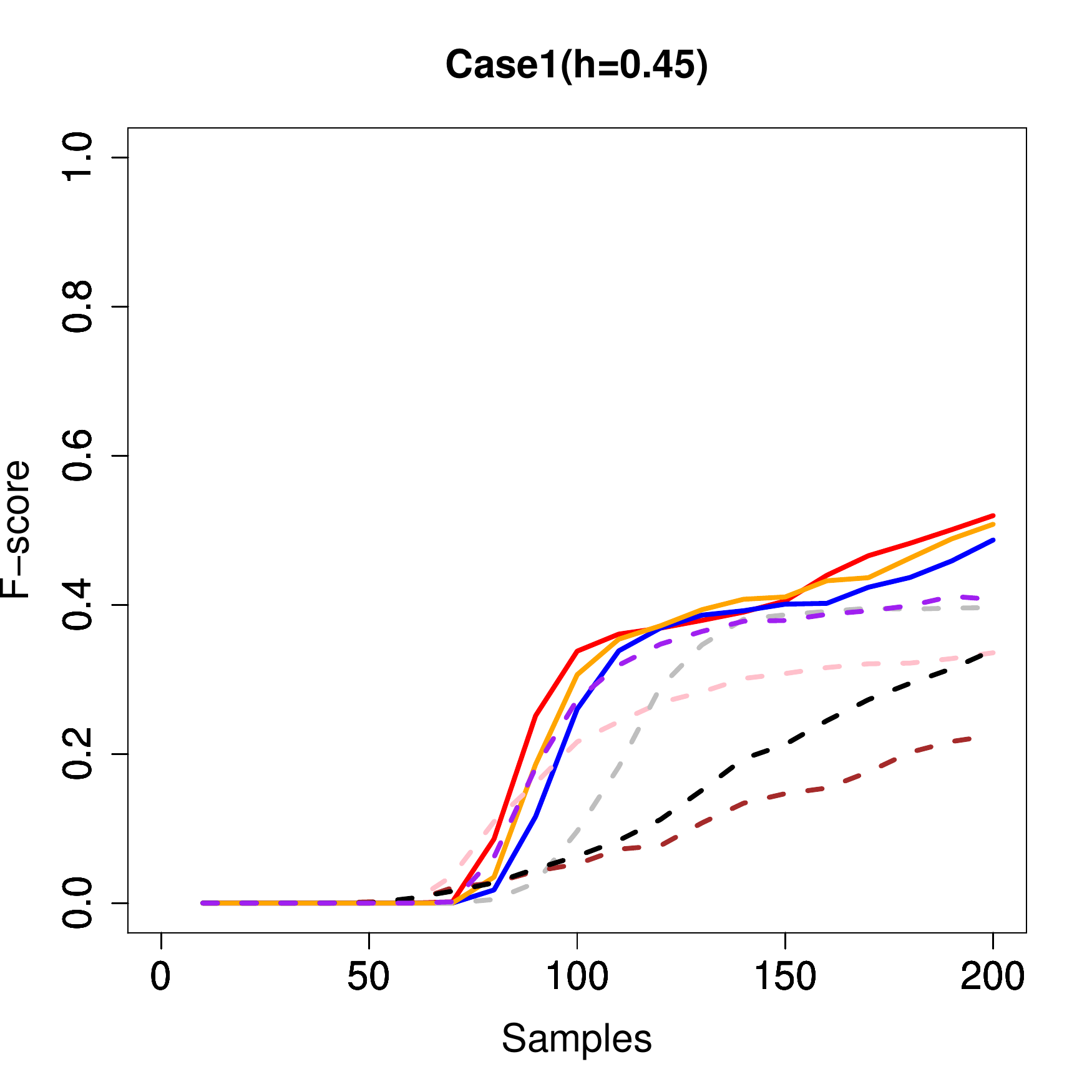} & 
 \includegraphics[width=0.225\textwidth]{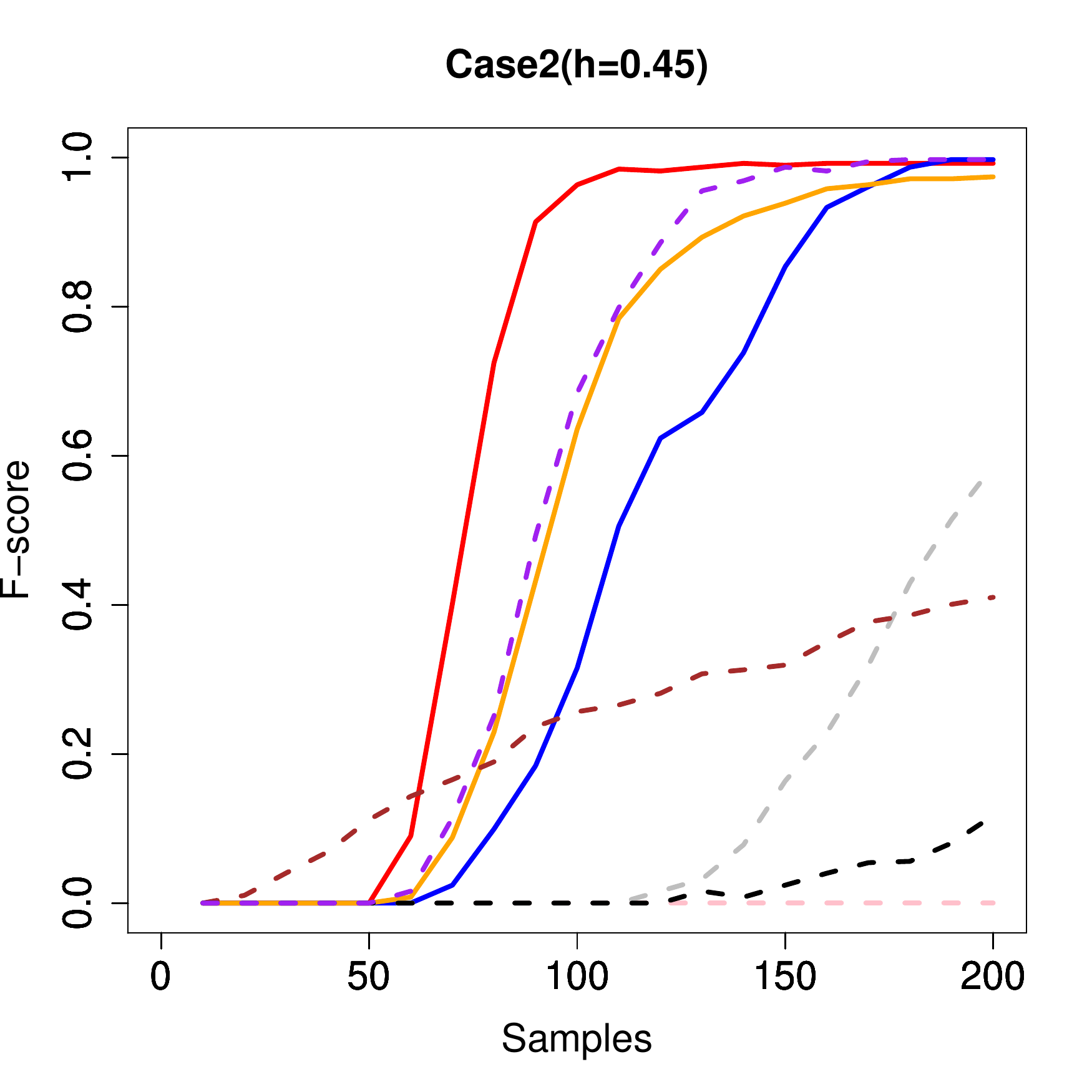} & 
 \includegraphics[width=0.225\textwidth]{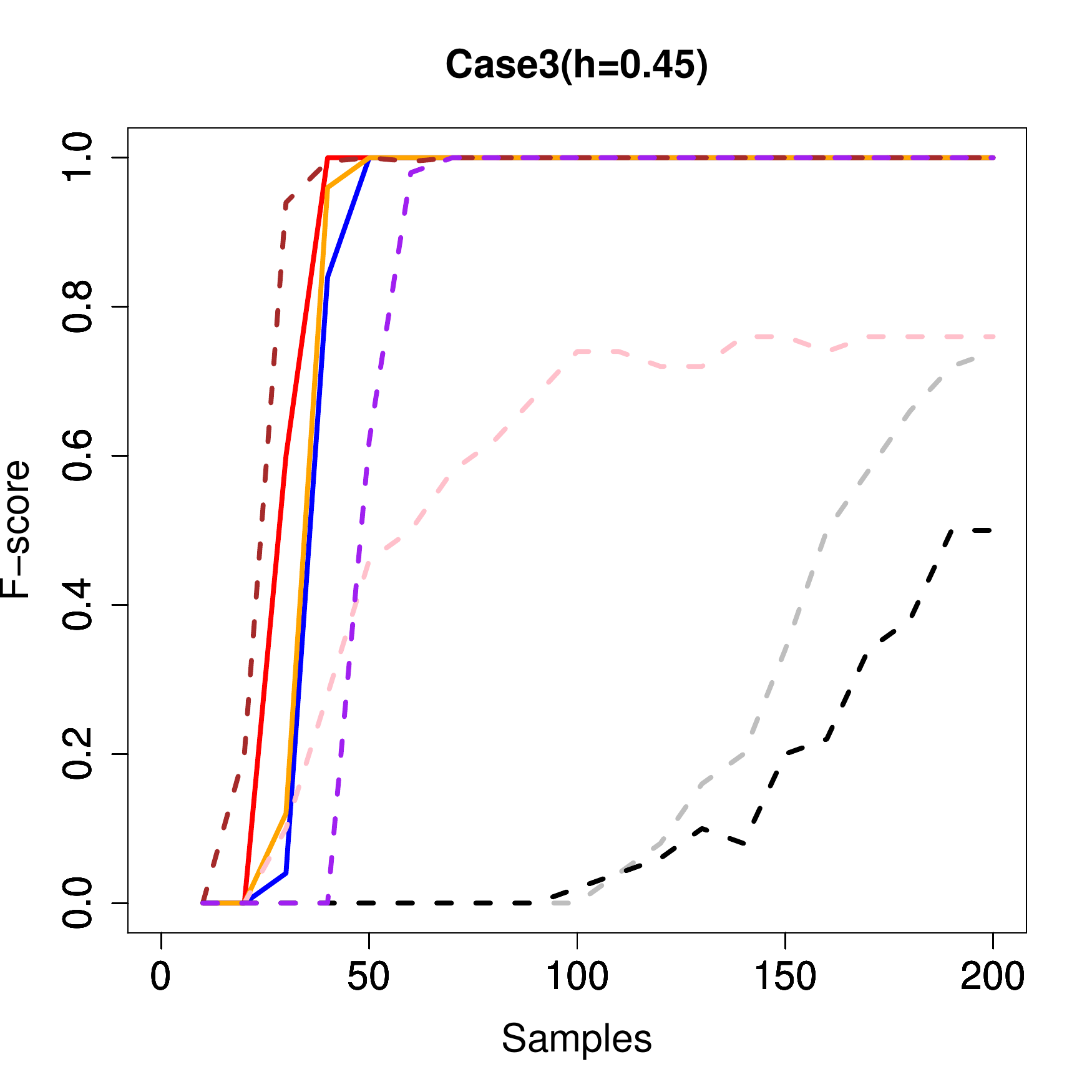} & 
 \includegraphics[width=0.225\textwidth]{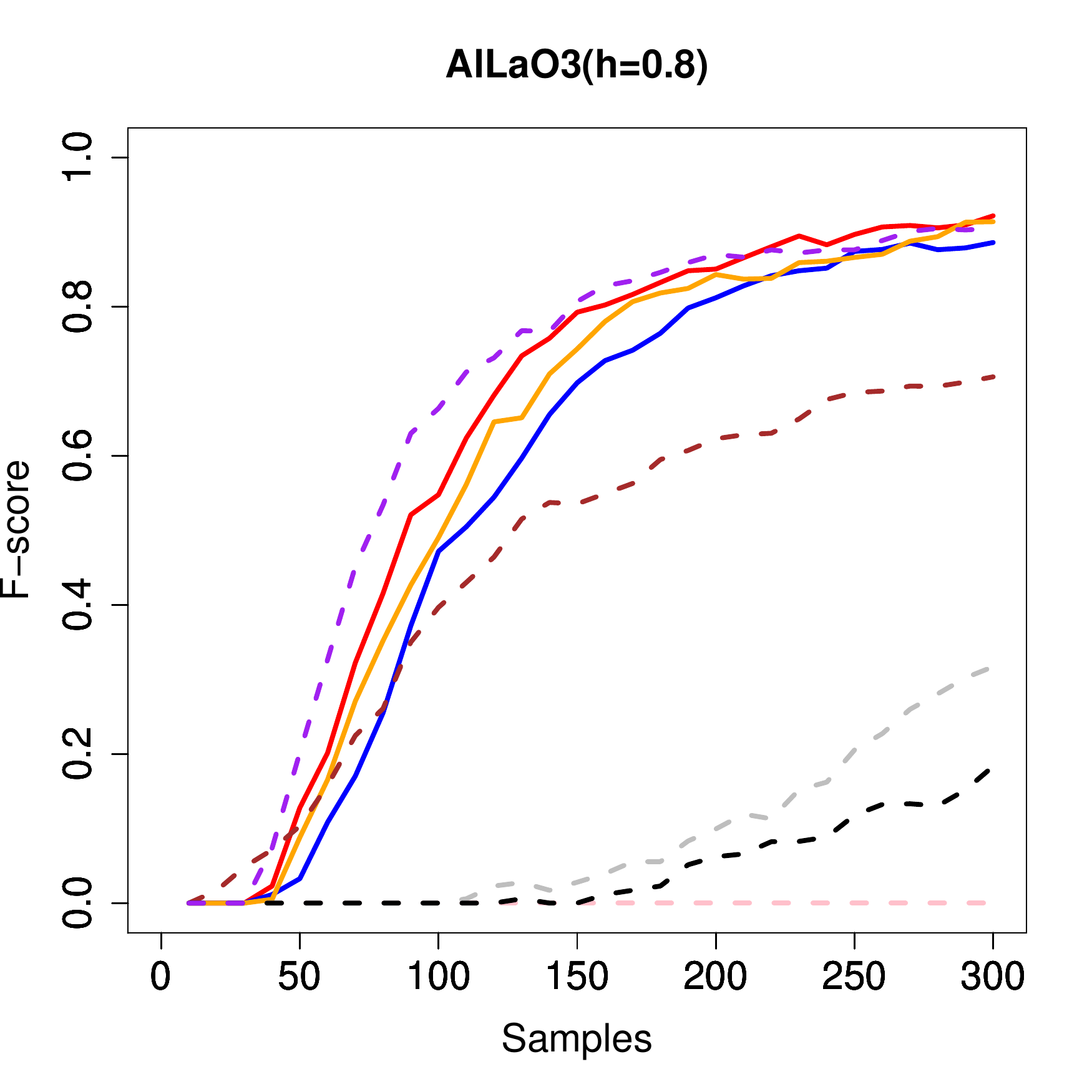}  
 \end{tabular}}
\end{center}
 \caption{Average F-score for each experiment. The bottom left plot indicates the result of real data experiments, while the other 7 plots indicate the results of synthetic function experiments.}
 \label{fig:fscore}
\end{figure*}
We confirm the performance of ALOE by numerical experiments.
We used F-score defined as $F= 2\cdot Pre \cdot Rec/(Pre+Rec)$, where 
precision $Pre$ and recall $Rec$ are given by
$$
Pre= | S \cap \widehat{S}_t | / | \widehat{S}_t |, \ 
Rec= | S \cap \widehat{S}_t | / | S |.
$$
Here, we compared the following seven AFs: 
\begin{description}
\item[(Random):] Random sampling.
\item[(US):] Uncertainty sampling, i.e., we set $r_t=1$ for any $t \geq 1$ in \eqref{eq:AF}.
\item[(LCB):] Lower confidence bound (LCB) $lcb(\bx)$, where $lcb(\bx) = \mu_t (\bx) -3  \sigma_t (\bx)$.
\item[(No\_$\lambda$):] AF for ALOE with $r_t=0$ for any $t \geq1$, and use $U_t= \mathcal{X} \setminus (\bigcap_{i=1}^d G^{(1)}_{t,i}  \cup \bigcup _{i=1}^d \bar{G}^{(1)}_{t,i} )$.
\item[(ALOE1):] AF for ALOE with $r_t=0$ for any $t \geq1$.
\item[(ALOE2):] AF for ALOE with $r_t=1$ if $t$ is a multiple of 5 and otherwise 0.
\item[(ALOE3):] AF for ALOE with $r_t=1$ if $t$ is a multiple of 10 and otherwise 0.
\end{description}
Furthermore, we consider the following as a competitor. 
Let $$
P_t (\bx;\alpha ) = \PR ( f_t (\bx) \leq f_t ( \bx^{(\alpha) }_{l,s} ),\  l \in [d], s \in \{1,-1\} )
$$  
where $ \bx^{(\alpha)}_{l,1}$ satisfies $\bx - \bx^{(\alpha)}_{l,1} = \alpha {\bm{e}}_l$ and 
 $ \bx^{(\alpha)}_{l,-1}$ satisfies $\bx - \bx^{(\alpha)}_{l,-1} = -\alpha {\bm{e}}_l$.
Then, if $\bx $ satisfies $P_t (\bx;0.3) >0.6$, $\bx$ is classified as $\widehat{S}_t$. 
When using this neighborhood based classification rule (Neighbor), we use $Nei_t (\bx) = \sigma^2_t (\bx) P_t (\bx;0.3 )$ 
as the acquisition function, and next input is selected by $\bx_{t+1} = \argmax  Nei_t (\bx)  $. 
In all the experiments , we use  Gaussian kernel 
$k(\bx,\bx')= \sigma^2_f \exp (- \| \bx - \bx' \|^2/L)$. 
Moreover, we assume that error variance $\sigma^2$ is known.

\subsection{Synthetic  function experiments}\label{exp:ex1}
We considered 2-dimensional synthetic  functions. 
Let $\mathcal{A}$ be a grid point set obtained by dividing the interval $[A,B]$ into 40 equal parts, and let 
 $D= \mathcal{A} \times \mathcal{A}$. Define 
$$
\mathcal{X}= \{ \bx=(x_1,x_2) \in D \ | \  x_1 \in [a,b], \ x_2 \in [a,b] \}.
$$
The following three cases were considered: 
\begin{description}
\item[(Case 1):] $f(x_1,x_2) = \sin (x_1) \cos (x_2)$, $A=-\pi/2$, $B=7 \pi /2 $, $a=0$, $b=3 \pi$,  $\sigma^2_f=1$, $L=4.5$. 
\item[(Case 2):] $f(x_1,x_2) =18+ \sum_{s=1}^2  \{ (1/4)x^4_s -(13/3)x^3_s +25x^2_s -56x_s  \} /3 $, 
 $A=-1$, $B=9$, $a=0$, $b=8  $,  $\sigma^2_f=2$, $L=3$. 
\item[(Case 3):] $f(x_1,x_2) = \sum_{s=1}^2   (x_s -4) ^2  $, 
 $A=-1$, $B=9$, $a=0$, $b=8  $,  $\sigma^2_f=2$, $L=3$. 
\end{description}
Furthermore, we set  $\sigma^2=0.005$,   $\beta^{1/2}_t= \gamma^{1/2}_{t} =3$, 
 $\epsilon^{(1)}_1=\epsilon^{(1)}_2 \equiv h \in \{0.35, 0.45 \}$ and  $\epsilon^{(2)}=0.1$. 
Here, At this time, one initial point was randomly determined, and based on each AF, function evaluations were sequentially done up to step 200. 
This was repeated 50 times, and the average of the F-score was calculated (Fig.\ref{fig:fscore}).
The results indicate that ALOE has better performance than other methods.

\subsection{Real data experiments}\label{exp:ex3}
We analyzed the  potential energy (PE) data in inorganic crystal AlLaO3.
The data includes 3-dimensional inputs $ \bx_i \in D $ corresponding to 3-dimensional coordinates and PE  
$ y ^ \ast _ i $, for $ i = 1, \ldots, 5832 $. 
Here, $D$ is given by  $D= \mathcal{A}^3$ and $\mathcal{A}$ is   
a grid point set obtained by dividing the interval $[0,r]$ into 17 equal parts, where  $r \approx 3.6$.
In this experiment, GP was first fitted using the whole data excluding outliers, and the posterior mean at each point is defined as the true function $ f (\bx) $.
We used this to calculate the energy at each point as $ y _ i = f (\bx_i) + \varepsilon_i $. 
Moreover, we defined
 $
\mathcal{X}= \{ \bx=(x_1,x_2,x_3) \in D \ | \  x_s \in [0.4,2], \ s=1,2,3 \}
$. 
 Furthermore, we set 
  $\sigma^2=0.01$, $L=2.5$, $\beta^{1/2}_{t}=4$, $ \gamma^{1/2}_{t} =1$,  
 $\epsilon^{(1)}_1=\epsilon^{(1)}_2=\epsilon^{(1)}_3 \equiv h \in \{0.7,0.8\}$ and  $\epsilon^{(2)}=1.2$. 
In addition, it is known that there are 6 local minimum points in $\mathcal{X}$ from the domain knowledge in material science.
Therefore, these six points are defined as the members of $S$.
Here, one initial point was randomly selected, and based on each AF, function evaluations was iterated up to step 300.
This was repeated 50 times, and the average of F-score was calculated (see the bottom right plot in Fig.\ref{fig:fscore}).
The results indicate that the performance of ALOE is better than other competitors as in the previous synthetic experiments.




\section{Conclusion}\label{sec6}
In this paper, we proposed an AL method called ALOE for enumerating local minima using GP derivatives.
From the theoretical results and numerical experiments, the usefulness of ALOE was confirmed.
%

\section*{Acknowledgments}
This work was partially supported by MEXT KAKENHI (17H00758, 16H06538), JST CREST (JPMJCR1302, JPMJCR1502), RIKEN Center for Advanced Intelligence Project, and JST support program for starting up innovation-hub on materials research by information integration initiative.







\bibliography{myref}
\bibliographystyle{plain}

\section*{Appendix}

\setcounter{section}{0}
\renewcommand{\thesection}{\Alph{section}}

\section{Proof of Theorem \ref{thm:upper-bound}}
From the definition of $\eta_t$, $\eta^2_t$ can be written by 
\begin{align}
\eta^2_t = \max  \{ \max_{\bx \in \mathcal{X}} 4 \beta _{t} \{ \sigma^{(1)}_{t,1} (\bx) \}^2, \ldots , \max_{ \bx \in \mathcal{X}} 4 \beta _{t}\{  \sigma^{(1)}_{t,d} (\bx) \}^2,  
\max_{\bx \in \mathcal{X}} \gamma _{t}  \{ \varsigma^{(2)}_t (\bx) \}^2 \}   .   \label{eq:etasquare}
\end{align}
Here, for any $i \in [d]$, $t  \geq 1$ and $\bx \in \mathcal{X}$, let 
\begin{align*}
\Delta f ^{(1)} _{t,i} (\bx ; \zeta ) = \frac { f_t (\bx +\zeta {\bm{e}}_i ) - f_t(\bx)}{\zeta}. 
\end{align*}
Then, it holds that 
\begin{align}
\{ \sigma^{(1)}_{t,i} (\bx) \}^2 = \V [ f^{(1)}_{t,i}  (\bx ) ] &= \V [ \Delta f ^{(1)} _{t,i} (\bx ; \zeta ) + \tilde{f}^{(1)}_{t,i} (\bx;\zeta) ]  \nonumber \\ 
&=  \V [ \Delta f ^{(1)} _{t,i} (\bx ; \zeta ) ]  
+2 {\rm{Cov}} [ \Delta f ^{(1)} _{t,i} (\bx ; \zeta ) ,\tilde{f}^{(1)}_{t,i} (\bx;\zeta) ]
+ \V[\tilde{f}^{(1)}_{t,i} (\bx;\zeta) ] \nonumber \\
& \leq 2 \V [ \Delta f ^{(1)} _{t,i} (\bx ; \zeta ) ]  
+ 2 \V[\tilde{f}^{(1)}_{t,i} (\bx;\zeta) ], \label{eq:upp1}
\end{align}
where last inequality is derived by $ 2{\rm{Cov}} [X,Y] \leq \V[X]+\V[Y]$. 
Note that $(f(\bx_1), \ldots, f(\bx_t) , \tilde{f}^{(1)}_i (\bx  ;\zeta ) )^\top $ is distributed as a multivariate normal distribution. 
Thus, from the definition of the conditional variance in the multivariate normal distribution, from the assumption (A\ref{enu:A1}) we get 
$ \V[\tilde{f}^{(1)}_{t,i} (\bx;\zeta) ]   \leq  \V[\tilde{f}^{(1)}_{0,i} (\bx;\zeta) ]$. 
Hence, by substituting this inequality into \eqref{eq:upp1}, we obtain 
\begin{align}
\{ \sigma^{(1)}_{t,i} (\bx) \}^2  
 \leq 2 \V [ \Delta f ^{(1)} _{t,i} (\bx ; \zeta ) ]  
+ 2 \V[\tilde{f}^{(1)}_{0,i} (\bx;\zeta) ]  
\leq 2 \V [ \Delta f ^{(1)} _{t,i} (\bx ; \zeta ) ]  
+ 2 |\zeta | C_0
. \label{eq:upp2}
\end{align}
Furthermore, the variance  $ \V [ \Delta f ^{(1)} _{t,i} (\bx ; \zeta ) ]  $ satisfies the following inequality:
\begin{align}
\V [ \Delta f ^{(1)} _{t,i} (\bx ; \zeta ) ] 
&= 
\frac{  \V[f_t(\bx + \zeta {\bm{e}}_i )] -2 \Cov [f_t(\bx + \zeta {\bm{e}}_i ),f_t (\bx)] + \V [f_t (\bx) ]     }{\zeta^2}  \nonumber \\
& \leq \frac   {  2  \V[f_t(\bx + \zeta {\bm{e}}_i )]  +2 \V [f_t (\bx) ]       }{\zeta^2}, \label{eq:vardelta}
\end{align}
where last inequality is derived by  $- 2{\rm{Cov}} [X,Y] \leq \V[X]+\V[Y]$. 
Therefore, by using \eqref{eq:upp2} and \eqref{eq:vardelta}, we have   
\begin{align}
4 \beta_t \{ \sigma^{(1)}_{t,i} (\bx) \}^2  \leq
8 \beta_t \frac   {  2  \V[f_t(\bx + \zeta {\bm{e}}_i )]  +2 \V [f_t (\bx) ]       }{\zeta^2} 
+ 8 \beta_t |\zeta| C_0 . \label{eq:upp3}
\end{align}
Here, let $\tilde{C}_0 $ be a positive number satisfying $ \tilde{C}_0 > C_0$, $\epsilon^2 (10 \beta_1 \tilde{C}_0 ) ^{-1} < A_0$ 
and  $(2/5) \epsilon^2 (  \gamma_{1} \tilde{C}_0 ) ^{-1} < A_0$. 
Then, we set 
\begin{equation}
\zeta = \frac{\epsilon^2}{10  \beta_t  \tilde{C}_0}. \label{eq:zetanodef}
\end{equation}
Thus, noting that $\beta_1 \leq \beta_t$, we obtain $|\zeta| < A_0$ and 
\begin{equation}
 8 \beta_t |\zeta| C_0 = \frac{ 8 \epsilon^2 C_0 }{10 \tilde{C}_0}  \leq \frac{4}{5} \epsilon^2. \label{eq:epue}
\end{equation}
Hence, by substituting \eqref{eq:zetanodef} and \eqref{eq:epue} into  \eqref{eq:upp3}, we get 
$$
4 \beta_t \{ \sigma^{(1)}_{t,i} (\bx) \}^2  \leq 
800 \beta^3_t \tilde{C}^2_0  \epsilon^{-4}   (2  \V[f_t(\bx + \zeta {\bm{e}}_i )]  +2 \V [f_t (\bx) ]    ) 
+ \frac{4}{5} \epsilon^2.
$$
Here, define 
$$
\bx^\star_{t+1} = \argmax _{ \bx \in D } \sigma^2_t (\bx ) .
$$
Moreover, from the definition of $\zeta$ and the assumption (A\ref{enu:A2}), it holds that $\bx \in D$, $(\bx +\zeta {\bm{e}}_i) \in D$.
Therefore, it holds that 
\begin{equation}
2  \V[f_t(\bx + \zeta {\bm{e}}_i )]  +2 \V [f_t (\bx) ] 
\leq 4 \sigma^2_t ( \bx^\star_{t+1} ). \nonumber 
\end{equation}
Thus, we have 
\begin{align}
4 \beta_t \{ \sigma^{(1)}_{t,i} (\bx) \}^2  \leq 
3200 \beta^3_t \tilde{C}^2_0  \epsilon^{-4}  \sigma^2_t ( \bx^\star_{t+1} )
+ \frac{4}{5} \epsilon^2. \label{eq:3200}
\end{align} 
Next, suppose that $t_1,\ldots , t_l$ are positive integers satisfying  $1 \leq t_1 \leq \cdots \leq t_l \leq t$ and $r_{t_s} =1$. 
Then, noting that the monotonicity of posterior variances and the definition of \eqref{eq:AF}, 
we obtain 
  $$\sigma^2_t (x^\star_{t+1}) \leq \sigma^2_{t_l } (x^\star_{t+1}) \leq \sigma^2_{t_l} (x_{t_l+1})$$ and 
$$\sigma^2_{t_1} (x_{t_1+1} ) +\cdots + \sigma^2_{t_l} (x_{t_l+1} ) \geq  t_l \sigma^2_{t_l } (x_{t_l+1} ) = R_t \sigma^2_{t_l} (x_{t_l+1}) .$$
Therefore, from Lemma 5.3 in  Srinivas et al. \cite{Srinivas:2010:GPO:3104322.3104451}, we have 
\begin{align}
\sigma^2_t (x^\star_{t+1} ) \leq R^{-1}_t \sum_{s=1}^l \sigma^2_{t_s} (x_{t_s+1} ) & \leq R^{-1} _t \sum_{i=1}^t \sigma^2_i (x_{i+1})   
 \leq C_1 R^{-1}_t \kappa_t. \label{eq:ig}
\end{align}
Thus, substituting \eqref{eq:ig} into \eqref{eq:3200} we obtain 
$$
4 \beta_t \{ \sigma^{(1)}_{t,i} (\bx) \}^2  \leq 
3200 \beta^3_t \tilde{C}^2_0  \epsilon^{-4}  C_1 \kappa_t R^{-1}_t 
+ \frac{4}{5} \epsilon^2. 
$$
This implies that 
\begin{equation}
\max_{ \bx \in \mathcal{X} } 4 \beta_t \{ \sigma^{(1)}_{t,i} (\bx) \}^2  \leq 
3200 \beta^3_t \tilde{C}^2_0  \epsilon^{-4}  C_1 \kappa_t R^{-1}_t 
+ \frac{4}{5} \epsilon^2. \label{eq:last1}
\end{equation}

Similarly, for any $j,k \in [d]$, $t  \geq 1$ and $\bx \in \mathcal{X}$, let 
\begin{align*}
\Delta f ^{(2)} _{t,jk} (\bx ; \tilde{\zeta} ) = \frac { f_t (\bx +\tilde{\zeta}  {\bm{e}}_{jk} ) - f_t(\bx +\tilde{\zeta} {\bm{e}}_j) 
 -  f _t(\bx +\tilde{\zeta}  {\bm{e}}_{k} ) + f_t(\bx )}{\tilde{\zeta} ^2 }.
\end{align*}
Then, using  same arguments we get 
\begin{align*}
\gamma_{t} \{ \sigma^{(2)}_{t,jk} (\bx) \} ^2 & \leq 2 \gamma_{t} \V [ \Delta f^{(2)}_{t,jk} (\bx;\tilde{\zeta}) ] + 2 \gamma_{t} |\tilde{\zeta}| C_0 , \\
\V [ \Delta f^{(2)}_{t,jk} (\bx; \tilde{\zeta}) ]  & \leq \frac{  4 \V [ f_t (\bx +\tilde{\zeta}  {\bm{e}}_{jk} )] + 4 \V[ f_t(\bx +\tilde{\zeta} {\bm{e}}_j) ] 
 +4 \V[  f _t(\bx +\tilde{\zeta}  {\bm{e}}_{k} )]  +4 \V[ f_t(\bx ) ] }{\zeta^4}.
\end{align*}
Furthermore, we set $ \tilde{\zeta} = (2/5) \epsilon  ^2 ( \gamma_{t}  \tilde{C}_0 ) ^{-1}$. 
Thus, noting that $|\tilde{\zeta} | <A_0$, we have 
$$
2 \gamma_{t} |\tilde{\zeta}| C_0 \leq \frac{4}{5} \epsilon ^2
$$
 and 
\begin{align*}
\V [ \Delta f^{(2)}_{t,jk} (\bx; \tilde{\zeta}) ]  & \leq \frac{  4 \V [ f_t (\bx +\tilde{\zeta}  {\bm{e}}_{jk} )] + 4 \V[ f_t(\bx +\tilde{\zeta} {\bm{e}}_j) ] 
 +4 \V[  f _t(\bx +\tilde{\zeta}  {\bm{e}}_{k} )]  +4 \V[ f_t(\bx ) ] }{\tilde{\zeta}^4} \\
& \leq 
 \frac{ 16 \sigma^2_t (\bx^\star_{t+1} ) }{\tilde{\zeta} ^4} = 5^4 \epsilon ^{-8}  \tilde{C}^4_0 \gamma ^4_{t}   \sigma^2_t (\bx^\star_{t+1} ).
\end{align*}
Therefore, from \eqref{eq:ig} it holds that 
$$
\gamma_{t} \{ \sigma^{(2)}_{t,jk} (\bx) \} ^2 \leq \frac{ 1250 \tilde{C}^4_0 C_1 \gamma^5_{t} \kappa_t   }{R_t \epsilon ^8} + \frac{4}{5} \epsilon^2. 
$$
Hence, noting that $   \{ \varsigma^{(2)}_{t} (\bx) \} ^2=    \max _ {j,k \in [d] } \{ \sigma^{(2)}_{t,jk} (\bx) \} ^2$, we obtain 
\begin{equation}
\max_{ \bx \in \mathcal{X}   } \gamma_{t} \{ \varsigma^{(2)}_{t} (\bx) \} ^2 \leq \frac{ 1250 \tilde{C}^4_0 C_1 \gamma^5_{t} \kappa_t   }{R_t \epsilon ^8} + \frac{4}{5} \epsilon^2. \label{eq:last2}
\end{equation}
Finally, by substituting \eqref{eq:last1} and \eqref{eq:last2} into \eqref{eq:etasquare}, we get Theorem \ref{thm:upper-bound}. 
\qed

\section{Local minima identification for infinite set $\mathcal{X}$}
In this section, we consider the case that $\mathcal{X}$ is infinite.
Let $\mathcal{X}$ be an infinite set, and let $\mathcal{X}^\star $ be a finite subset of $\mathcal{X}$. 
In addition, we   assume that $D$ is a compact and convex set. 
Moreover, we may assume $D  \subset [0,r]^d$, without loss of generality. 
Here, for each point $\bx \in \mathcal{X}^\star$, we define $Q^{(1)}_{t,i} (\bx)$ and $Q^{(2)}_{t} (\bx)$. 
 Similarly, using accuracy parameters $\epsilon^{(1)}_i >0$ and $\epsilon^{(2)}>0$, we define 
\begin{align}
{G}^{(1),\star}_{t,i} &= \{ \bx \in \mathcal{X}^\star \ | \ - (1+1/d)\epsilon^{(1)}_i < l^{(1)}_{t,i} (\bx) \land  \ u^{(1)}_{t,i} (\bx) < (1+1/d) \epsilon^{(1)}_i \},  \nonumber \\
\bar{{G}}^{(1),\star}_{t,i} &= \{ \bx \in \mathcal{X}^\star  \ | \ \epsilon^{(1)}_i/d \leq l^{(1)}_{t,i} (\bx) \lor \ u^{(1)}_{t,i} (\bx) \leq -\epsilon^{(1)}_i/d \}, \nonumber \\
{H}^{(2),\star}_{t} &= \{ \bx \in \mathcal{X}^\star  \ | \ l^{(2)} _{t} > -\epsilon ^{(2)} \} , \nonumber \\
\bar{H}^{(2),\star}_{t} &= \{ \bx \in \mathcal{X}^\star  \ | \ u^{(2)} _{t} < \epsilon ^{(2)} \} \nonumber
\end{align}
and 
\begin{equation}
\begin{split}
\widetilde{S}_t&= {H}^{(2),\star}_{t} \cap   \bigcap_{i=1}^d {G}^{(1),\star}_{t,i}, \\
\widetilde{\bar{S}}_t&= \bar{H}^{(2),\star}_{t} \cup   \bigcup_{i=1}^d \bar{G}^{(1),\star}_{t,i}. \\
\end{split}\label{eq:bunruiinfinite}
\end{equation}
Moreover, for each ${\bm{a}} \in \mathcal{X}$, let $[{\bm{a}}] $ be a point in $\mathcal{X}^\star$ closest to ${\bm{a}}$.  
Then, from \eqref{eq:bunruiinfinite} we define the estimated sets $\widehat{S}_t$ and $\widehat{\bar{S}}_t$ as follows:
\begin{definition}[Estimated sets $\widehat{S}_t$ and $\widehat{\bar{S}}_t$ for infinite $\mathcal{X}$]
Estimated sets $\widehat{S}_t$ and $\widehat{\bar{S}}_t$ of $S$ and $\mathcal{X} \setminus S $ are respectively defined as  
\begin{equation}
\begin{split}
\widehat{S}_t&= \{    \bx \in \mathcal{X} \ | \  [\bx] \in  \widetilde{S}_t \} \\
\widehat{\bar{S}}_t&=\{    \bx \in \mathcal{X} \ | \  [\bx] \in  \widetilde{\bar{S}}_t \}.   \\
\end{split}\label{eq:bunruiinfinitedush}
\end{equation}
\end{definition}
Furthermore, we define the acquisition function $b_t (\bx)$ as follows:
\begin{definition}[Function $b_t (\bx)$ based on predicted  violatios for infinite $\mathcal{X}$]\label{def:proposedAFinfinite}
Define 
\begin{align}
\tilde{\bx}^\star_t &= \argmax_{\bx \in \tilde{U}_t }   \sum_{i=1}^d V^{(1)}_{t,i} (\bx), \nonumber \\
     \tilde{U}_t &= 
 \mathcal{X}^\star \setminus (\widetilde{S}_t \cup \widetilde{\bar{S}}_t )
. \nonumber 
\end{align}
 Then, the function 
 $b_t ( \bx)$ is defined by
\begin{align}
b_t (\bx) =   \sum_{i=1}^d \left ( V^{(1)}_{t,i} (\tilde{\bx}^\star) - V^{(1)}_{t,i} (\tilde{\bx}^\star;\bx)  \right ). \label{eq:teianAtilde}
\end{align}
\end{definition}
Finally, the flow of the proposed method when $\mathcal{X}$ is infinite is shown in Algorithm \ref{ALG2}.

\begin{algorithm}[httb]                  
\caption{Local minima identification for infinite $\mathcal{X}$}         
\label{ALG2}                          
\begin{algorithmic}[1]                  
\REQUIRE 
 Initial training data,  
GP prior $\mathcal{G}\mathcal{P} (0,k(\bx,\bx'))$
\ENSURE Estimated sets $\widehat{S}$ and $ \widehat{\bar{S}}$
\STATE $\widetilde{S}_0 \gets \emptyset$, $\widetilde{\bar{S}}_0\gets \emptyset$, $\tilde{U}_0 \gets \mathcal{X}^\star$
\STATE $t \gets 1$
\WHILE{$\tilde{U}_{t-1} \neq \emptyset$}
	\STATE $\widetilde{S}_t \gets \widetilde{S}_{t-1}$, $\widetilde{\bar{S}}_t\gets \widetilde{\bar{S}}_{t-1}$, $\tilde{U}_t \gets \tilde{U}_{t-1}$
	\FORALL{$ \bx \in \mathcal{X}^\star$}
	\STATE Compute confidence intervals $Q^{(1)}_{t,i}$ and  $Q^{(2)}_t$ from GP derivatives
	\ENDFOR
	\STATE 
Compute ${G}^{(1),\star}_{t,i} $, $\bar{G}^{(1),\star}_{t,i}$, ${H}^{(2),\star}_{t}$ and  $\bar{H}^{(2),\star}_{t}$ for each\ $i \in [d]$ 
	\FORALL{$ \bx \in \mathcal{X}^\star$}
		\IF{$\bx \in {H}^{(2),\star}_{t} \cap \bigcap_{i=1}^d {G}^{(1),\star}_{t,i}  $} 
		\STATE $\widetilde{S}_t \gets \widetilde{S}_{t-1} \cup \{ \bx \}$, $\tilde{U}_t \gets \tilde{U}_{t-1} \setminus \{ \bx \}$
		 \ELSIF{$\bx \in \bar{H}^{(2),\star}_{t} \cup \bigcup_{i=1}^d \bar{G}^{(1),\star}_{t,i}  $} 
		\STATE $\widetilde{\bar{S}}_t \gets \widetilde{\bar{S}}_{t-1} \cup \{ \bx \}$, $\tilde{U}_t \gets \tilde{U}_{t-1} \setminus \{ \bx \}$
		 \ENDIF
	\ENDFOR
	\STATE Compute   $\bx_{t}$ from \eqref{eq:AF}  and \eqref{eq:teianAtilde}
\STATE $y_t \gets f({\bm{x}}_t ) +\varepsilon_t$
\STATE $t \gets t+1$
\ENDWHILE
\STATE Compute $\widehat{S} _{t-1}$     and  $\widehat{\bar{S}} _{t-1} $ from \eqref{eq:bunruiinfinitedush}
\STATE $\widehat{S} \gets \widehat{S}_{t-1}$, $\widehat{\bar{S}} \gets \widehat{\bar{S}}_{t-1}$
\end{algorithmic}
\end{algorithm}

\section{Theoretical results for infinite $\mathcal{X}$}
 In this section, we provide   the theorem on the performance and convergence of Algorithm \ref {ALG2}.
 Hereafter, instead of the assumptions (A\ref{enu:A2}) and (A\ref{enu:A1}), we assume the following conditions: 
\begin{enumerate}
\renewcommand{\labelenumi}{(C\arabic{enumi}).}
\item There exists a positive constant $A^\star_0$ such that for any $\zeta$ satisfying $|\zeta| <A^\star_0$,  $(\bx + \zeta {\bm{e}}_i ) \in D $   and $(\bx +\zeta {\bm{e}}_{jk} ) \in D $ for any $\bx \in \mathcal{X}^\star$ and $i,j,k \in [d]$. \label{enu:C2}
\item  There exists a positive constant $C^\star_0$ such that 
$$
\V [ \tilde{f}^{(1)}_{0,i} (\bx; \zeta) ] \leq | \zeta |  C^\star_0 , \quad  \V [ \tilde{f}^{(2)}_{0,jk} (\bx; \zeta) ] \leq | \zeta |  C^\star_0 ,
$$
 for any $\bx \in \mathcal{X}^\star$, $i,j,k \in [d]$ and $\zeta$ satisfying $| \zeta | < A^\star_0$, where $A^\star_0$ is given in the assumption (C\ref{enu:C2}). \label{enu:C11}
\end{enumerate}
Furthermore, we also assume the following condition.
\begin{enumerate}
\renewcommand{\labelenumi}{(C\arabic{enumi}).}
\setcounter{enumi}{2}
\item There exists positive constants $a$ and $b$ such that 
\begin{align*}
\PR  \left (
\sup_{\bx \in \mathcal{X}} | f^{(2)}_{ij} (\bx) | \geq L 
\right ) \leq a e^{-(L/b)^2}, \q \f i,j \in [d]
\end{align*}
and 
\begin{align*}
\PR  \left (
\sup_{\bx \in \mathcal{X}} | f^{(3)}_{ijk} (\bx) | \geq L 
\right ) \leq a e^{-(L/b)^2}, \q \f i,j ,k\in [d],
\end{align*}
where $f^{(3)}_{ijk} (\bx) $ is given by $\partial f^{(2)}_{ij} (\bx) / \partial x_k $. \label{enu:C1}
\end{enumerate}
Furthermore,  let $\mathcal{X}^\star$ be a set which has $ (\tau_ \epsilon )^d$ elements, and let 
\begin{equation}
\| \bx -  [\bx ] \|_1 \leq rd /\tau_\epsilon, \q \f \bx \in \mathcal{X}. \label{eq:SA}
\end{equation}
Then, the following theorem holds:
\begin{theorem}\label{thm:seido2}
Let   $\epsilon^{(1)}_1,\ldots, \epsilon^{(1)}_d , \epsilon^{(2)}$ be positive numbers, and let $\epsilon =  \min\{\epsilon^{(1)}_1,\ldots,\epsilon^{(1)}_d ,\epsilon^{(2)} \}$. 
For any 
$\delta \in (0,1)$, define $L=b ( \log \{ a(d^2+d^3)/\delta \} )^{1/2}$, 
$\tau_\epsilon =  \lceil d^2 r \epsilon ^{-1} L \rceil $, 
 \begin{align*}
\beta_t &= 2\log ( (d+1)  \pi^2 t^2/(6 \delta) )  + 2d \log (  \lceil b d^2 r \epsilon ^{-1} \sqrt{\log (a(d^2+d^3)/\delta)} \rceil    ) , \\
\gamma_{t} &= 2d^2 \log (d^2 (d+1)   \pi^2 t^2 /(6 \delta )) +  2d^2 \log (  \lceil b d^2 r \epsilon ^{-1} \sqrt{\log (a(d^2+d^3)/\delta)} \rceil    ) 
, \\
\eta_t &= \max \{ \max_{\bx \in \mathcal{X}^\star} 2\beta^{1/2}_{t} \sigma^{(1)}_{t,1} (\bx), \ldots , \max_{ \bx \in \mathcal{X}^\star} 2\beta^{1/2}_{t} \sigma^{(1)}_{t,d} (\bx), 
\max_{\bx \in \mathcal{X}^\star} \gamma^{1/2}_{t} \varsigma^{(2)}_t (\bx)  \}.
\end{align*}
Then, 
Algorithm \ref{ALG2} completes classification after at least the minimum positive integer $T$ trials that satisfy the following inequality 
$\eta^2_T \leq \epsilon^2$. 
Moreover, with probability at least 
 $1-2 \delta$, for any $t \geq 1$, $\bx \in \mathcal{X}$ and $i \in [d]$ it holds that 
$$
\bx \in \widehat{S}_t \Rightarrow -(1+2/d) \epsilon^{(1)}_1 < f^{(1)}_i (\bx) < (1+2/d) \epsilon^{(1)}_i \land \ \lambda (\bx) > -2 \epsilon^{(2)} 
$$ and 
$$
\bx \in \widehat{\bar{S}}_t \Rightarrow   f^{(1)}_i (\bx) \neq 0 \lor \ \lambda (\bx) < 2 \epsilon^{(2)} .
$$
\end{theorem}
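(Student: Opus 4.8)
\noindent
The plan is to follow the two-part structure of the proof of Theorem~\ref{thm:seido} --- a deterministic termination argument together with a high-probability correctness argument --- but with an extra discretisation layer, so that the correctness conclusion holds with probability $1-2\delta$ rather than $1-\delta$. For termination I would first observe that the modified sets ${G}^{(1),\star}_{t,i}$ and $\bar{G}^{(1),\star}_{t,i}$ are separated by a gap of exactly $\epsilon^{(1)}_i$ (the thresholds are $\pm(1+1/d)\epsilon^{(1)}_i$ versus $\mp\epsilon^{(1)}_i/d$) and ${H}^{(2),\star}_{t}$, $\bar{H}^{(2),\star}_{t}$ by a gap of $2\epsilon^{(2)}$. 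Hence, exactly as in the proof of Theorem~\ref{thm:seido}, once $\eta^2_T\le\epsilon^2$ the interval $Q^{(1)}_{T,i}(\bx)$ has length below $\epsilon^{(1)}_i$ and $Q^{(2)}_{T}(\bx)$ length below $2\epsilon^{(2)}$ for every $\bx\in\mathcal{X}^\star$, so a point of $\mathcal{X}^\star$ that avoids $\widetilde{\bar{S}}_T$ necessarily lands in $\widetilde{S}_T$; thus $\tilde U_T=\emptyset$ and Algorithm~\ref{ALG2} stops. This part is essentially verbatim.

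\noindent
For correctness I would intersect two events. The confidence event is obtained by repeating the argument of the proof of Theorem~\ref{thm:seido}, but with the union bound taken over the finite net $\mathcal{X}^\star$ instead of $\mathcal{X}$: since $|\mathcal{X}^\star|=\tau_\epsilon^d$, the additional $\log$-of-$\tau_\epsilon$ terms in $\beta_t$ and $\gamma_t$ are exactly what the union bound over $\mathcal{X}^\star$ requires, and one concludes that with probability at least $1-\delta$, $f^{(1)}_i(\bx)\in Q^{(1)}_{t,i}(\bx)$ and $\lambda(\bx)\in Q^{(2)}_{t}(\bx)$ for all $\bx\in\mathcal{X}^\star$, $i\in[d]$, $t\ge1$ (the eigenvalue part again using $(|a_1|+\cdots+|a_d|)^2\le d$ and the monotonicity $\lambda(\cdot)=\inf_{\|{\bm{a}}\|=1}{\bm{a}}^\top{\bm{H}}(\cdot){\bm{a}}$). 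The smoothness event comes from (C\ref{enu:C1}): with $L=b(\log\{a(d^2+d^3)/\delta\})^{1/2}$ and a union bound over the $d^2$ pairs $(i,j)$ and the $d^3$ triples $(i,j,k)$, with probability at least $1-\delta$ we have $\sup_{\bx\in\mathcal{X}}|f^{(2)}_{ij}(\bx)|<L$ and $\sup_{\bx\in\mathcal{X}}|f^{(3)}_{ijk}(\bx)|<L$ for all indices. On the intersection of the two events (probability $\ge1-2\delta$), the mean value theorem and \eqref{eq:SA} give $|f^{(1)}_i(\bx)-f^{(1)}_i([\bx])|<L\|\bx-[\bx]\|_1\le Lrd/\tau_\epsilon\le\epsilon/d$ for every $\bx\in\mathcal{X}$ and $i\in[d]$, by the choice $\tau_\epsilon=\lceil d^2r\epsilon^{-1}L\rceil$, and likewise $|f^{(2)}_{jk}(\bx)-f^{(2)}_{jk}([\bx])|<\epsilon/d$; the same $\ell_1$-type quadratic-form estimate then yields $|\lambda(\bx)-\lambda([\bx])|<\epsilon$.

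\noindent
The last step is to combine these on the good event. If $\bx\in\widehat S_t$ then $[\bx]\in\widetilde S_t$, so $[\bx]\in{G}^{(1),\star}_{t,i}$ together with the confidence event forces $f^{(1)}_i([\bx])\in(-(1+1/d)\epsilon^{(1)}_i,(1+1/d)\epsilon^{(1)}_i)$, and adding the transfer error $\epsilon^{(1)}_i/d$ (legitimate since $\epsilon\le\epsilon^{(1)}_i$) gives $f^{(1)}_i(\bx)\in(-(1+2/d)\epsilon^{(1)}_i,(1+2/d)\epsilon^{(1)}_i)$; similarly $[\bx]\in{H}^{(2),\star}_{t}$ gives $\lambda([\bx])>-\epsilon^{(2)}$, hence $\lambda(\bx)>-2\epsilon^{(2)}$. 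Symmetrically, $\bx\in\widehat{\bar S}_t$ means $[\bx]\in\widetilde{\bar S}_t$, so either some coordinate satisfies $|f^{(1)}_i([\bx])|\ge\epsilon^{(1)}_i/d$ --- and the strictness of $|f^{(1)}_i(\bx)-f^{(1)}_i([\bx])|<\epsilon^{(1)}_i/d$ keeps $f^{(1)}_i(\bx)\ne0$ --- or $\lambda([\bx])<\epsilon^{(2)}$, giving $\lambda(\bx)<2\epsilon^{(2)}$. I expect the main obstacle to be the smoothness step: extracting the sub-Gaussian tail control of $\sup|f^{(2)}|$ and $\sup|f^{(3)}|$ from (C\ref{enu:C1}), tuning $L$ and $\tau_\epsilon$ so that the discretisation error is exactly $\epsilon/d$ and is absorbed by the built-in $1/d$ slack of the modified sets ${G}^{(1),\star}_{t,i}$, $\bar{G}^{(1),\star}_{t,i}$, and keeping every transfer inequality strict so the conclusion $f^{(1)}_i(\bx)\ne0$ survives; the bookkeeping of how $|\mathcal{X}^\star|=\tau_\epsilon^d$ feeds into $\beta_t$ and $\gamma_t$ is the secondary place requiring care.
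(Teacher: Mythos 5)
Your proposal is correct and follows essentially the same route as the paper: the same termination argument via CI lengths and the gap between the modified classification thresholds, the same union bound over $\mathcal{X}^\star$ (with $|\mathcal{X}^\star|=\tau_\epsilon^d$ supplying the extra $\log$ terms in $\beta_t,\gamma_t$) to invoke the finite-set result on the net, and the same Lipschitz/discretisation step from (C3) giving errors $\epsilon/d$ for gradients and Hessian entries and $\epsilon$ for the minimum eigenvalue, intersected to get probability $1-2\delta$. The only remark is that the strictness issue you flag at the very end (transferring $|f^{(1)}_i([\bx])|\ge\epsilon^{(1)}_i/d$ to $f^{(1)}_i(\bx)\ne0$ when the discretisation bound is non-strict) is glossed over in the paper as well, so your treatment matches its level of rigour.
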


In order to prove Theorem \ref{thm:seido2}, we consider following  lemmas and corollaries. 
\begin{lemma}\label{lem:C1}
From the assumption (C\ref{enu:C1}), it holds that 
\begin{align*}
\PR \left (
\f i,j,k \in [d], \f \bx \in \mathcal{X}, \ |f^{(2)}_{ij} (\bx) | < L \land |f^{(3)}_{ijk} (\bx) | <L 
\right ) \geq 1- (d^2+d^3) a e^{-(L/b)^2}.
\end{align*}
\end{lemma}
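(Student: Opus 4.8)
The plan is to bound the failure probability in the statement by a union bound over the finitely many derivative components and over $\bx \in \mathcal{X}$. First I would observe that the assumption (C\ref{enu:C1}) gives, for each fixed triple $i,j,k \in [d]$, a tail bound $\PR(\sup_{\bx} |f^{(2)}_{ij}(\bx)| \geq L) \leq a e^{-(L/b)^2}$ and similarly for $f^{(3)}_{ijk}$. The quantities $\sup_{\bx \in \mathcal{X}} |f^{(2)}_{ij}(\bx)|$ and $\sup_{\bx \in \mathcal{X}} |f^{(3)}_{ijk}(\bx)|$ are suprema over the (possibly infinite) set $\mathcal{X}$, but that is exactly how the hypothesis is phrased, so no discretization is needed here.

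Next I would count the relevant events: there are $d^2$ choices of the pair $(i,j)$ indexing the second derivatives $f^{(2)}_{ij}$, and $d^3$ choices of the triple $(i,j,k)$ indexing the third derivatives $f^{(3)}_{ijk}$. For each such choice, the ``bad'' event is $\{\sup_{\bx \in \mathcal{X}} |f^{(2)}_{ij}(\bx)| \geq L\}$ (respectively the third-derivative version), each of which has probability at most $a e^{-(L/b)^2}$ by (C\ref{enu:C1}). Applying the union bound over all $d^2 + d^3$ such events, the probability that at least one of them occurs is at most $(d^2 + d^3) a e^{-(L/b)^2}$. Taking the complement gives that, with probability at least $1 - (d^2+d^3) a e^{-(L/b)^2}$, simultaneously $\sup_{\bx} |f^{(2)}_{ij}(\bx)| < L$ for all $i,j$ and $\sup_{\bx} |f^{(3)}_{ijk}(\bx)| < L$ for all $i,j,k$, which is precisely the claimed inequality once one notes that $\sup_{\bx \in \mathcal{X}} |f^{(2)}_{ij}(\bx)| < L$ is equivalent to $|f^{(2)}_{ij}(\bx)| < L$ for all $\bx \in \mathcal{X}$ (and likewise for the third derivative).

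There is essentially no obstacle here: the only minor subtlety is whether the strict versus non-strict inequalities align — the hypothesis (C\ref{enu:C1}) uses $\geq L$ in the bad event, so its complement is $\sup_{\bx} |f^{(2)}_{ij}(\bx)| < L$, which matches the strict inequality in the lemma's conclusion. I would simply remark on this and then write out the union bound in one or two lines. The lemma is a packaging step whose purpose is to provide, later in the proof of Theorem \ref{thm:seido2}, uniform Lipschitz-type control on the gradient and Hessian of $f$ across $\mathcal{X}$ from a high-probability event, so that the discretization error incurred by replacing $\bx$ with its nearest representative $[\bx] \in \mathcal{X}^\star$ can be controlled.
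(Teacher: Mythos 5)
Your proof is correct and is essentially the same argument the paper intends: the paper's own proof is a one-line reference to Appendix A.2 of Srinivas et al., which is exactly the union bound over the $d^2$ second-derivative events and $d^3$ third-derivative events that you write out explicitly. No issues.
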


\begin{proof}
The proof is given by using the same arguments as in the Appendix A.2. of  Srinivas et al. \cite{Srinivas:2010:GPO:3104322.3104451}. 
\end{proof}

Furthermore, 
from  Appendix A.2. of  Srinivas et al. \cite{Srinivas:2010:GPO:3104322.3104451} and 
  Lemma \ref{lem:C1},  we obtain the following corollary.
\begin{cor}\label{cor:C1}
With probability at least $1-(d^2+d^3)ae^{-(L/b)^2 }$, it holds that 
\begin{align*}
| f^{(1)}_i (\bx) -f^{(1)}_i (\bx ')| \leq L \| \bx -\bx' \|_1, \q \f \bx,\bx' \in \mathcal{X}  , \ \f i \in [d]
\end{align*}
and
\begin{align*}
| f^{(2)}_{jk} (\bx) -f^{(2)}_{jk} (\bx ')| \leq L \| \bx -\bx' \|_1, \q \f \bx,\bx' \in \mathcal{X}  , \ \f j,k \in [d].
\end{align*}
\end{cor}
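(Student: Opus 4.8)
The plan is to reuse the device from Appendix~A.2 of Srinivas et al.~\cite{Srinivas:2010:GPO:3104322.3104451}, by which a uniform high-probability bound on a derivative is promoted to a Lipschitz bound on the function one differentiation order below. Throughout I would work on the single event $E$ on which $|f^{(2)}_{ij}(\bm{z})| < L$ and $|f^{(3)}_{ijk}(\bm{z})| < L$ hold simultaneously for all $i,j,k \in [d]$ and all $\bm{z}$ in the domain; by Lemma~\ref{lem:C1} this event has $\PR(E) \geq 1-(d^2+d^3)ae^{-(L/b)^2}$. Since both displayed inequalities will be shown to be deterministic consequences of $E$, establishing them on $E$ delivers the stated probability directly, with no additional union bound.

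For the first inequality I would use that $f^{(1)}_i$ is continuously differentiable with $\partial f^{(1)}_i / \partial x_l = f^{(2)}_{il}$ (existence and continuity of these second derivatives being implicit in assumption~(C\ref{enu:C1}), which already invokes the third derivatives $f^{(3)}$). Since $D$ is convex, the segment $\bm{z}(s) = \bx' + s(\bx - \bx')$ stays in $D$ for $s \in [0,1]$, and the fundamental theorem of calculus gives
\begin{align*}
f^{(1)}_i(\bx) - f^{(1)}_i(\bx') = \int_0^1 \sum_{l=1}^d f^{(2)}_{il}(\bm{z}(s)) (x_l - x'_l)\, ds.
\end{align*}
Taking absolute values, bounding each $|f^{(2)}_{il}(\bm{z}(s))|$ by $L$ on $E$, and summing, I obtain $|f^{(1)}_i(\bx) - f^{(1)}_i(\bx')| \leq L \sum_{l=1}^d |x_l - x'_l| = L \|\bx - \bx'\|_1$. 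This is simply the $\ell_\infty$--$\ell_1$ duality: a uniform bound $L$ on every partial derivative yields an $L$-Lipschitz constant in the $\ell_1$ norm.

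The second inequality is identical in structure, now treating $f^{(2)}_{jk}$ as the function and using $\partial f^{(2)}_{jk}/\partial x_l = f^{(3)}_{jkl}$, each of whose absolute values is below $L$ on $E$. The same argument along the segment gives $|f^{(2)}_{jk}(\bx) - f^{(2)}_{jk}(\bx')| \leq L \|\bx - \bx'\|_1$. Both bounds having been derived on the one event $E$, they hold jointly with probability at least $1-(d^2+d^3)ae^{-(L/b)^2}$.

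The step I expect to need the most care is the mismatch between where the derivative bound is assumed and where it is used: Lemma~\ref{lem:C1} controls the derivatives at points of $\mathcal{X}$, whereas the interpolating segment $\bm{z}(s)$ typically passes through $D \setminus \mathcal{X}$. I would close this gap by invoking continuity of the sample-path derivatives on the compact set $D$ together with density of $\mathcal{X}$ in $D$ (equivalently, by reading the suprema in assumption~(C\ref{enu:C1}) as taken over $D$), which makes the uniform bounds $|f^{(2)}_{ij}| < L$ and $|f^{(3)}_{ijk}| < L$ valid along the whole segment; once that is granted, the two integral estimates above are routine.
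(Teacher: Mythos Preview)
Your proposal is correct and follows essentially the same route as the paper: the paper's own proof simply cites Appendix~A.2 of Srinivas et al.\ together with Lemma~\ref{lem:C1}, and your argument is precisely the standard mean-value/line-integral step from that appendix, carried out explicitly for both $f^{(1)}_i$ and $f^{(2)}_{jk}$ on the single high-probability event $E$. You even surface and handle the $\mathcal{X}$-versus-$D$ domain mismatch that the paper leaves implicit; resolving it by reading the suprema in assumption~(C\ref{enu:C1}) over $D$ is exactly how the Srinivas et al.\ argument is set up.
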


Then, the following lemma holds:
\begin{lemma}\label{lem:C2}
For any $\delta \in (0,1)$, let $L=b ( \log \{ a(d^2+d^3)/\delta \} )^{1/2}$ and 
$\tau_\epsilon =  \lceil d^2 r \epsilon ^{-1} L \rceil $. 
Then, with probability at least $1-\delta$,  
\begin{align}
| f^{(1)}_i (\bx) -f^{(1)}_i ([\bx])| \leq \epsilon/d, \q \f \bx \in \mathcal{X}  , \ \f i \in [d] , \label{eq:C2e1}
\end{align}
and
\begin{align}
| f^{(2)}_{jk} (\bx) -f^{(2)}_{jk} ([\bx ])| \leq\epsilon/d, \q \f \bx \in \mathcal{X}  , \ \f j,k \in [d]. \label{eq:C2e2}
\end{align}
\end{lemma}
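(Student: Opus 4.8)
The plan is to establish \eqref{eq:C2e1} and \eqref{eq:C2e2} by combining the Lipschitz bounds of Corollary~\ref{cor:C1} with the covering property \eqref{eq:SA} of the finite subset $\mathcal{X}^\star$. First I would apply Corollary~\ref{cor:C1} with $L = b(\log\{a(d^2+d^3)/\delta\})^{1/2}$, so that with probability at least $1 - (d^2+d^3)ae^{-(L/b)^2} = 1-\delta$ (by the choice of $L$, since $(d^2+d^3)ae^{-(L/b)^2} = (d^2+d^3)a \cdot \delta/(a(d^2+d^3)) = \delta$), we simultaneously have $|f^{(1)}_i(\bx) - f^{(1)}_i(\bx')| \le L\|\bx - \bx'\|_1$ and $|f^{(2)}_{jk}(\bx) - f^{(2)}_{jk}(\bx')| \le L\|\bx - \bx'\|_1$ for all $\bx, \bx' \in \mathcal{X}$ and all indices. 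On this event, I would specialize $\bx' = [\bx]$, the closest point in $\mathcal{X}^\star$ to $\bx$.

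The key step is then a routine substitution: by \eqref{eq:SA}, $\|\bx - [\bx]\|_1 \le rd/\tau_\epsilon$, so that $|f^{(1)}_i(\bx) - f^{(1)}_i([\bx])| \le L \cdot rd/\tau_\epsilon$ and likewise for the second derivatives. It remains to check that $Lrd/\tau_\epsilon \le \epsilon/d$, i.e.\ that $\tau_\epsilon \ge d^2 r L/\epsilon$. But this is exactly guaranteed by the definition $\tau_\epsilon = \lceil d^2 r \epsilon^{-1} L \rceil \ge d^2 r \epsilon^{-1} L$. Hence both \eqref{eq:C2e1} and \eqref{eq:C2e2} hold on the same event of probability at least $1-\delta$.

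There is no serious obstacle here: the lemma is essentially a bookkeeping consequence of Corollary~\ref{cor:C1} plus the arithmetic of the chosen constants. The only point requiring a little care is to confirm that the high-probability event in Corollary~\ref{cor:C1} has probability at least $1-\delta$ under the stated choice of $L$ --- which follows immediately by solving $(d^2+d^3)ae^{-(L/b)^2} = \delta$ for $L$ --- and to note that the same event underlies both displayed inequalities, so that no additional union bound (and hence no extra factor in $\delta$) is needed. I would present the proof in two short lines: invoke Corollary~\ref{cor:C1}, then substitute $\bx' = [\bx]$ and use \eqref{eq:SA} together with $\tau_\epsilon \ge d^2 r L / \epsilon$.
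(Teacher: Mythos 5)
Your proof is correct and follows exactly the same route as the paper: invoke Corollary \ref{cor:C1} with the stated $L$ (so the failure probability $(d^2+d^3)ae^{-(L/b)^2}$ equals $\delta$), specialize $\bx'=[\bx]$ using \eqref{eq:SA}, and conclude from $\tau_\epsilon^{-1}\leq \epsilon (Lrd^2)^{-1}$. No gaps; your explicit verification of the probability calculation is a detail the paper leaves implicit.
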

\begin{proof}
From Corollary \ref{cor:C1} and \eqref{eq:SA}, w.p.a.l. $1-\delta$ we have 
\begin{align*}
| f^{(1)}_i (\bx) -f^{(1)}_i ([\bx])| \leq Lrd/\tau_\epsilon, \q \f \bx \in \mathcal{X}  , \ \f i \in [d]
\end{align*}
and
\begin{align*}
| f^{(2)}_{jk} (\bx) -f^{(2)}_{jk} ([\bx ])| \leq Lrd/\tau_\epsilon, \q \f \bx \in \mathcal{X}  , \ \f j,k \in [d].
\end{align*}
Hence, noting that $\tau^{-1}_\epsilon \leq \epsilon (Lrd^2)^{-1}$, we obtain Lemma \ref{lem:C2}.
\end{proof}

Using these we provide a proof of Theorem \ref{thm:seido2}.
\begin{proof}
First, for each $\bx \in \mathcal{X}^\star$, when the inequality on $\eta_T $ holds, 
the lengths of  $Q^{(1)}_{T,i} (\bx)$ and $Q^{(2)}_{t} (\bx)$ are less than 
$\epsilon^{(1)}_i$ and $2\epsilon^{(2)}$, respectively. 
Hence, from  classification rules,  all points in $\mathcal{X}^\star$ are classified. 
Moreover, by replacing $|\mathcal{X}|$ of $\beta_t$ and $\gamma_{t} $ in Theorem \ref{thm:seido} with $| \mathcal{X}^\star|$, 
we get 
 \begin{align*}
\beta_t &   = 2\log ( (d+1)  \pi^2 t^2/(6 \delta) )  + 2d \log (  \lceil b d^2 r \epsilon ^{-1} \sqrt{\log (a(d^2+d^3)/\delta)} \rceil    )   , \\
\gamma_{t} &= 2d^2 \log (d^2 (d+1)   \pi^2 t^2 /(6 \delta )) +  2d^2 \log (  \lceil b d^2 r \epsilon ^{-1} \sqrt{\log (a(d^2+d^3)/\delta)} \rceil    ) .
\end{align*}
Therefore, from Theorem \ref{thm:seido}, w.p.a.l. $1-\delta$, for any $\bx \in \mathcal{X}^\star$,  $t \geq 1$ and $i \in [d] $ it holds that 
\begin{align}
\bx \in \widetilde{S}_t \Rightarrow -(1+1/d) \epsilon^{(1)}_1 < f^{(1)}_i (\bx) <(1+1/d)  \epsilon^{(1)}_i \land \ \lambda  (\bx) > -\epsilon^{(2)}  , \label{eq:note1} \\
\bx \in \widetilde{\bar{S}}_t \Rightarrow  | f^{(1)}_i (\bx) | \geq \epsilon^{(1)}_i/d  \lor \ \lambda  (\bx) < \epsilon^{(2)}  . \label{eq:note2}
\end{align}
Here, by combining  Lemma \ref{lem:C2}, w.p.a.l. $1-2\delta$,  the equations \eqref{eq:C2e1}, \eqref{eq:C2e2}, \eqref{eq:note1} and 
\eqref{eq:note2} hold. 
In addition, letting $\tilde{{\bm{H}}} (\bx)$ be a matrix the $(j,k)^{\rm th}$ element of which is given by 
$ f^{(2)}_{jk} (\bx) -f^{(2)}_{jk} ([\bx ])$, for any ${\bm{a}} $ satisfying $\| a \| =1$ we obtain 
\begin{align}
| {\bm{a}}^\top \tilde{{\bm{H}}} (\bx) {\bm{a}} | \leq \epsilon , \nonumber 
\end{align}
when \eqref{eq:C2e2} holds. 
Thus, noting that ${\bm{H}} (\bx) = {\bm{H}} ([\bx ]) + \tilde{{\bm{H}}} (\bx)$, we get 
\begin{equation}
\lambda  (\bx) \leq \lambda  ([\bx ] ) + \epsilon,   \q 
\lambda  (\bx) \geq \lambda  ([\bx ] ) - \epsilon . \label{eq:end}
\end{equation}
Hence, noting that $\epsilon \leq \epsilon^{(1)}_i $ and $\epsilon \leq \epsilon^{(2)} $, from the definition of $\widehat{S}_t$, we have 
\begin{align*}
\bx \in \widehat{S}_t  & \Rightarrow  [\bx] \in \widetilde{S}_t  \land \eqref{eq:C2e1} \land \eqref{eq:end} \\
&\Rightarrow 
-(1+1/d) \epsilon^{(1)}_1 < f^{(1)}_i ([\bx]) <(1+1/d)  \epsilon^{(1)}_i \land \ \lambda  ([\bx]) > -\epsilon^{(2)}  \land \eqref{eq:C2e1} \land \eqref{eq:end}  \\
&\Rightarrow 
-(1+2/d) \epsilon^{(1)}_1 < f^{(1)}_i (\bx) <(1+2/d)  \epsilon^{(1)}_i \land \ \lambda  ( \bx ) > -2 \epsilon^{(2)}  .
\end{align*}
Similarly, by using the argument we obtain 
$$
\bx \in \widehat{\bar{S}}_t \Rightarrow   f^{(1)}_i (\bx) \neq 0 \lor \ \lambda  (\bx) < 2 \epsilon^{(2)}  .
$$
\end{proof}

In addition, the following theorem holds:
\begin{theorem}\label{thm:upper-bound2}
Let $C_1 = 2 \sigma^2 C_2$, 
$C_2 = \sigma^{-2} / \log (1+ \sigma^{-2} )$,  $R_t = r_1 + \cdots + r_t$, $\tilde{C}^\star_0 > C^\star_0$  and 
\begin{align*}
\tilde{\eta}^2_t = \max \left \{ 
\frac{ 3200 (\tilde{C}^\star_0)^2 C_1 \beta^3_t \kappa_t}{R_t \epsilon ^{4} }, 
\frac{1250  (\tilde{C}^\star_0)^4 C_1 \gamma^5_{t} \kappa_t}{R_t \epsilon ^{8}}
\right \}. 
\end{align*}
Then, it holds that 
$
\eta^2_t  \leq \tilde{\eta}^2_t +    \frac{4}{5} \epsilon ^2 .
$
\end{theorem}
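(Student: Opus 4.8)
The plan is to repeat the argument of the proof of Theorem~\ref{thm:upper-bound} almost word for word, with the assumptions (A\ref{enu:A2})--(A\ref{enu:A1}) replaced by their starred counterparts (C\ref{enu:C2})--(C\ref{enu:C11}), the constants $C_0,\tilde{C}_0,A_0$ replaced by $C^\star_0,\tilde{C}^\star_0,A^\star_0$, and every $\mathcal{X}$ replaced by the finite set $\mathcal{X}^\star$. First I would record that, exactly as in \eqref{eq:etasquare}, the $\eta_t$ of Algorithm~\ref{ALG2} satisfies $\eta^2_t = \max\{ \max_{\bx\in\mathcal{X}^\star} 4\beta_t\{\sigma^{(1)}_{t,1}(\bx)\}^2,\ldots, \max_{\bx\in\mathcal{X}^\star}\gamma_t\{\varsigma^{(2)}_t(\bx)\}^2\}$, so it suffices to bound $4\beta_t\{\sigma^{(1)}_{t,i}(\bx)\}^2$ and $\gamma_t\{\sigma^{(2)}_{t,jk}(\bx)\}^2$ uniformly over $\bx\in\mathcal{X}^\star$, using that $\{\varsigma^{(2)}_t(\bx)\}^2 = \max_{j,k\in[d]}\{\sigma^{(2)}_{t,jk}(\bx)\}^2$.

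For the first-derivative term I would fix $\bx\in\mathcal{X}^\star$ and, as in \eqref{eq:upp1}--\eqref{eq:upp2}, decompose $f^{(1)}_{t,i}(\bx) = \Delta f^{(1)}_{t,i}(\bx;\zeta) + \tilde{f}^{(1)}_{t,i}(\bx;\zeta)$, apply $2\Cov[X,Y]\le \V[X]+\V[Y]$, and bound $\V[\tilde{f}^{(1)}_{t,i}(\bx;\zeta)]\le \V[\tilde{f}^{(1)}_{0,i}(\bx;\zeta)]\le |\zeta|C^\star_0$ via Gaussian conditional-variance monotonicity together with (C\ref{enu:C11}). Then, exactly as in \eqref{eq:vardelta}, $\V[\Delta f^{(1)}_{t,i}(\bx;\zeta)]\le 2\zeta^{-2}(\V[f_t(\bx+\zeta{\bm{e}}_i)]+\V[f_t(\bx)])$. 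Picking $\tilde{C}^\star_0 > C^\star_0$ large enough that $\epsilon^2(10\beta_1\tilde{C}^\star_0)^{-1} < A^\star_0$ and $(2/5)\epsilon^2(\gamma_1\tilde{C}^\star_0)^{-1} < A^\star_0$, and setting $\zeta = \epsilon^2/(10\beta_t\tilde{C}^\star_0)$, one gets $|\zeta| < A^\star_0$ (so $\bx,\bx+\zeta{\bm{e}}_i\in D$ by (C\ref{enu:C2})), the bias term satisfies $8\beta_t|\zeta|C^\star_0 \le \frac{4}{5}\epsilon^2$, and $2\V[f_t(\bx+\zeta{\bm{e}}_i)]+2\V[f_t(\bx)]\le 4\sigma^2_t(\bx^\star_{t+1})$ with $\bx^\star_{t+1}=\argmax_{\bx\in D}\sigma^2_t(\bx)$, giving $4\beta_t\{\sigma^{(1)}_{t,i}(\bx)\}^2 \le 3200\beta^3_t(\tilde{C}^\star_0)^2\epsilon^{-4}\sigma^2_t(\bx^\star_{t+1}) + \frac{4}{5}\epsilon^2$, the starred analogue of \eqref{eq:3200}.

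Next I would invoke the information-gain step unchanged: by monotonicity of posterior variances and the definition \eqref{eq:AF} of the acquisition function, $\sigma^2_t(\bx^\star_{t+1}) \le R_t^{-1}\sum_{i=1}^t\sigma^2_i(\bx_{i+1}) \le C_1 R_t^{-1}\kappa_t$ using Lemma 5.3 in \cite{Srinivas:2010:GPO:3104322.3104451}, which yields the first entry of $\tilde{\eta}^2_t$. The second-derivative term is handled identically with the finite difference $\Delta f^{(2)}_{t,jk}(\bx;\tilde\zeta)$ and the choice $\tilde\zeta = (2/5)\epsilon^2(\gamma_t\tilde{C}^\star_0)^{-1}$, producing $\gamma_t\{\sigma^{(2)}_{t,jk}(\bx)\}^2 \le 1250(\tilde{C}^\star_0)^4 C_1\gamma^5_t\kappa_t R_t^{-1}\epsilon^{-8} + \frac{4}{5}\epsilon^2$; taking the maximum over $j,k$ gives the bound on $\max_{\bx\in\mathcal{X}^\star}\gamma_t\{\varsigma^{(2)}_t(\bx)\}^2$. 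Substituting both bounds into the expression for $\eta^2_t$ gives the theorem.

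I do not expect a genuine obstacle here; the only points needing care are that (C\ref{enu:C2}) and (C\ref{enu:C11}) are stated for $\bx\in\mathcal{X}^\star$, which is precisely the index set appearing in $\eta_t$ for Algorithm~\ref{ALG2}, and that the larger $\beta_t,\gamma_t$ prescribed in Theorem~\ref{thm:seido2} enter the argument only through the monotonicity inequalities $\beta_1\le\beta_t$ and $\gamma_1\le\gamma_t$, so the choices of $\zeta$ and $\tilde\zeta$ still keep the perturbed points inside $D$. Everything else is verbatim the proof in Appendix~A.
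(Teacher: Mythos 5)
Your proposal is correct and matches the paper exactly: the paper's proof of this theorem is literally the single sentence ``the proof is given by using the same argument as the proof of Theorem~\ref{thm:upper-bound},'' and your walkthrough is precisely that argument with $\mathcal{X}^\star$, $A^\star_0$, $C^\star_0$, $\tilde{C}^\star_0$ substituted throughout. The two points of care you flag (the starred assumptions being stated on $\mathcal{X}^\star$, and the modified $\beta_t,\gamma_t$ entering only via $\beta_1\le\beta_t$ and $\gamma_1\le\gamma_t$) are exactly the right things to check.
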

\begin{proof}
The proof is given by using the same argument as the proof of Theorem \ref{thm:upper-bound}.
\end{proof}

\section{Sufficient conditions for assumptions}
 In this section, we provide   sufficient conditions for the assumptions. 
First, the following Lemma holds: 
\begin{lemma}\label{lem:sc1}
Let $D$ be a compact set, and let ${\rm int} (D) $ be an interior set of $D$. 
Suppose that $\mathcal{X} $ is a finite subset of ${\rm int} (D) $. 
Assume that the kernel function $k(\bx,\bx^\prime)$ is a five times continuously differentiable function. 
Then, the assumptions (A\ref{enu:A2}) and (A\ref{enu:A1}) hold.
\end{lemma}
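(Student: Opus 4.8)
The plan is to verify the two assumptions separately, using only that $\mathcal{X}$ is a \emph{finite} subset of the open set ${\rm int}(D)$ and that $k$ is $C^5$. Assumption (A\ref{enu:A2}) is purely geometric: for each of the finitely many $\bx\in\mathcal{X}$ there is $\rho_{\bx}>0$ with the closed ball $\bar{B}(\bx,\rho_{\bx})\subseteq D$, and since $\|\zeta\bm{e}_i\|=|\zeta|$ and $\|\zeta\bm{e}_{jk}\|=\sqrt{2}\,|\zeta|$, the choice $A_0:=2^{-1/2}\min_{\bx\in\mathcal{X}}\rho_{\bx}>0$ guarantees $\bx+\zeta\bm{e}_i\in D$ and $\bx+\zeta\bm{e}_{jk}\in D$ for all $\bx\in\mathcal{X}$, $i,j,k\in[d]$ and $|\zeta|<A_0$, which is exactly (A\ref{enu:A2}).

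For (A\ref{enu:A1}) the key observation is that, since $f_0=f\sim\mathcal{G}\mathcal{P}(0,k)$ with $k\in C^5$, $f$ is twice mean-square differentiable and the finite differences inside $\tilde f^{(1)}_{0,i}$ and $\tilde f^{(2)}_{0,jk}$ are mean-square integrals of the corresponding derivative processes; for instance $\zeta^{-1}(f(\bx+\zeta\bm{e}_i)-f(\bx))=\zeta^{-1}\int_0^\zeta f^{(1)}_i(\bx+s\bm{e}_i)\,ds$, so that $\tilde f^{(1)}_{0,i}(\bx;\zeta)=-\zeta^{-1}\int_0^\zeta\big(f^{(1)}_i(\bx+s\bm{e}_i)-f^{(1)}_i(\bx)\big)\,ds$, and likewise $\tilde f^{(2)}_{0,jk}(\bx;\zeta)$ is, up to sign, an average over the square $[0,\zeta]^2$ of increments $f^{(2)}_{jk}(\bx+s\bm{e}_j+s'\bm{e}_k)-f^{(2)}_{jk}(\bx)$. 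I would then expand the variance by Fubini into a double (resp. quadruple) integral of covariances of such increments, bound each covariance by Cauchy--Schwarz in terms of the increment variances, and estimate the latter: $\V[f^{(1)}_i(\bx+s\bm{e}_i)-f^{(1)}_i(\bx)]$ is a second difference of the kernel $v^{(1)}_i=\partial^2 k/(\partial x_i\,\partial x'_i)$, which is $C^3$ under $k\in C^5$, hence $\mathcal{O}(s^2)$ by a two-term Taylor estimate on $\bar{B}(\bx,A_0)$; whereas $\V[f^{(2)}_{jk}(\bx+s\bm{e}_j+s'\bm{e}_k)-f^{(2)}_{jk}(\bx)]$ is a second difference of $v^{(2)}_{jk}=\partial^4 k/(\partial x_j\,\partial x_k\,\partial x'_j\,\partial x'_k)$, which is only $C^1$ under $k\in C^5$, hence merely $\mathcal{O}(|s|+|s'|)$ by a mean-value estimate using uniform continuity of $\nabla v^{(2)}_{jk}$ on the compact ball. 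Performing the elementary $\zeta$-integrals then yields $\V[\tilde f^{(1)}_{0,i}(\bx;\zeta)]=\mathcal{O}(\zeta^2)$ and $\V[\tilde f^{(2)}_{0,jk}(\bx;\zeta)]=\mathcal{O}(|\zeta|)$, both bounded by $C_0|\zeta|$ for $|\zeta|<A_0$; finiteness of $\mathcal{X}$ and $[d]$, together with the fact that all implied constants are controlled by suprema of finitely many partial derivatives of $k$ over the compact set $\bigcup_{\bx\in\mathcal{X}}(\bar{B}(\bx,A_0)\cap D)^2$, allows a single uniform $C_0$, giving (A\ref{enu:A1}).

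The step I expect to be the actual obstacle is the second-derivative estimate: because only $C^5$ regularity of $k$ is assumed, $v^{(2)}_{jk}$ is merely $C^1$ and the increment variance of $f^{(2)}_{jk}$ cannot be pushed below $\mathcal{O}(|\bm{h}|)$ --- this is precisely why (A\ref{enu:A1}) is stated with a bound linear, not quadratic, in $|\zeta|$, and it forces the argument through a modulus-of-continuity estimate for $\nabla v^{(2)}_{jk}$ rather than a Taylor expansion with remainder. A secondary point to make rigorous is the mean-square fundamental-theorem-of-calculus representation of the finite differences and the existence of the second mean-square derivative of $f$, both of which are standard consequences of $k\in C^5$ (see, e.g., \cite{Rasmussen:2005:GPM:1162254}); once these are in place the remaining estimation is routine. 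If the integral representation turns out to be awkward, an equivalent route is to write $\V[\tilde f^{(1)}_{0,i}(\bx;\zeta)]$ and $\V[\tilde f^{(2)}_{0,jk}(\bx;\zeta)]$ out explicitly in terms of $k$ and its partials at $\bx$ and $\bx\pm\zeta\bm{e}_{\bullet}$ and Taylor-expand directly in $\zeta$, the zeroth-order term vanishing by definition of the derivative process.
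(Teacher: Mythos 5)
Your proposal is correct, but it reaches assumption (A\ref{enu:A1}) by a genuinely different route than the paper. The paper's proof of the geometric condition (A\ref{enu:A2}) is the same as yours (open neighborhoods plus finiteness of $\mathcal{X}$). For (A\ref{enu:A1}), however, the paper takes exactly the ``alternative route'' you mention in your last sentence: it writes $\V[\tilde f^{(1)}_{0,i}(\bx;\zeta)]$ out explicitly as a combination of $\{\sigma^{(1)}_{0,i}(\bx)\}^2$, second differences of $k$, and first differences of $\Cov[f^{(1)}_{0,i}(\bx), f_0(\cdot)]$, then Taylor-expands each term to third order with Lagrange remainders; the zeroth- and first-order contributions cancel identically (three copies of $\{\sigma^{(1)}_{0,i}(\bx)\}^2$ appear with signs $+1,+1,-1,-1$ plus the standalone one), leaving a factor of $\zeta$ times a sum of third-order partials of $k$ evaluated at intermediate points, which is uniformly bounded by compactness and $C^5$ regularity; the second-derivative case is dispatched with ``the same argument.'' Your route instead represents the finite-difference errors as mean-square integrals of increments of the derivative processes $f^{(1)}_i$ and $f^{(2)}_{jk}$ and bounds increment variances through the smoothness of $v^{(1)}_i$ and $v^{(2)}_{jk}$. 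This is sound (the mean-square fundamental theorem of calculus you invoke is available under $k\in C^4$), is arguably more transparent about where the regularity budget is spent, and even yields the sharper $\mathcal{O}(\zeta^2)$ rate for the first-derivative term; what the paper's direct expansion buys is that it never needs to justify the stochastic integral representation and stays entirely at the level of deterministic Taylor estimates of the kernel. Two small points to tighten in your write-up: your $A_0=2^{-1/2}\min_{\bx}\rho_{\bx}$ should be $\tfrac12\min_{\bx}\rho_{\bx}$ to cover the case $j=k$, where $\|\zeta\bm{e}_{jj}\|=2|\zeta|$ (the paper glosses over the same factor), and the Cauchy--Schwarz/Fubini step should be stated for the full quadruple integral in the Hessian case so that the $\zeta^{-4}$ prefactor is visibly absorbed.
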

\begin{proof}
For any $\bx \in \mathcal{X} \subset {\rm int} (D)$, noting that ${\rm int} (D)$ is the open set,  there exists a positive number $\delta_\bx $ such that $\mathscr{N} ( \bx; \delta_\bx )   \subset {\rm int} (D) \subset D$, where $\mathscr{N} ( \bx; \delta_\bx )$ is the $\delta_\bx$-neighborhood of $\bx$. 
Here, since $\mathcal{X}$ is the finite set, we can define $A_0 := \min_{\bx \in \mathcal{X} } \{ \delta_\bx \} $. 
Hence, for any $\bx \in \mathcal{X}$, it holds that  $\mathscr{N} (\bx; A_0 ) \subset D$. 
Thus, this implies that the assumption (A\ref{enu:A2}) holds. 

Next, for any $\bx \in \mathcal{X}$, $i \in [d]$ and $\zeta$ satisfying $|\zeta| <A_0$, the variance of $\tilde{f}^{(1)}_{0,i} (\bx ; \zeta) $
is given by 
\begin{align}
\V[\tilde{f}^{(1)}_{0,i} (\bx ; \zeta) ] &=\{ \sigma^{(1)}_{0,i} (\bx ) \} ^2 + 
\frac{  k(\bx + \zeta {\bm{e}}_i, \bx + \zeta {\bm{e}}_i) -  k(\bx + \zeta {\bm{e}}_i, \bx  )    -k(\bx , \bx + \zeta {\bm{e}}_i)     +k(\bx,\bx)        }{\zeta^2} \nonumber \\
&\q - \frac{   \Cov[f^{(1)}_{0,i}(\bx),f_0 (\bx +\zeta {\bm{e}}_i )] - \Cov[f^{(1)}_{0,i}(\bx),f_0 (\bx )]    }{\zeta} \nonumber \\
&\q - \frac{   \Cov[f_0 (\bx +\zeta {\bm{e}}_i ),f^{(1)}_{0,i}(\bx)] - \Cov[f_0 (\bx ),f^{(1)}_{0,i}(\bx)]    }{\zeta}  \label{eq:ap4-D1}.
\end{align}
Here, we put $\bx + \zeta {\bm{e}}_i = {\bx }^\ast $. Then, $k(\bx, \bx + \zeta {\bm{e}}_i )$ can be written by 
$k(\bx, \bx + \zeta {\bm{e}}_i ) = k( \bx, \bx^\ast)$, and using Taylor's expansion we have 
\begin{align}
k(\bx, \bx + \zeta {\bm{e}}_i ) &= k( \bx, \bx^\ast) \nonumber \\
&= k(\bx, \bx) + \left . \frac{\partial k(\bx, \bx^\ast)}{\partial x^\ast_i} \right |_{\bx^\ast = \bx }    \zeta +
\frac{1}{2}   \left . \frac{\partial ^2  k(\bx, \bx^\ast)}{\partial x^\ast_i \partial x^\ast_i } \right |_{\bx^\ast = \bx } \zeta ^2  
+ \frac{1}{6}   \left . \frac{\partial ^3  k(\bx, \bx^\ast)}{\partial x^\ast_i \partial x^\ast_i \partial x^\ast_i} \right |_{\bx^\ast = \bx^\star_1 } \zeta ^3,  \label{eq:ap4-D2}
\end{align}  
where $\bx^\star_1$ is a point on a line segment connecting $\bx^\ast $ and $\bx$.
Similarly, we obtain 
\begin{align}
k(\bx + \zeta {\bm{e}}_i, \bx + \zeta {\bm{e}}_i ) &= k( \bx + \zeta {\bm{e}}_i, \bx^\ast) \nonumber \\
&= k(\bx + \zeta {\bm{e}}_i, \bx) + \left . \frac{\partial k(\bx + \zeta {\bm{e}}_i, \bx^\ast)}{\partial x^\ast_i} \right |_{\bx^\ast = \bx }    \zeta +
\frac{1}{2}   \left . \frac{\partial ^2  k(\bx + \zeta {\bm{e}}_i, \bx^\ast)}{\partial x^\ast_i \partial x^\ast_i } \right |_{\bx^\ast = \bx } \zeta ^2  \nonumber \\
& \q + \frac{1}{6}   \left . \frac{\partial ^3  k(\bx + \zeta {\bm{e}}_i, \bx^\ast)}{\partial x^\ast_i \partial x^\ast_i \partial x^\ast_i} \right |_{\bx^\ast = \bx^\star_2 } \zeta ^3,  \label{eq:ap4-D3}
\end{align}  
where $\bx^\star_2$ is also a point on a line segment connecting $\bx^\ast $ and $\bx$.
Moreover, we get 
\begin{align}
\left . \frac{\partial  k(\bx + \zeta {\bm{e}}_i, \bx^\ast)}{\partial x^\ast_i   } \right |_{\bx^\ast = \bx }  
&= \left . \frac{\partial   k(\bx, \bx^\ast)}{\partial x^\ast_i  } \right |_{\bx^\ast = \bx }  
+ 
\left . \frac{\partial  ^2  k({\bm{a}}, \bx^\ast)}{\partial a_i \partial x^\ast_i  } \right |_{\bx^\ast = \bx , {\bm{a}}=\bx} \zeta 
+ \frac{1}{2}  \left . \frac{\partial  ^3  k({\bm{a}}, \bx^\ast)}{\partial a_i \partial a_i \partial x^\ast_i  } \right |_{\bx^\ast = \bx , {\bm{a}}=\bx^\star_3} \zeta ^2  \nonumber \\
&=             \left . \frac{\partial   k(\bx, \bx^\ast)}{\partial x^\ast_i  } \right |_{\bx^\ast = \bx }  
+ 
\{\sigma^{(1)}_{0,i} (\bx) \} ^2 \zeta 
+ \frac{1}{2}  \left . \frac{\partial  ^3  k({\bm{a}}, \bx^\ast)}{\partial a_i \partial a_i \partial x^\ast_i  } \right |_{\bx^\ast = \bx , {\bm{a}}=\bx^\star_3} \zeta ^2                                           \label{eq:ap4-D3-2}
\end{align}
and
\begin{align}
\left . \frac{\partial ^2  k(\bx + \zeta {\bm{e}}_i, \bx^\ast)}{\partial x^\ast_i \partial x^\ast_i } \right |_{\bx^\ast = \bx }  
&= \left . \frac{\partial ^2  k(\tilde{\bx}, \bx^\ast)}{\partial x^\ast_i \partial x^\ast_i } \right |_{\bx^\ast = \bx } \nonumber \\
&= \left . \frac{\partial ^2  k(\bx, \bx^\ast)}{\partial x^\ast_i \partial x^\ast_i } \right |_{\bx^\ast = \bx } 
+ 
\left . \frac{\partial ^3  k(\tilde{\bx}, \bx^\ast)}{\partial \tilde{x}_i \partial x^\ast_i \partial x^\ast_i } \right |_{\bx^\ast = \bx , \tilde{\bx} = \bx^\star_4} \zeta.   \label{eq:ap4-D4}
\end{align}
 Here, noting that 
$$
\Cov[f^{(1)}_{0,i}(\bx),f_0 (\bx +\zeta {\bm{e}}_i )] =  \left . \frac{\partial    k({\bm{a}}, \bx +\zeta {\bm{e}}_i)}{\partial  a_i } \right |_{{\bm{a}}= \bx },
$$
 the covariance $\Cov[f^{(1)}_{0,i}(\bx),f_0 (\bx +\zeta {\bm{e}}_i )] $ can be expressed as 
\begin{align}
\Cov[f^{(1)}_{0,i}(\bx),f_0 (\bx +\zeta {\bm{e}}_i )] &=  \left . \frac{\partial    k({\bm{a}}, \bx +\zeta {\bm{e}}_i)}{\partial  a_i } \right |_{{\bm{a}}= \bx } = \left . \frac{\partial    k({\bm{a}}, \bx ^\ast) }{\partial  a_i } \right |_{{\bm{a}}= \bx }  \nonumber \\
&=
\left . \frac{\partial    k({\bm{a}}, \bx)  }{\partial  a_i } \right |_{{\bm{a}}= \bx } + 
\left . \frac{\partial ^2   k({\bm{a}}, \bx^\ast )  }{\partial  a_i  \partial x^\ast_i} \right |_{{\bm{a}}= \bx, \bx^\ast = \bx } \zeta +
\frac{1}{2} \left . \frac{\partial ^3   k({\bm{a}}, \bx^\ast )  }{\partial  a_i  \partial x^\ast_i \partial x^\ast_i } \right |_{{\bm{a}}= \bx, \bx^\ast = \bx^\star_5 } \zeta ^2 \nonumber \\
&=\Cov[f^{(1)}_{0,i}(\bx),f_0 (\bx   )]  + \{\sigma^{(1)}_{0,i} (\bx) \}^2 \zeta + \frac{1}{2} \left . \frac{\partial ^3   k({\bm{a}}, \bx^\ast )  }{\partial  a_i  \partial x^\ast_i \partial x^\ast_i } \right |_{{\bm{a}}= \bx, \bx^\ast = \bx^\star_5 } \zeta ^2. \label{eq:ap4-D5}
\end{align}
By using the same argument, we also have 
\begin{align}
\Cov[f_0 (\bx +\zeta {\bm{e}}_i ),f^{(1)}_{0,i}(\bx)]  
=\Cov[f_0 (\bx   ),f^{(1)}_{0,i}(\bx)]  + \{\sigma^{(1)}_{0,i} (\bx) \}^2 \zeta + \frac{1}{2} \left . \frac{\partial ^3   k( \bx^\ast,{\bm{a}} )  }{\partial x^\ast_i \partial x^\ast_i  \partial  a_i  } \right |_{  \bx^\ast = \bx^\star_6,            {\bm{a}}= \bx } \zeta ^2. \label{eq:ap4-D6}
\end{align}
Therefore, using \eqref{eq:ap4-D3}, \eqref{eq:ap4-D3-2} and \eqref{eq:ap4-D4}, we obtain 
\begin{align}
k(\bx + \zeta {\bm{e}}_i, \bx + \zeta {\bm{e}}_i ) 
- k(\bx + \zeta {\bm{e}}_i, \bx) &= \left . \frac{\partial k(\bx + \zeta {\bm{e}}_i, \bx^\ast)}{\partial x^\ast_i} \right |_{\bx^\ast = \bx }    \zeta +
\frac{1}{2}   \left . \frac{\partial ^2  k(\bx + \zeta {\bm{e}}_i, \bx^\ast)}{\partial x^\ast_i \partial x^\ast_i } \right |_{\bx^\ast = \bx } \zeta ^2  \nonumber \\
&\q   + \frac{1}{6}   \left . \frac{\partial ^3  k(\bx + \zeta {\bm{e}}_i, \bx^\ast)}{\partial x^\ast_i \partial x^\ast_i \partial x^\ast_i} \right |_{\bx^\ast = \bx^\star_2 } \zeta ^3 \nonumber \\
&=   \left . \frac{\partial   k(\bx, \bx^\ast)}{\partial x^\ast_i  } \right |_{\bx^\ast = \bx }  \zeta
+ 
\{\sigma^{(1)}_{0,i} (\bx) \} ^2 \zeta ^2
+ \frac{1}{2}  \left . \frac{\partial  ^3  k({\bm{a}}, \bx^\ast)}{\partial a_i \partial a_i \partial x^\ast_i  } \right |_{\bx^\ast = \bx , {\bm{a}}=\bx^\star_3} \zeta ^3  \nonumber \\
&\q + \frac{1}{2} \left . \frac{\partial ^2  k(\bx, \bx^\ast)}{\partial x^\ast_i \partial x^\ast_i } \right |_{\bx^\ast = \bx } \zeta^2
+  \frac{1}{2} 
\left . \frac{\partial ^3  k(\tilde{\bx}, \bx^\ast)}{\partial \tilde{x}_i \partial x^\ast_i \partial x^\ast_i } \right |_{\bx^\ast = \bx , \tilde{\bx} = \bx^\star_4} \zeta^3 \nonumber \\
& \q +   \frac{1}{6}   \left . \frac{\partial ^3  k(\bx + \zeta {\bm{e}}_i, \bx^\ast)}{\partial x^\ast_i \partial x^\ast_i \partial x^\ast_i} \right |_{\bx^\ast = \bx^\star_2 } \zeta ^3. \label{eq:ap4-D8}
\end{align}
Thus, by combining \eqref{eq:ap4-D2} and \eqref{eq:ap4-D8}, we have 
\begin{align}
&\frac{  k(\bx + \zeta {\bm{e}}_i, \bx + \zeta {\bm{e}}_i) -  k(\bx + \zeta {\bm{e}}_i, \bx  )    -k(\bx , \bx + \zeta {\bm{e}}_i)     +k(\bx,\bx)        }{\zeta^2} \nonumber \\
&= 
\{\sigma^{(1)}_{0,i} (\bx) \} ^2 
+ \frac{1}{2}  \left . \frac{\partial  ^3  k({\bm{a}}, \bx^\ast)}{\partial a_i \partial a_i \partial x^\ast_i  } \right |_{\bx^\ast = \bx , {\bm{a}}=\bx^\star_3} \zeta  
+  \frac{1}{2} 
\left . \frac{\partial ^3  k(\tilde{\bx}, \bx^\ast)}{\partial \tilde{x}_i \partial x^\ast_i \partial x^\ast_i } \right |_{\bx^\ast = \bx , \tilde{\bx} = \bx^\star_4} \zeta \nonumber \\
& \q +   \frac{1}{6}   \left . \frac{\partial ^3  k(\bx + \zeta {\bm{e}}_i, \bx^\ast)}{\partial x^\ast_i \partial x^\ast_i \partial x^\ast_i} \right |_{\bx^\ast = \bx^\star_2 } \zeta -
 \frac{1}{6}   \left . \frac{\partial ^3  k(\bx, \bx^\ast)}{\partial x^\ast_i \partial x^\ast_i \partial x^\ast_i} \right |_{\bx^\ast = \bx^\star_1 } \zeta . \label{eq:ap4-D9}
\end{align}
Finally, by substituting \eqref{eq:ap4-D5}, \eqref{eq:ap4-D6} and \eqref{eq:ap4-D9} into \eqref{eq:ap4-D1}, we get 
\begin{align*}
\V[\tilde{f}^{(1)}_{0,i} (\bx ; \zeta) ]  = \zeta & \left (
\frac{1}{2}  \left . \frac{\partial  ^3  k({\bm{a}}, \bx^\ast)}{\partial a_i \partial a_i \partial x^\ast_i  } \right |_{\bx^\ast = \bx , {\bm{a}}=\bx^\star_3} +
\frac{1}{2} 
\left . \frac{\partial ^3  k(\tilde{\bx}, \bx^\ast)}{\partial \tilde{x}_i \partial x^\ast_i \partial x^\ast_i } \right |_{\bx^\ast = \bx , \tilde{\bx} = \bx^\star_4} \right . \\
&\ +\left . 
\frac{1}{6}   \left . \frac{\partial ^3  k(\bx + \zeta {\bm{e}}_i, \bx^\ast)}{\partial x^\ast_i \partial x^\ast_i \partial x^\ast_i} \right |_{\bx^\ast = \bx^\star_2 }   -
 \frac{1}{6}   \left . \frac{\partial ^3  k(\bx, \bx^\ast)}{\partial x^\ast_i \partial x^\ast_i \partial x^\ast_i} \right |_{\bx^\ast = \bx^\star_1 }   \right . \\
&\  \left . 
- \frac{1}{2} \left . \frac{\partial ^3   k({\bm{a}}, \bx^\ast )  }{\partial  a_i  \partial x^\ast_i \partial x^\ast_i } \right |_{{\bm{a}}= \bx, \bx^\ast = \bx^\star_5 }
-  \frac{1}{2} \left . \frac{\partial ^3   k( \bx^\ast,{\bm{a}} )  }{\partial x^\ast_i \partial x^\ast_i  \partial  a_i  } \right |_{  \bx^\ast = \bx^\star_6,            {\bm{a}}= \bx } 
\right ).
\end{align*}
Note that $k(\bx,\bx^\prime)$ is a five times continuously differentiable function and $D$ is a compact set. 
This implies that there exists a positive constant $C_0$ such that 
$
|\V[\tilde{f}^{(1)}_{0,i} (\bx ; \zeta) ] |  \leq |\zeta| C_0. 
$ 
Similarly, using same argument the inequality $|\V[\tilde{f}^{(2)}_{0,jk} (\bx ; \zeta) ] |  \leq |\zeta| C_0$ also holds. 
\end{proof}

By replacing $\mathcal{X}$ with $\mathcal{X}^\star$, we obtain the following corollary.
\begin{cor}\label{cor:sc2}
Let $D$ be a compact set, and let ${\rm int} (D) $ be an interior set of $D$. 
Suppose that $\mathcal{X} ^\star$ is a finite subset of ${\rm int} (D) $. 
Assume that the kernel function $k(\bx,\bx^\prime)$ is a five times continuously differentiable function. 
Then, the assumptions (C\ref{enu:C2}) and (C\ref{enu:C11}) hold.
\end{cor}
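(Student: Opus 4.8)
The plan is to observe that Corollary~\ref{cor:sc2} is exactly Lemma~\ref{lem:sc1} with the finite set $\mathcal{X}$ replaced by the finite set $\mathcal{X}^\star$, and that the proof of Lemma~\ref{lem:sc1} uses only two properties of the index set: it is finite, and it is contained in ${\rm int}(D)$. Since $\mathcal{X}^\star$ enjoys both properties by hypothesis, the entire argument transfers verbatim and produces constants $A^\star_0$ and $C^\star_0$ playing the roles of $A_0$ and $C_0$ in (A\ref{enu:A2})--(A\ref{enu:A1}).

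First I would establish (C\ref{enu:C2}). Since ${\rm int}(D)$ is open, for each $\bx \in \mathcal{X}^\star$ there is $\delta_{\bx} > 0$ with $\mathscr{N}(\bx;\delta_{\bx}) \subset {\rm int}(D) \subset D$, and finiteness of $\mathcal{X}^\star$ lets me put $A^\star_0 := \min_{\bx \in \mathcal{X}^\star} \delta_{\bx} > 0$; then $\bx + \zeta\bm{e}_i \in D$ and $\bx + \zeta\bm{e}_{jk} \in D$ whenever $|\zeta| < A^\star_0$, which is (C\ref{enu:C2}). For (C\ref{enu:C11}) I would reuse the variance identity and third-order Taylor expansions from the proof of Lemma~\ref{lem:sc1}: expanding $\V[\tilde{f}^{(1)}_{0,i}(\bx;\zeta)]$ in terms of the prior variance/covariance terms of $f_0$ and $f^{(1)}_{0,i}$, Taylor-expanding each kernel evaluation around the base point, and cancelling the zeroth-, first-, and second-order contributions, one is left with $\zeta$ times a fixed linear combination of third-order partial derivatives of $k$ evaluated at points lying on segments inside $D$. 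Because $k$ is five times continuously differentiable and $D$ is compact, these third derivatives are uniformly bounded on $D$, so $|\V[\tilde{f}^{(1)}_{0,i}(\bx;\zeta)]| \le |\zeta|\,C^\star_0$ for a suitable $C^\star_0$; the identical computation with one extra pair of finite differences yields the same bound for $\V[\tilde{f}^{(2)}_{0,jk}(\bx;\zeta)]$ after enlarging $C^\star_0$ if necessary, so taking the larger constant gives (C\ref{enu:C11}).

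Since none of these steps is delicate, I expect the proof to reduce to the one-line remark ``replace $\mathcal{X}$ by $\mathcal{X}^\star$ in the proof of Lemma~\ref{lem:sc1}''. The only point I would take care to verify is that $C^\star_0$ may be chosen independently of $\bx \in \mathcal{X}^\star$; this is automatic, because the bound arises as a supremum of continuous functions over the compact set $D$ rather than over $\mathcal{X}^\star$. Thus the main (and only mild) obstacle is simply the Taylor-remainder bookkeeping already carried out for Lemma~\ref{lem:sc1}, which requires no modification here.
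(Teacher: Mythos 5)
Your proposal is correct and matches the paper exactly: the paper states Corollary~\ref{cor:sc2} as an immediate consequence of Lemma~\ref{lem:sc1} obtained ``by replacing $\mathcal{X}$ with $\mathcal{X}^\star$,'' which is precisely your observation that the lemma's proof uses only finiteness of the index set and its containment in ${\rm int}(D)$. Your additional check that $C^\star_0$ is uniform because the third-derivative bounds come from a supremum over the compact set $D$ is a sound (if unstated in the paper) justification of the transfer.
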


Finally, the following lemma holds:
\begin{lemma}
Let $D$ be a compact set. 
Assume that the kernel function $k(\bx,\bx')$ is an eight times differentiable function. Then, the assumption (C\ref{enu:C1}) holds.
\end{lemma}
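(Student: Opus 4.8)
The plan is to treat this as the higher-order analogue of the derivative tail bound that Srinivas et al. \cite{Srinivas:2010:GPO:3104322.3104451} establish in their Appendix A.2, and to run the same Borell--TIS argument on the processes $f^{(2)}_{ij}$ and $f^{(3)}_{ijk}$. First I would note that, since $k(\bx,\bx')$ is eight times differentiable, all of its partial derivatives up to order seven are continuous, hence uniformly bounded on the compact set $D\times D$. Consequently $f^{(2)}_{ij}$ is a well-defined zero-mean Gaussian process on $D$ with covariance $\partial^4 k/(\partial x_i\partial x_j\partial x'_i\partial x'_j)$, and $f^{(3)}_{ijk}$ is a well-defined zero-mean Gaussian process on $D$ with covariance $\partial^6 k/(\partial x_i\partial x_j\partial x_k\partial x'_i\partial x'_j\partial x'_k)$; in particular $\sigma_\star^2 := \sup_{\bx\in D}\V[f^{(3)}_{ijk}(\bx)]<\infty$, and likewise for $f^{(2)}_{ij}$.

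Next I would argue that these processes are separable, have almost surely continuous sample paths, and have finite expected supremum. Because the covariance of $f^{(3)}_{ijk}$ is continuously differentiable in each of its arguments (this uses one order of smoothness of $k$ beyond what is needed merely to define $f^{(3)}_{ijk}$, which is exactly why the eighth derivative is assumed), it is Lipschitz on $D\times D$, which yields a mean-square Lipschitz bound $\E[(f^{(3)}_{ijk}(\bx)-f^{(3)}_{ijk}(\bx'))^2]\le \tilde C\|\bx-\bx'\|^2$ for $\bx,\bx'\in D$. Dudley's entropy integral then converges, giving continuity of the sample paths and $\E[\sup_{\bx\in D} f^{(3)}_{ijk}(\bx)] =: m_\star<\infty$. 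The same reasoning --- essentially the one already underlying Corollary \ref{cor:C1} --- applies to $f^{(2)}_{ij}$, using $C^7$-smoothness of $k$.

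Then I would invoke the Borell--TIS inequality: for a separable zero-mean Gaussian process $\{X_{\bx}\}_{\bx\in D}$ with $\E[\sup_{\bx\in D}X_{\bx}]=m$ and $\sup_{\bx\in D}\V[X_{\bx}]=\sigma^2$, one has $\PR(\sup_{\bx\in D}X_{\bx}>m+u)\le\exp(-u^2/(2\sigma^2))$ for all $u>0$. Applying this to $X=f^{(3)}_{ijk}$ and to $X=-f^{(3)}_{ijk}$ and taking a union bound gives $\PR(\sup_{\bx\in D}|f^{(3)}_{ijk}(\bx)|>L)\le 2\exp(-(L-m_\star)^2/(2\sigma_\star^2))$ for $L>m_\star$, while for $L\le m_\star$ the asserted bound is vacuous once $a$ is taken large enough. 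A routine rearrangement then produces constants $a,b>0$ depending only on $m_\star$ and $\sigma_\star^2$ (hence only on $k$ and $D$), which can be chosen uniform over the finitely many tuples $i,j,k\in[d]$, such that $\PR(\sup_{\bx\in\mathcal{X}}|f^{(3)}_{ijk}(\bx)|\ge L)\le\PR(\sup_{\bx\in D}|f^{(3)}_{ijk}(\bx)|\ge L)\le a e^{-(L/b)^2}$, using $\mathcal{X}\subseteq D$. The identical argument applied to $f^{(2)}_{ij}$ yields the first inequality of (C\ref{enu:C1}), so the assumption holds.

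The main obstacle is the second step: carefully justifying that the third-derivative process genuinely exists as a Gaussian process with continuous sample paths and finite $\E[\sup]$, which is precisely where eighth-order differentiability of $k$ enters (the covariance of $f^{(3)}_{ijk}$, a sixth-order mixed derivative of $k$, must be $C^1$ for Dudley's bound to apply and for the union bound over a countable dense subset to control the supremum over $\mathcal{X}$). Once that is in place, the tail bound itself is a direct application of Borell--TIS, closely mirroring the argument of \cite{Srinivas:2010:GPO:3104322.3104451}.
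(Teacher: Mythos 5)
Your argument is correct in substance but proceeds by a genuinely different route from the paper. The paper's proof is a two-line reduction: it cites Theorem 5 of Ghosal and Roy \cite{ghosal2006posterior}, which states that if a GP sample $g$ has a kernel with a fourth derivative then $\sup_{\bx\in D}|\partial g/\partial x_l|$ has a Gaussian tail, and then applies this with $g=f^{(1)}_i$ (whose kernel is $\partial^2 k$, so $k\in C^6$ suffices) to control $f^{(2)}_{ij}$, and with $g=f^{(2)}_{jk}$ (whose kernel is $\partial^4 k$, so $k\in C^8$ suffices) to control $f^{(3)}_{ijk}$. You instead reprove the needed tail bound from scratch, treating $f^{(2)}_{ij}$ and $f^{(3)}_{ijk}$ directly as zero-mean Gaussian processes, establishing finite expected suprema via mean-square continuity and Dudley's entropy integral, and then invoking Borell--TIS; this is self-contained and makes explicit exactly where each order of differentiability of $k$ is consumed, at the cost of redoing machinery the paper outsources to a citation. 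Two small points: first, a Lipschitz covariance only yields $\E[(f^{(3)}_{ijk}(\bx)-f^{(3)}_{ijk}(\bx'))^2]\le \tilde C\|\bx-\bx'\|$, not $\|\bx-\bx'\|^2$ (the quadratic rate would require bounded mixed second derivatives of the covariance, i.e.\ the full eighth derivatives of $k$); the linear rate still gives a convergent Dudley integral and a finite expected supremum, so nothing downstream breaks, but the exponent as you wrote it is not justified by the hypothesis you state. Second, your handling of the constants $a,b$ (absorbing the regime $L\le m_\star$ by enlarging $a$, and taking the worst case over the finitely many index tuples) is exactly the rearrangement needed to reach the form $ae^{-(L/b)^2}$ demanded by (C\ref{enu:C1}), and is done correctly.
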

\begin{proof}
For GP samples $g$, from Theorem 5 of  
Ghosal and Roy \cite{ghosal2006posterior},   if a kernel function $\tilde{k} (\bx , \bx')$ of $g$  has a fourth derivative, there exists positive 
constants $a$ and $b$ such that 
$$
\PR (   \sup_{\bx \in D} | \partial g / \partial x_l | >L )   \leq a e^{-b  L^2} .
$$
Here, for the GP sample $f^{(1)}_{i} $, its kernel function  is the second derivative of the kernel function $k(\bx,\bx')$. 
Therefore, if   $k(\bx,\bx')$ has a sixth derivative, then the kernel function of $f^{(1)}_{i} $ has a fourth derivative. 
Similarly, if $k(\bx,\bx')$ has an eighth derivative, then the kernel function of $f^{(2)}_{jk} $ also has  a fourth derivative. 
\end{proof}

  \end{document}